\newtheorem{theorem}{Theorem}
\newtheorem{corollary}{Corollary}
\newtheorem{lemma}{Lemma}
\newtheorem{assumption}{Assumption}
\newtheorem{remark}{Remark}
\newcommand{\R}{\mathbb{R}}
\newcommand{\Ex}[1]{\mathbb{E}\left[#1\right]}
\DeclarePairedDelimiterX{\norm}[1]{\lVert}{\rVert}{#1}
\DeclarePairedDelimiterX{\abs}[1]{\lvert}{\rvert}{#1}
\newcommand{\lbd}{$\lambda$\xspace}
\newcommand{\lb}{\lambda}
\newcommand{\lbhat}{\widehat{\lambda}}
\newcommand{\lbcnf}{\lambda^\mathrm{cnf}}
\newcommand{\lbcnfbar}{\bar{\lambda}^\mathrm{cnf}}
\newcommand{\lbcnfm}{\lbcnf_{-}}
\newcommand{\lbcnfp}{\lbcnf_{+}}
\newcommand{\lbcnfo}{\lbcnf_{*}}
\newcommand{\lbcls}{\lambda^\mathrm{cls}}
\newcommand{\lbclsbar}{\bar{\lambda}^\mathrm{cls}}
\newcommand{\lbclsp}{\lbcls_{+}}
\newcommand{\lbloc}{\lambda^\mathrm{loc}}
\newcommand{\lblocbar}{\bar{\lambda}^\mathrm{loc}}
\newcommand{\lblocp}{\lbloc_{+}}
\newcommand{\lbbul}{\lambda^{\bullet}}
\newcommand{\lbbulbar}{\bar{\lambda}^\bullet}
\newcommand{\lbbulp}{\lbbul_{+}}
\newcommand{\lbbulo}{\lbbul_{*}}
\newcommand{\Lb}{\Lambda}
\newcommand{\Lbcnf}{\Lb^\mathrm{cnf}}
\newcommand{\Lbcls}{\Lb^\mathrm{cls}}
\newcommand{\Lbloc}{\Lb^\mathrm{loc}}
\newcommand{\Lbbul}{\Lb^{\bullet}}
\newcommand{\Lcnf}{L^\mathrm{cnf}}
\newcommand{\Lcls}{L^\mathrm{cls}}
\newcommand{\Lloc}{L^\mathrm{loc}}
\newcommand{\Lbul}{L^\bullet}
\newcommand{\Rcnf}{R^\mathrm{cnf}}
\newcommand{\Rtcnf}{\Tilde{R}^\mathrm{cnf}}
\newcommand{\Rbul}{R^\bullet}
\newcommand{\Rloc}{R^\mathrm{loc}}
\newcommand{\Rcls}{R^\mathrm{cls}}
\newcommand{\alphatot}{\alpha^\mathrm{tot}}
\newcommand{\alphacnf}{\alpha^\mathrm{cnf}}
\newcommand{\alphacls}{\alpha^\mathrm{cls}}
\newcommand{\alphaloc}{\alpha^\mathrm{loc}}
\newcommand{\alphabul}{\alpha^\bullet}
\newcommand{\Gacnf}{\Gamma^\mathrm{cnf}_{\lambda^\mathrm{cnf}}}
\newcommand{\Gacls}{\Gamma^\mathrm{cls}_{\lambda^\mathrm{cnf},\lambda^\mathrm{cls}}}
\newcommand{\Galoc}{\Gamma^\mathrm{loc}_{\lambda^\mathrm{cnf},\lambda^\mathrm{loc}}}
\newcommand{\Xtest}{X_{\mathrm{test}}}
\newcommand{\Ytest}{Y_{\mathrm{test}}}
\newcommand{\Dtrain}{\mathcal{D}_{\mathrm{train}}}
\newcommand{\bleft}{b^\leftarrow}
\newcommand{\btop}{b^\uparrow}
\newcommand{\bright}{b^\rightarrow}
\newcommand{\bbottom}{b^\downarrow}
\newcommand{\bt}{\tilde{b}}
\newcommand{\btleft}{\bt^\leftarrow}
\newcommand{\bttop}{\bt^\uparrow}
\newcommand{\btright}{\bt^\rightarrow}
\newcommand{\btbottom}{\bt^\downarrow}
\newcommand{\bhat}{\hat{b}}
\newcommand{\bhleft}{\bhat^\leftarrow}
\newcommand{\bhtop}{\bhat^\uparrow}
\newcommand{\bhright}{\bhat^\rightarrow}
\newcommand{\bhbottom}{\bhat^\downarrow}
\newcolumntype{R}[1]{>\centering m{#1}}
\newcommand{\seqcrc}{\ac{SeqCRC}\xspace}
\newcommand{\uq}{\ac{UQ}\xspace}
\newcommand{\cp}{\ac{CP}\xspace}
\newcommand{\crc}{\ac{CRC}\xspace}
\newcommand{\od}{\ac{OD}\xspace}
\newcommand{\cpf}{\acf{CP}\xspace}
\newcommand{\crcf}{\acf{CRC}\xspace}
\newcommand{\odf}{\acf{OD}\xspace}
\newenvironment{customthm}[1]
  {\innercustomthm}
  {\endinnercustomthm}
\def\BibTeX{{\rm B\kern-.05em{\sc i\kern-.025em b}\kern-.08em
    T\kern-.1667em\lower.7ex\hbox{E}\kern-.125emX}}
\title{Conformal Object Detection by Sequential Risk Control}
\author[1,2]{L\'eo And\'eol}
\author[3]{Luca Mossina}
\author[1]{Adrien Mazoyer}
\author[3,1]{S\'ebastien Gerchinovitz}
\affil[1]{Univ Toulouse, Institut de Math\'ematiques de Toulouse, Toulouse, France} 
\affil[2]{SNCF}
\affil[3]{IRT Saint Exup\'ery}
\begin{document}
\maketitle

\begin{figure*}
    \includegraphics[width=0.95\textwidth]{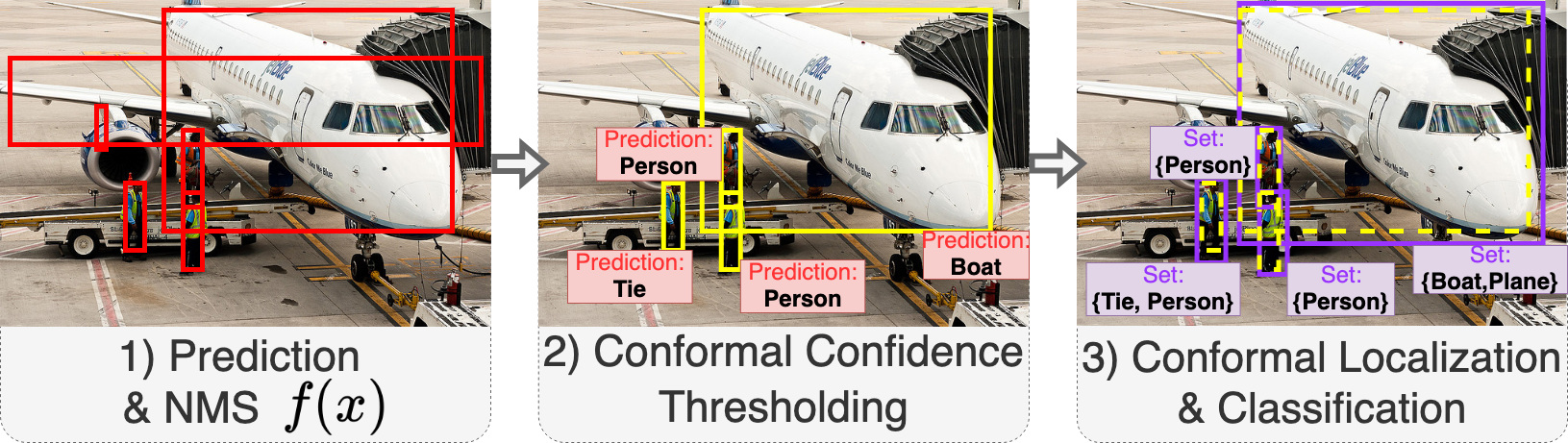}
    \caption{{\bf Conformal Object Detection Pipeline.} 
    Our method corrects the output of a given object detector $f$ with finite-sample probabilistic guarantees at a desired error rate $\alpha$. 
    The figure illustrates the key stages: 
    (1) {\color[HTML]{FF0000}Red} bounding boxes show the initial raw predictions from the object detector $f(x)$ which are then filtered by Non-Maximum Suppression to a smaller set. 
    (2) Conformal Confidence Thresholding applies a statistically grounded \textbf{confidence threshold}, resulting in selected predictions shown with {\color[HTML]{EEEE00}yellow} bounding boxes with a guaranteed risk. 
    (3) Conformal Localization \textbf{corrects} the previous dashed {\color[HTML]{EEEE00}yellow} boxes to build the final statistically valid {\color[HTML]{9933FF}purple} bounding boxes. 
    For each localized object, Conformal Classification constructs a \textbf{prediction set of labels} (e.g., "Set: \{Boat, Plane\}") from the model's predictions, guaranteed to contain the true class with high probability. 
    These steps together control a user-defined risk in localization and classification at a predetermined level $\alphatot$.
    }
    \label{fig:first_fig}
\end{figure*}

\begin{abstract}
    Recent advances in object detectors have led to their adoption for industrial uses.
    However, their deployment in safety-critical applications is hindered by the inherent lack of reliability of neural networks and the complex structure of object detection models. 
    To address these challenges, we turn to Conformal Prediction, a post-hoc predictive uncertainty quantification procedure 
    with statistical guarantees that are valid for any dataset size, without requiring prior knowledge on the model or data distribution.
    Our contribution is manifold. First, we formally define the problem of Conformal Object Detection (COD). We introduce a novel method, \emph{Sequential Conformal Risk Control (SeqCRC)}, that extends the statistical guarantees of
    Conformal Risk Control to two sequential tasks with two parameters, as required in the COD setting.
    Then, we present old and new loss functions and prediction sets suited to applying SeqCRC to different cases and certification requirements.
    Finally, we present a {\emph{conformal toolkit}} for replication and further exploration of our method.
    Using this toolkit, we perform extensive experiments that validate our approach and emphasize trade-offs and other practical consequences.
\end{abstract}
\begin{IEEEkeywords}
    Conformal Prediction, Conformal Risk Control, Object Detection, AI Safety, Uncertainty Quantification
\end{IEEEkeywords}
\let\thefootnote\relax\footnotetext{This work has been submitted to the IEEE for possible publication. Copyright may be transferred without notice, after which this version may no longer be accessible.}

\section{Introduction}
\acf{OD} has emerged as a pivotal task in computer vision, which aims at localizing and classifying objects within images or video sequences.
It has had a remarkable impact in diverse fields, ranging from autonomous vehicles~\cite{feng2022autonomous} to medical imaging~\cite{yang2021artificial}.
However, there is an increasing demand to certify predictions of \od models within safety-critical systems~\cite{
Jenn_2020_identifying, mamalet_2021_white, 
Alecu_2022_can, Poretschkin_2023_guideline}. 

In this light, while most \uq methods are Bayesian~\cite{miller2018dropout,Harakeh_2020_bayesod,sharifuzzaman2024bayes} or heuristic~\cite{schubert2021metadetect, meyer2020learning, zhang2024harnessing}, our approach emphasizes distribution-free \uq methods~\cite{Vovk_2005_algorithmic}. 
These methods neither assume prior knowledge of the data distribution nor are restricted to a particular class of predictive models.
They are statistically valid even with a finite number of examples (i.e. non-asymptotic).  
When combining object detection with these methodologies, most notably \ac{CP}, we take an \emph{a posteriori} strategy.
This post-hoc approach is theoretically sound and computationally lightweight, allowing for rapid integration into real-world systems. 
Existing literature on conformal prediction for OD~\cite{deGrancey_2022_object,Andéol_2023_confident, Andéol_2023_conformal} and on tolerance regions for OD~\cite{Li_2022_towards, Blot_2024_efficient} typically targets specific steps of the OD pipeline or individual model families. 
In contrast, our framework covers the \emph{entire detection pipeline} within a single unified approach that applies across models and tasks, while providing statistical guarantees.
It is applicable to virtually any object detectors, and readily deployable in existing systems, ensuring reliability without a comprehensive overhaul of established frameworks, and at very little cost.

After discussing related works and formally defining the setup (Section~\ref{sec:od-uq}), we make the following set of contributions:
\begin{enumerate}
    \item In Section~\ref{sec:cod}, we introduce a novel approach called \emph{Sequential Conformal Risk Control (SeqCRC)}, which enables control over two task-specific losses parameterized by two sequentially selected parameters---a key requirement for general OD tasks. This allows the user to reliably tune a confidence threshold and to compute reliable ``error margins'' both for the localization and classification tasks.
    \item In Section~\ref{sec:app-pred-sets-losses}, we review several state-of-the-art conformal loss functions and introduce new ones, discussing their relevance and the trade-offs they entail. We also examine the crucial choice of metric for matching predictions with ground truths.
    \item In Section~\ref{sec:experiment}, we empirically validate the correctness, efficiency, and broad applicability of our framework, via the first extensive experiments of Conformal \od in the literature.
    \item Finally, we provide a comprehensive, open-source codebase\footnote{
        \url{https://github.com/leoandeol/cods}}---a repository of models and conformal procedures---that facilitates the development of new guarantees, loss functions, and methodologies. 
\end{enumerate}

In the supplemental material, we provide a proof of our theoretical guarantee, more detailed pseudocodes, as well as additional remarks on modeling choices and experiments.

\subsection{Related Works}
\label{sec:related_works}

We now discuss several approaches to UQ for OD, with or without statistical guarantees. The first set of references below is closest to our paper.

\textit{Conformal prediction for OD.} Conformal prediction \cite{Vovk_2005_algorithmic, Angelopoulos_2022_gentle} and variants  \cite{Park_2020_PAC, Bates_2021_RCPS,angelopoulos2021learn} are natural post-processing methods that offer statistically valid uncertainty estimates. 
Some applications to object detection have been proposed recently, notably through conformal prediction and tolerance regions.
These two frameworks build and calibrate post-hoc prediction sets with statistical guarantees, although slightly different ones.
To the best of our knowledge, most such applications to OD, e.g., \cite{deGrancey_2022_object, Andéol_2023_conformal, Andéol_2023_confident, mukama2024copula}, only consider the localization task, i.e., predicting the coordinates of bounding boxes around objects on an image.
Two exceptions are~\cite{timans2024adaptive} and~\cite{Li_2022_towards}.
The former provides a guarantee on the classification task, while also producing conformal corrections for localization that are conditional on the class of the predicted object.
The other work \cite{Li_2022_towards} builds all-encompassing sets (including confidence, classification and localization) that are specifically tailored to Faster R-CNN models.
However, to the best of our knowledge, no work offers a holistic framework to build sets. That is, prediction sets that would be valid for virtually all families of OD models, with statistical guarantees on both the number of predicted objects and their localization and classification accuracies.

A similar approach to ours, with two parameters to tune sequentially, was recently and independently introduced in \cite{xu2024twoStageCRC} for rank retrieval. 
However, their theoretical guarantee is either asymptotic or requires two data splits for finite samples, whereas our method provides a finite-sample guarantee using a single data split only.
The framework of LTT~\cite{angelopoulos2021learn} provides another method to build prediction sets, which can depend on more than one parameter, using a  multiple testing approach. 
This framework has recently been applied to language modeling~\cite{quach2023conformal} as well as to rank retrieval~\cite{xu2024twoStageCRC}. 
It has also been applied to \od for medical data \cite{Blot_2024_efficient}, for the problem of simultaneously controlling precision and recall.

\textit{Other approaches to UQ for OD.}
Leaving statistical guarantees aside, let us mention other useful methods to compute uncertainty estimates for OD.
First, several heuristic approaches exist. MetaDetect~\cite{schubert2021metadetect} trains an auxiliary network on top of the object detector to estimate predictive uncertainty from handpicked metrics. It does so by learning to discriminate false positives from true positives (in terms of Intersection over Union, IoU), and to predict IoU.
The method of \cite{kuppers2022confidence} allows to calibrate the confidence score of predicted bounded boxes, in a position-aware manner. The resulting confidence of bounding boxes in all regions of the image is then improved in terms of calibration error. Estimating uncertainty from the gradient of the loss (with regards to the weights) has also been studied in~\cite{riedlinger2023gradient} and a comparison of the two approaches is discussed in~\cite{riedlinger2022uncertainty}. 
Notably, both approaches offer significantly more insights into a bounding box’s likelihood of containing an object than the native confidence score.

Another widely used approach is Bayesian inference, which models uncertainty through a \textit{posterior} distribution over parameters.
This distribution cannot be computed analytically and needs to be approximated. It is mainly done using methods such as Monte-Carlo Dropout~\cite{neal2012bayesian,gal2015dropout} or Variational Inference~\cite{hinton1993keeping,kingma2013auto}, but alternatives exist~\cite{wilson2020bayesian,li2016preconditioned}. 
A first application of Bayesian Neural Networks to OD~\cite{miller2018dropout} demonstrated significantly increased precision and recall, and a good estimation label uncertainty in the open set detection setting. 
Another approach named BayesOD~\cite{Harakeh_2020_bayesod}, applicable to anchor-based networks, allows to replace NMS with a Bayesian fusion: after clustering largely overlapping predictions, the clusters are merged into a single prediction with an uncertainty estimate on its coordinates. 
This allows incorporating information from less confident overlapping predictions, rather than simply deleting them. 
Other, non Bayesian improvements to the NMS have also been proposed~\cite{bodla2017soft,hosang2017learning}.

All these works are complementary to ours. Indeed, even if they lack theoretical guarantees, they provide uncertainty estimates that could be used as \emph{inputs} to our method, in order to construct more adaptive prediction sets. 
Our approach enables the integration of standard uncertainty metrics to adjust predictions in an uncertainty-aware manner, for example, by substantially enlarging prediction sets for highly uncertain cases, while leaving confident ones unchanged, instead of applying a uniform scaling factor.

\section{Setting: Object Detection with Uncertainty Quantification}
\label{sec:od-uq}
Both \acf{OD} and \acf{UQ} for \od have long been dominated by heuristic methods. As we aim to bring guaranteed uncertainty quantification to this field, we start by formally introducing the setting. Our new SeqCRC method and theoretical analysis are presented in Section~\ref{sec:cod}, while a large selection of OD-specific modeling choices are discussed in  Section~\ref{sec:app-pred-sets-losses}. 

\subsection{Problem Setup (without UQ)} \label{ssec:pb_setup}

\begin{figure}
    \centering
    \includegraphics[width=\columnwidth]{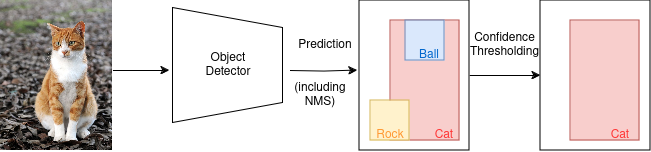}
    \caption{Abstract diagram representing the lens through which we study object detectors. As seen, we consider the NMS to have already been conducted post-prediction.
    }
    \label{fig:diag-od}
\end{figure}

Object detection on images is a complex task, as it involves both localizing and classifying an a priori unknown number of objects for each input image.  
To that end, most models (e.g. \cite{Redmon_2016_YOLO,Jocher_2020_YOLOv5_by_Ultralytics, Jocher_2023_YOLO_v8, ge2021yolox}) predict a large amount of bounding boxes with an additional confidence component.
This score is further used to filter out most predicted boxes using a threshold to keep the most relevant bounding boxes. 
While existing models compute this score in different ways, and some may use additional scores~\cite{ren2017fasterRCNN-tpami}, we only consider a single score, irrespective of its construction, to provide a holistic setting and method.

{\bf Formal Problem Setup.} 
We denote by $\mathcal{X}$ the input (image) space. Each object on an image is modeled as an element of  $\mathcal{B}\times\mathcal{C}$, where $\mathcal{B}=\mathbb{R}^{4}_{+}$ is the bounding box space and $\mathcal{C}=\left\{1,\cdots,K\right\}$ is the label space ($K$ is the number of labels).
Each location vector $b=(\bleft,\btop,\bright,\bbottom)
\in\mathcal{B}$ 
contains the coordinates of the top-left and bottom-right corners of its corresponding bounding box, respectively $(\bleft, \btop)$ and $(\bright, \bbottom)$. 
We assume that $\bleft\leq \bright$ and $\btop \leq \bbottom$.\footnote{The horizontal axis points to the right while the vertical axis heads downwards.}

Throughout this paper, we will refer to inclusion of a bounding box $b$ by another $\bt$, denoted $b \subseteq \bt$, to mean that we simultaneously have $\bleft \geq \btleft, \btop \geq \bttop, \bright \leq \btright, \bbottom \leq \btbottom$. 
We use the notation $b \not\subseteq \bt$ whenever one or more of these inequalities do not hold true. 

Commonly in \od, the output for a predicted object is not a single label, but an estimated probability distribution on $\mathcal{C}=\{1,\cdots,K\}$, such as a softmax output. 
Further recall that we work with a predictor $f$ that outputs a large, fixed number $N^\text{pred}$ of objects. Each of these objects is composed of a bounding box, with an additional classification score (softmax output) and a confidence score in $[0,1]$. 
Such predictors are therefore functions $f:\mathcal{X}\rightarrow\left(\mathcal{B}\times\Sigma_{K-1}\times[0,1]\right)^{N^\text{pred}}$, where $\Sigma_{K-1}$ denotes the $(K-1)$-simplex.\footnote{That is, $\Sigma_{K-1}=\{ c \in [0,1]^{K} : \sum_{i=1}^{K}c_i =1$\}.}

Importantly, a recurring post-processing step for object detectors is Non-Maximum Suppression (NMS), which consists in removing redundancies and thus significantly lowering the amount of predicted boxes.  
Its output lies in the reduced space $\left(\mathcal{B}\times\Sigma_{K-1}\times[0,1]\right)^{N^\mathrm{nms}}$ where the number $N^\mathrm{nms}$ depends on $x$ and is typically much smaller than $N^\text{pred}$.

Similarly, each image $x$ is assigned a true complex label $y \in (\mathcal{B} \times \mathcal{C})^{|y|}$ given by the list (or sequence) of all annotated ground-truth objects on $x$ (we write $|y|$ for the number of objects). We will sometimes use $\mathcal{Y}$ to denote the set of all such complex labels $y$.

{\bf Instances of the Problem.}
Most of the commonly used models in the literature fit in the setting presented above. 
Popular examples are YOLO~\cite{Redmon_2016_YOLO,Jocher_2020_YOLOv5_by_Ultralytics,Jocher_2023_YOLO_v8} for single-stage methods, and Faster R-CNN~\cite{ren2017fasterRCNN-tpami} for two-stage methods. 
Other examples include the more recent transformer-based models such as DETR~\cite{Carion_2020_detr} and its variants~\cite{zheng2023less,zhao2024ms}, as well as the diffusion-inspired model DiffusionDet~\cite{Chen_2023_DiffusionDet}.

\subsection{Adding UQ with three parameters} 
\label{sec:setting-UQ}
In this work, we study an approach to \uq in \od that has gained renewed interest over the past decade: Conformal Prediction~\cite{Angelopoulos_2022_gentle}. 
We consider a post-processing approach that corrects predictions, such as adding a margin on a bounding box or outputting multiple classes instead of the most likely one.
This is the conformal approach, and such corrected bounding boxes, or sets of classes, are called ``prediction sets''.
Section \ref{sec:related_works} discusses earlier works following this approach. Put briefly, our conformal prediction method for OD is a simple, generic, and statistically-grounded way to tune three parameters:
\begin{itemize}
    \item a confidence parameter $\lbcnf\in\Lbcnf$ used to define a threshold below which predicted objects are ignored;
    \item a localization parameter $\lbloc\in\Lbloc$ used to correct the size of predicted bounding boxes (by, e.g., adding a margin in pixels to these), and
    \item a classification parameter $\lbcls\in\Lbcls$ used to build the set of most likely classes for any predicted object (via, e.g., a threshold on the softmax scores).
\end{itemize}

Figure \ref{fig:first_fig} illustrates the usage of these three parameters.
After Non-Maximum Suppression (which is considered as part of the OD model itself), the thresholding step (using~$\lbcnf$) allows us to reduce the list of predicted objects to a smaller list of objects $\Gacnf$ that are more likely to correspond to true objects.
Then, from these predictions, we carry out two tasks in parallel. For localization, we construct a list $\Galoc$ of bounding boxes in $\mathcal{B}$ obtained by expanding the bounding boxes from $\Gacnf$ (with an additive or multiplicative margin).
For classification, we construct a list $\Gacls$ of subsets of $\{1,\ldots, K\}$---one subset for each predicted object, with the labels predicted as most likely.

In the rest of the paper, we show how to construct the prediction sets $\Gacnf, \Galoc, \Gacls$ and how to tune the parameters $\lbcnf,\lbloc,\lbcls$ so that the resulting predictions are statistically valid (Section~\ref{sec:cod}) and practically meaningful (Section~\ref{sec:app-pred-sets-losses}). A comprehensive list of experiments is discussed in Section~\ref{sec:experiment}.

\section{Conformal Object Detection}
\label{sec:cod}

Neural networks are often regarded as black boxes due to their complexity. There is an important research effort in deep learning theory for generalization bounds~\cite{anthony2009neural,bartlett2021deep,grohs2022mathematical}. 
However, due to the complexity of neural networks, these bounds are often loose and therefore not yet suitable for industrial requirements. In particular, in Object Detection, there is a lack of a theoretical framework.
This section introduces a post-hoc method for constructing model-agnostic, distribution-free prediction sets guaranteed to contain the ground truth with large probability.
This method pertains to an approach known as Conformal Prediction and first introduced by Vovk et al. (e.g., \cite{Vovk_2005_algorithmic} and references therein). In particular, one algorithmic variant called Split (or Inductive) Conformal Prediction~\cite{Papadopoulos_2002_inductive} is well suited to post-hoc applications on large models.
This simple approach allows us to bypass generalization error bounds of neural networks, via a simple yet statistically rigorous ``testing'' viewpoint.
For an introduction to the subject we refer the reader to, e.g., \cite{Angelopoulos_2022_gentle,Fontana_2023_conformal_review, Vovk_2005_algorithmic,Angelopoulos_2024_theoretical_CP}.

In this paper, we study an extension of Split Conformal Prediction called \emph{\crcf}~\cite{Angelopoulos_2022_CRC} and that adds flexibility for the user---a necessary feature for many applications.
Next we start with some useful background on \crc, which only applies to prediction sets indexed by a single (scalar) parameter. Then, in Section~\ref{sec:seqcrc-theory}, we design and analyze an extension of \crc suited to our Object Detection setting, with three dependent scalar parameters.

\subsection{Background: Conformal Risk Control}
\label{sec:background-crc}
In this section we recall how conformal risk control proceeds in the post-hoc setting,\footnote{By \emph{post-hoc} we refer to a procedure conducted on top of a pretrained model's predictions.} for a general supervised learning task such as classification or regression (we momentarily work with generic input and label spaces $\mathcal{X}$ and $\mathcal{Y}$). 
Assume that we are given a deterministic predictor $f$.\footnote{The predictor is in fact required to be independent from the test and the calibration data. Roughly speaking, this corresponds to a setting where the model $f$ is trained and validated on a different split or pretrained.}
To construct a prediction set, we require a calibration dataset $\mathcal{D}_{\text{cal}}=(X_i, Y_i)_{i=1}^{n}\in(\mathcal{X}\times\mathcal{Y})^{n}$ of $n$ examples and a test point $\Xtest \in \mathcal{X}$.
We then build a \emph{prediction set} $\Gamma_{\lambda}(\Xtest)$\footnote{
    Equivalently, prediction sets can be expressed in terms of a \emph{non-conformity score}. See \cite{Gupta_2022_nested} for details.
}
with the goal of controlling a loss $\ell(Y_\mathrm{test},\Gamma_\lambda (X_\mathrm{test}))$ below a user-defined threshold $\alpha$.
For example, in multiclass classification where $\mathcal{Y} = \{1,\ldots,K\}$, we can work with $\Gamma_{\lambda}(\Xtest) \subset \mathcal{Y}$ and $\ell(y,S)=\mathbbm{1}_{y\notin S}$. Multiple other examples exist, with non-binary losses or with prediction sets $\Gamma_{\lambda}(\Xtest)$ that are not necessarily subsets of~$\mathcal{Y}$ as, e.g., for the localization task (in that case, the prediction set is the predicted box after application of a margin, see Section~\ref{sec:seqcrc-od-localization}).
For notational simplicity, we denote the random losses\footnote{
    These are referred to as \textit{random losses} because they are functions of random variables, hence random variables themselves.
} by
$L_{i}(\lambda)=\ell(Y_i, \Gamma_{\lambda}(X_i))$, for $i=1,...,n$. We also denote the random loss of the test sample by $L_\mathrm{test}(\lambda)=\ell(Y_\mathrm{test}, \Gamma_{\lambda}(X_\mathrm{test}))$.

Typically, such prediction sets $\Gamma_\lambda(x)$ are parameterized by a real number \lbd that controls their sizes.
Conformal methods are data-driven procedures for computing some
$\lbhat$
whose associated prediction sets $\Gamma_{\lbhat}(\Xtest)$ are both statistically valid and as ``small'' as possible.
(As will become clear later, $\lbhat$ depends on the calibration data, and thus $\Gamma_{\lbhat}(\Xtest)$ depends on both $\Xtest$ and the entire calibration dataset.)

We now provide more formal details on the assumptions, the CRC method, and its statistical guarantee. Throughout the paper, the recurring
assumptions on the model and data
are the following.

\begin{assumption}
    \label{ass:cp-independence}
    The calibration and test data are observations of $n+1$ random variables $(X_1, Y_1),\dots,(X_{n},Y_{n}),(X_\mathrm{test}, Y_\mathrm{test})$ that are independent and identically distributed. Moreover, the predictor $f$ is deterministic.
\end{assumption}

Note that in practice, $f$ is typically the result of a training step, using another dataset $\Dtrain$. 
In that case, the appropriate 
assumption is that the $n$ calibration points together with the test sample be independent and identically distributed conditionally on $\Dtrain$.
This implies that the \crc procedure could use calibration and test data from a different distribution than seen in training, so that one may reuse pre-trained models on a new specific task~\cite{deGrancey_2022_object,Andéol_2023_confident}.

The CRC method recalled next only handles a \emph{scalar} parameter $\lb \in \Lb$, where $\Lb \subset \R$. We also suppose that $\Lambda$ is compact (for simplicity), as well as the following.

\begin{assumption}
\label{ass:loss}
The random loss functions 
$L_i: \Lambda \rightarrow \R$ defined by $L_{i}(\lambda)=\ell(Y_i, \Gamma_{\lambda}(X_i))$ 
are non-increasing and right-continuous in $\lambda$, take their values in $\left[0, B\right]$
and satisfy $L_i(\bar{\lambda})=0$ almost surely, where $\bar{\lambda}=\max\Lambda$.
\end{assumption}

The monotonicity assumption on the $L_i$ is immediately satisfied when the prediction sets are nested (i.e. $\Gamma_\lambda(x) \subset \Gamma_{\lambda^\prime}(x)$ for all $\lambda \leq \lambda^\prime\in\Lambda$ and $x\in\mathcal{X}$) and when the loss function $\ell(y,S)$ is non-increasing in $S$ (i.e., $\ell(y,S)\geq\ell(y,S^\prime)$ for all $S\subset S^\prime$). 
The earliest methods in Conformal Prediction correspond to the special case $\ell(y,S)=\mathbbm{1}_{y\notin S}$. Another example is the Recall of False Negative Rate loss. 
In Section~\ref{sec:app-pred-sets-losses} we will use both examples for classification and localization tasks respectively, as well as other ones.

The objective is then to correctly tune the parameter $\lambda$ in order to control the behavior of $L_\mathrm{test}(\lambda)=\ell(Y_\mathrm{test},\Gamma_\lambda (X_\mathrm{test}))$ with respect to a user-defined threshold $\alpha$. 
The CRC method of~\cite{Angelopoulos_2022_CRC} computes
\begin{equation}
    \lbhat = \inf \left\{ \lambda \in \Lambda : \frac{1}{n+1}\sum\limits_{i=1}^{n}L_i(\lambda) + \frac{B}{n+1}\leq \alpha \right\}.
    \label{eq:crc-inf-lambda}
\end{equation}

Put differently, CRC finds the smallest $\lambda$ (i.e., the one leading to the smallest prediction set) such that the average loss $\frac{1}{n+1}\sum_{i=1}^{n}L_i(\lambda) + \frac{L_{\textrm{test}}(\lambda)}{n+1}$ over the $n+1$ calibration/test points is below the desired error rate~$\alpha$, when anticipating a worst-case value of $B$ for the unseen test loss~$L_{\textrm{test}}(\lambda)$.

\begin{theorem}[Theorems 1 and 2 in \cite{Angelopoulos_2022_CRC}]\label{th:CRC-thm}
    Under Assumptions~\ref{ass:cp-independence} and~\ref{ass:loss}, for any $\alpha\geq\frac{B}{n+1}$, the parameter
    $\hat{\lambda}$ defined in \eqref{eq:crc-inf-lambda} satisfies
    \vspace{-0.1em}
    \begin{equation} \label{eq:crc}
        \alpha - \frac{2B}{n+1}\leq\Ex{L_{\mathrm{test}}(\hat{\lambda})}\leq\alpha,
    \end{equation}
    where the lower bound holds under an additional continuity-type assumption on the random losses.
\end{theorem}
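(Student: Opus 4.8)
The plan is to exploit the exchangeability of the $n+1$ calibration-and-test pairs, reducing the expected test loss to an empirical average by symmetrization. Relabel the test point as index $n+1$ and introduce the \emph{test-inclusive} empirical risk
\begin{equation*}
R(\lambda) = \frac{1}{n+1}\sum_{j=1}^{n+1} L_j(\lambda),
\end{equation*}
which inherits the monotonicity and right-continuity of the $L_j$ and is a \emph{permutation-invariant} function of the $n+1$ pairs. The workhorse observation is that for any threshold $\lambda_0$ that is itself a symmetric function of these pairs, conditioning on the unordered multiset makes the test index uniform over $\{1,\dots,n+1\}$, so that $\Ex{L_{\mathrm{test}}(\lambda_0)} = \Ex{R(\lambda_0)}$. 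Since $\lbhat$ from \eqref{eq:crc-inf-lambda} depends on the calibration points only, it is \emph{not} symmetric; the strategy is therefore to sandwich it between two symmetric thresholds and transfer the resulting bounds through the monotonicity of $L_{\mathrm{test}}$.

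For the upper bound I would set $\tilde\lambda = \inf\{\lambda \in \Lb : R(\lambda) \le \alpha\}$, which is well defined because $L_j(\bar\lambda)=0$ forces $R(\bar\lambda)=0 \le \alpha$. Bounding the unseen test contribution by its worst case $L_{n+1}(\lambda)\le B$ shows that every $\lambda$ feasible for \eqref{eq:crc-inf-lambda} also satisfies $R(\lambda)\le\alpha$; hence $\tilde\lambda \le \lbhat$, and monotonicity gives $L_{\mathrm{test}}(\lbhat) \le L_{\mathrm{test}}(\tilde\lambda)$. Right-continuity yields $R(\tilde\lambda)\le\alpha$, and symmetrization gives $\Ex{L_{\mathrm{test}}(\tilde\lambda)} = \Ex{R(\tilde\lambda)} \le \alpha$, so $\Ex{L_{\mathrm{test}}(\lbhat)} \le \alpha$. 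The hypothesis $\alpha \ge B/(n+1)$ enters only to guarantee that the infimum in \eqref{eq:crc-inf-lambda} is taken over a nonempty set, since at $\bar\lambda$ the left-hand side there equals $B/(n+1)$.

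For the lower bound I would introduce a second symmetric threshold $\lambda^{\star\star} = \inf\{\lambda \in \Lb : R(\lambda) \le \alpha - B/(n+1)\}$. Because the test term $L_{n+1}(\lambda)\ge 0$ only tightens its defining constraint relative to \eqref{eq:crc-inf-lambda}, one obtains $\lbhat \le \lambda^{\star\star}$, hence $L_{\mathrm{test}}(\lbhat) \ge L_{\mathrm{test}}(\lambda^{\star\star})$. Symmetrization again gives $\Ex{L_{\mathrm{test}}(\lambda^{\star\star})} = \Ex{R(\lambda^{\star\star})}$, so it remains to lower-bound $R(\lambda^{\star\star})$. For every $\lambda < \lambda^{\star\star}$ one has $R(\lambda) > \alpha - B/(n+1)$, so the left limit satisfies $R(\lambda^{\star\star-}) \ge \alpha - B/(n+1)$; the value $R(\lambda^{\star\star})$ can fall below this only through the downward jump at $\lambda^{\star\star}$, and this is exactly where the continuity-type assumption is used: it ensures the jump of the average $R$ is at most $B/(n+1)$, i.e.\ a single coordinate loss moving by at most $B$. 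Therefore $R(\lambda^{\star\star}) \ge \alpha - 2B/(n+1)$, and the chain above yields $\Ex{L_{\mathrm{test}}(\lbhat)} \ge \alpha - 2B/(n+1)$.

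The main obstacle is precisely this asymmetry of $\lbhat$: it cannot be symmetrized directly, so the whole argument hinges on bracketing it by symmetric thresholds on both sides and then invoking the monotonicity of $L_{\mathrm{test}}$. The delicate part is the lower bound, where controlling the overshoot $R(\lambda^{\star\star-}) - R(\lambda^{\star\star})$ by $B/(n+1)$ is what forces the extra additive slack $B/(n+1)$ and motivates the continuity-type hypothesis; with genuinely continuous losses the jump vanishes and the bound tightens. I would also verify the degenerate boundary case in which a symmetric threshold equals $\min\Lb$, checking that the relevant inequalities on $R$ still hold there.
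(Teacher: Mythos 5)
Your proof is correct and takes essentially the same route as the paper: this theorem is imported from Angelopoulos et al.'s CRC paper, and the argument that the present paper itself recalls (Step 2 of the appendix proof of Theorem~\ref{th:odcrc}, together with the symmetry lemma) is precisely your scheme of bracketing the non-symmetric $\widehat{\lambda}$ between symmetric oracle thresholds, converting the expected test loss into an expected $(n+1)$-sample empirical risk by exchangeability, and concluding via monotonicity and right-continuity. The only caveat is the boundary case you flag yourself---if the lower-bound bracketing threshold equals $\min\Lambda$ (e.g., identically vanishing losses), the bound can genuinely fail, and ruling this out is part of what the unstated ``continuity-type assumption'' of the cited work is responsible for.
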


Importantly, the above expectation is over both $\mathcal{D}_{\text{cal}}$ and $(\Xtest, \Ytest)$: roughly speaking, it is a mean computed over all possible joint observations of $(X_1,Y_1),\dots,(X_n,Y_n), (\Xtest,\Ytest)$. There is no guarantee that \eqref{eq:crc} holds for all possible test instances, or for all calibration data sets. 
In particular, when $\ell(y,S)=\mathbbm{1}_{y\notin S}$, the event $\{\Ytest\notin\Gamma_{\lambda}(\Xtest)\}$ might occur because of an atypical test image $\Xtest$, label $\Ytest$, or of atypical calibration points $(X_i, Y_i)$ in $\mathcal{D}_\text{cal}$. 
Ideally, one may hope for a guarantee that holds conditionally on each test instance, but such test-conditionality is impossible to obtain without further assumptions, as shown in~\cite{Vovk_2012_Conditional}.

\emph{Other Conformal Procedures.}
Multiple extensions exist to solve variations of this problem.
For example, with the CLCP method of~\cite{Wang_2023_clcp}, one can replace the guarantee on the mean (expectation) with a guarantee on the median or quantiles of the loss.
We may also want to obtain guarantees that hold not only on average over all classes, but on average per class
\cite{Vovk_2012_Conditional, Sadinle_2019_lac_Least_Ambiguous}. 
This can be useful to reduce over-correction or under-correction of some target classes.
Furthermore, Conformal Prediction has been studied in other settings, such as Full \cp \cite{Vovk_2005_algorithmic}, which trains the predictor and calibrates prediction sets on the same data, or the jackknife (leave-one-out) and cross-validation variants \cite{Vovk_2013_cross_conformal,barber2021predictive, Kim_2020_jackknife_bootstrap}.

\subsection{Sequential Conformal Risk Control with 1+2 parameters}
\label{sec:seqcrc-theory}

In the classical setting recalled in Section~\ref{sec:background-crc}, one can control a loss on a prediction set indexed by a single parameter~$\lambda$. 
In all the sequel, we work within the object detection setting of Section \ref{sec:od-uq}. The calibration and test data correspond to random pairs $(X_i,Y_i)$, $i \in \{1,\ldots,n\} \cup \{\text{test}\}$, where $X_i \in \mathcal{X}$ is an image and $Y_i \in \mathcal{Y}$ denotes its true list of bounding boxes and classes. More importantly, we now consider three parameters: $\lbcnf$ for confidence thresholding, $\lbloc$ for the localization task, and $\lbcls$ for the classification task.
The localization and classification tasks are dependent on the confidence task, and existing theory for conformal prediction cannot handle this situation. 
Therefore we introduce \emph{\seqcrc}, 
a new method that extends the CRC algorithm to handle 1+2 sequentially dependent parameters, which is well suited for \od tasks.\\[-0.2cm]

\subsubsection{Notation and assumptions}
\label{sec:SeqCRCnotationassumptions}
We denote by $L_i^{j}$, with $j\in\{\mathrm{cnf}, \mathrm{loc}, \mathrm{cls}\}$, the loss functions for the confidence ($\Lcnf$), localization ($\Lloc$) and classification ($\Lcls$) tasks. 
For each task $j\in\{\mathrm{cnf}, \mathrm{loc}, \mathrm{cls}\}$ we also write $\lb^j\in\Lb^j$ for the associated parameter and $\alpha^j$ for the desired error rate.
We assume that the $\Lb^j \subset \R$ are compact, and set $\bar{\lambda}^j = \max \Lambda^j$. (Our examples in Section~\ref{sec:app-pred-sets-losses}  are of the form $\Lb^j = \bigl[0,\bar{\lambda}^j\bigr]$.)

Our procedure proceeds in two steps:
\begin{itemize}
    \item First is the \textit{confidence} step, with the loss $\Lcnf_i(\lbcnf)$ depending only on a single parameter (as in Section~\ref{sec:background-crc}). 
    \item Then, two independent steps are carried out in parallel: \textit{localization} and \textit{classification}. They depend on the first one,  and their associated losses are thus denoted by $\Lbul_i(\lbcnf, \lb^\bullet)$, where $\bullet\in\{\mathrm{loc},\mathrm{cls}\}$.
\end{itemize}

Though not written explicitly in this section, the random loss functions $\Lcnf_i: \Lbcnf \rightarrow \R$ and $\Lbul_i: \Lbcnf\times\Lambda^\bullet \rightarrow \R$ are obtained through task-specific losses and predictions sets defined in Section~\ref{sec:app-pred-sets-losses}, roughly of the form:\footnote{
    Though not written explicitly below, the loss $\Lbul_i(\lbcnf,\lbbul)$ also depends on a matching between ground-truth and predicted boxes. 
    See Section~\ref{sec:level-of-guarantee-and-matching} for details.
}
\[
\Lcnf_i(\lbcnf)=\ell^{\mathrm{cnf}}\bigl(Y_i, \Gamma^{\mathrm{cnf}}_{\lbcnf}(X_i)\bigr)
\]
and
\[
\Lbul_i(\lbcnf,\lbbul)=\ell^{\bullet}\bigl(Y_i, \Gamma^{\bullet}_{\lbcnf,\lbbul}(X_i)) \;.
\]

For the analysis, we will assume the following two-step counterpart of Assumption~\ref{ass:loss}. Recall that the $\Lb^j \subset \R$ are compact, and that $\bar{\lambda}^j = \max \Lambda^j$. 
\begin{assumption}
\label{ass:loss-seq}
The random loss functions $\Lcnf_i: \Lbcnf \rightarrow \R$ and $\Lbul_i: \Lbcnf\times\Lambda^\bullet \rightarrow \R$ are non-increasing and right-continuous in all of their parameters,\footnote{i.e., non-increasing and right-continuous in each of the parameters, all the other parameters being fixed. (This should hold true for every sample path.)} and take their values in $\left[0, B^\mathrm{cnf}\right]$ and $\left[0, B^\bullet\right]$ respectively.
Also assume that for 
any $i \in \{1,\cdots,n\}$, we have
$\Lbul_i(\lbcnfbar, \lbbulbar)=0$ almost surely.    
\end{assumption}

In general, we do not assume that $\Lcnf_i(\lbcnfbar)=0$ almost surely (this is not necessarily the case in Section~\ref{sec:app-pred-sets-losses}), though this additional property implies additional guarantees, as shown in Theorem~\ref{th:odcrc} below. \\

\subsubsection{The SeqCRC method}
\label{sec:seqcrc}
We now introduce our method; see Steps~1 and~2 below. We use the following notation for the empirical risks related to the confidence, localization, and classification tasks:
\begin{align*}
          \Rcnf_n(\lbcnf) & = \frac{1}{n}\sum_{i=1}^n \Lcnf_i(\lbcnf)\,,\\
    \Rbul_n(\lbcnf,\lbbul) & = \frac{1}{n}\sum_{i=1}^n \Lbul_i(\lbcnf, \lbbul)   \,.
\end{align*}

\noindent
We also use a (slightly) conservative variant of $\Rcnf_n(\lbcnf)$ that anticipates the other two tasks:
\begin{align}
\label{eq:risk_cnf_n}
    \Rtcnf_n(\lbcnf) & = \max \Big\{\Rcnf_n(\lbcnf), \Rloc_n(\lbcnf,\lblocbar),\\
    & \qquad\qquad\Rcls_n(\lbcnf,\lbclsbar)  \big\}\,, \nonumber
\end{align}
where $\lblocbar=\max \Lambda^\text{loc}$ and $\lbclsbar=\max \Lambda^\text{cls}$. Note that $\Rtcnf_n(\lbcnf)  \in [0,\Tilde{B}^\mathrm{cnf}]$ with $\Tilde{B}^\mathrm{cnf}=\max \{ B^\mathrm{cnf}, B^\mathrm{loc}, B^\mathrm{cls} \}$.

\ \\
\textbf{Step 1:} For the confidence step, we compute: 
\begin{align}
\label{eq:seqcrc-lbcnf-plus}
    \lbcnfp & = \inf \left\{ \lbcnf \!\in\! \Lbcnf \!:\!
     \frac{n \Rtcnf_n(\lbcnf)}{n+1} \!+\! \frac{\Tilde{B}^\mathrm{cnf}}{n+1} 
    \!\leq\! \alphacnf\! \right\}, \\            
\label{eq:seqcrc-lbcnf-min}
    \lbcnfm & = \inf \left\{ \lbcnf \!\in\! \Lbcnf \!:\! \frac{n\Rtcnf_n(\lbcnf)}{n+1} \!+\! \frac{0}{n+1} \!\leq\! \alphacnf\! \right\},                 
\end{align}
where we use the convention $\inf \varnothing = \max \Lambda^\text{cnf} = \lbcnfbar$.

Importantly, contrary to the CRC algorithm, we build two estimators instead of one: $\lbcnfp$ is used on test images for confidence thresholding, while $\lbcnfm$ is used in the second step~\eqref{eq:seqcrc-lbj} below. 
The reason we work with $\Rtcnf(\lbcnf)$ instead of $\Rcnf(\lbcnf)$ is to ensure that this second step is feasible, as will be clear from the analysis.\\

\noindent
\textbf{Step 2:}  For the second step (localization and classification), our SeqCRC method computes:
\begin{align}
\label{eq:seqcrc-lbj}
    \lbbulp & = \inf \left\{ \lbbul \in \Lbbul : \frac{n\Rbul_n(\lbcnfm, \lbbul)}{n+1} + \frac{B^\bullet}{n+1} \leq \alphabul\right\}.
\end{align}

\subsubsection{Theoretical analysis of SeqCRC}
We now state our main theoretical result. It implies a probabilistic guarantee for the localization and classification tasks, and an additional guarantee for the confidence task under an additional assumption on the confidence loss function. Recall the placeholder $\bullet\in\{\mathrm{loc},\mathrm{cls}\}$ and that $\bar{\lambda}^\mathrm{cnf} = \max \Lambda^\mathrm{cnf}$.
\begin{theorem}
    \label{th:odcrc}
    Suppose that Assumptions \ref{ass:cp-independence} and \ref{ass:loss-seq}
    hold true.\footnote{\label{ft:iidlosses}To be more rigorous, in addition to Assumption~\ref{ass:cp-independence}, we should assume that $(L^\text{cnf}_i, L^\text{loc}_i, L^\text{cls}_i)$ is fully determined (in a measurable way) by $(X_i,Y_i)$, as in the examples of Section~\ref{sec:app-pred-sets-losses}. 
    This implies the following key property: that the random $3$-tuples $(L^\text{cnf}_1, L^\text{loc}_1, L^\text{cls}_1), \ldots, (L^\text{cnf}_{n+1}, L^\text{loc}_{n+1}, L^\text{cls}_{n+1})$ are independent and identically distributed.} 
    Let $\alphacnf \geq 0$ and $\alphabul \geq \alphacnf + \frac{B^\bullet}{n+1}$. 
    Then, $\lb^\mathrm{cnf}_{\pm}$ and 
        $\lbbulp$ introduced in \eqref{eq:seqcrc-lbcnf-plus}, \eqref{eq:seqcrc-lbcnf-min} and \eqref{eq:seqcrc-lbj} are well defined, and
    \begin{equation}
        \label{eq:thm}
        \Ex{L_\mathrm{test}^\bullet(\lbcnfp, \lbbulp)} \leq \alphabul \;.
    \end{equation}
    Under the additional assumption that $\Lcnf_i(\lbcnfbar)\leq\alpha^\mathrm{cnf}$ almost surely for all $i \in \{1,\ldots,n\} \cup \{\mathrm{test}\}$, we also have
    \begin{equation}
        \label{eq:thm_cnf}
        \Ex{L_\mathrm{test}^\mathrm{cnf}(\lbcnfp)} \leq \alphacnf \;.
    \end{equation}
\end{theorem}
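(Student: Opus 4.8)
The plan is to reduce each guarantee to the one-parameter CRC argument behind Theorem~\ref{th:CRC-thm}, whose engine is exchangeability of the $n+1$ losses. The obstacle is that the second-step threshold $\lbbulp$ is built from $\lbcnfm$, which is computed from the calibration sample only and thus treats the test point asymmetrically; a direct exchangeability argument for $\Lbul_\mathrm{test}(\lbcnfp,\lbbulp)$ is therefore unavailable. I would circumvent this by inserting a fully symmetric ``oracle'' confidence threshold $\lbcnfo$ that uses all $n+1$ points. Writing $\Rtcnf_{n+1}(\lbcnf)$ for the test-inclusive analogue of $\Rtcnf_n$ (the same max of three averages, now over the $n+1$ losses), set $\lbcnfo=\inf\{\lbcnf:\Rtcnf_{n+1}(\lbcnf)\leq\alphacnf\}$. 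Bounding each test component by $\Tilde{B}^\mathrm{cnf}$ gives $\Rtcnf_{n+1}(\lbcnf)\leq \frac{n}{n+1}\Rtcnf_n(\lbcnf)+\frac{\Tilde{B}^\mathrm{cnf}}{n+1}$, while dropping the nonnegative test terms gives $\Rtcnf_{n+1}(\lbcnf)\geq \frac{n}{n+1}\Rtcnf_n(\lbcnf)$; comparing feasible sets yields the sandwich $\lbcnfm\leq\lbcnfo\leq\lbcnfp$.

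Before the bounds I would settle well-definedness. With the convention $\inf\varnothing=\lbcnfbar$ the three infima always return points of the compact sets, so the real content is that the feasible set defining $\lbbulp$ is nonempty. Right-continuity of $\Rtcnf_n$ at $\lbcnfm$ gives $\Rtcnf_n(\lbcnfm)\leq\frac{n+1}{n}\alphacnf$, and since $\Rtcnf_n$ dominates $\Rbul_n(\cdot,\lbbulbar)$ by construction, evaluating the step-2 constraint at $\lbbul=\lbbulbar$ gives $\frac{n}{n+1}\Rbul_n(\lbcnfm,\lbbulbar)+\frac{B^\bullet}{n+1}\leq \alphacnf+\frac{B^\bullet}{n+1}\leq\alphabul$. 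This is exactly where the conservative risk $\Rtcnf$ and the hypothesis $\alphabul\geq\alphacnf+\frac{B^\bullet}{n+1}$ enter, and it shows $\lbbulbar$ is feasible.

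For~\eqref{eq:thm} I would mirror the CRC proof at the second step, anchored at $\lbcnfo$. Define $\lbbulo=\inf\{\lbbul:\frac{1}{n+1}\sum_{i=1}^{n+1}\Lbul_i(\lbcnfo,\lbbul)\leq\alphabul\}$, so that the pair $(\lbcnfo,\lbbulo)$ is a symmetric function of the $n+1$ i.i.d. loss-tuples. The crucial step is $\lbbulp\geq\lbbulo$: since $\lbcnfo\geq\lbcnfm$ and $\Lbul$ is non-increasing in $\lbcnf$, the calibration terms obey $\Lbul_i(\lbcnfo,\lbbul)\leq\Lbul_i(\lbcnfm,\lbbul)$, and bounding the single test term by $B^\bullet$ turns the $\lbbulp$-constraint into the $\lbbulo$-constraint. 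This is precisely why Step~2 uses $\lbcnfm$ rather than $\lbcnfp$: the inequality needs $\lbcnfo\geq\lbcnfm$, which the sandwich supplies, whereas $\lbcnfo\leq\lbcnfp$ would fail here. Exchangeability then gives $\Ex{\Lbul_\mathrm{test}(\lbcnfo,\lbbulo)}=\Ex{\frac{1}{n+1}\sum_{i=1}^{n+1}\Lbul_i(\lbcnfo,\lbbulo)}\leq\alphabul$, the empirical bound holding at $\lbbulo$ by right-continuity after a feasibility check as in the previous paragraph (with the edge case of an empty $\lbcnfo$-feasible set, i.e. $\lbcnfo=\lbcnfbar$, handled by $\Lbul_i(\lbcnfbar,\lbbulbar)=0$). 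Finally $\lbcnfp\geq\lbcnfo$, $\lbbulp\geq\lbbulo$, and monotonicity of $\Lbul_\mathrm{test}$ in both arguments give $\Lbul_\mathrm{test}(\lbcnfp,\lbbulp)\leq \Lbul_\mathrm{test}(\lbcnfo,\lbbulo)$, and taking expectations proves~\eqref{eq:thm}.

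For~\eqref{eq:thm_cnf} I would run the same oracle argument at the first step: $\lbcnfp\geq\lbcnfo$ with monotonicity gives $\Lcnf_\mathrm{test}(\lbcnfp)\leq\Lcnf_\mathrm{test}(\lbcnfo)$, while symmetry of $\lbcnfo$ together with $\frac{1}{n+1}\sum_{i=1}^{n+1}\Lcnf_i(\lbcnfo)\leq\Rtcnf_{n+1}(\lbcnfo)\leq\alphacnf$ gives $\Ex{\Lcnf_\mathrm{test}(\lbcnfo)}\leq\alphacnf$ by exchangeability. The extra hypothesis $\Lcnf_i(\lbcnfbar)\leq\alphacnf$ is exactly what makes $\lbcnfbar$ feasible, so the $\lbcnfo$-feasible set is nonempty and right-continuity applies; this closes the one gap the general assumptions leave open, namely that the confidence loss need not vanish at $\lbcnfbar$. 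I expect the construction and sandwiching of $\lbcnfo$, and the observation that $\lbbulp\geq\lbbulo$ hinges on using $\lbcnfm$, to be the main obstacle; the rest is the standard CRC combination of monotonicity, right-continuity, and exchangeability.
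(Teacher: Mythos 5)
Your proposal is correct and follows essentially the same route as the paper's proof: the test-inclusive oracle $\lbcnfo$, the sandwich $\lbcnfm \leq \lbcnfo \leq \lbcnfp$ obtained by comparing the three nested feasible sets, the second-step oracle $\lbbulo$ anchored at $\lbcnfo$ with the key inequality $\lbbulo \leq \lbbulp$ (which, as you note, is exactly why Step 2 must use $\lbcnfm$), and the final symmetry/exchangeability argument are precisely the paper's Steps 2--3 and its symmetry lemma. The one loose end is in your well-definedness paragraph: invoking right-continuity of $\Rtcnf_n$ at $\lbcnfm$ to get $\Rtcnf_n(\lbcnfm)\leq\frac{n+1}{n}\alphacnf$ presupposes that the feasible set defining $\lbcnfm$ is nonempty; since $\Lcnf_i(\lbcnfbar)$ is not assumed to vanish (or even be below $\alphacnf$) in this part of the theorem, that set can be empty, in which case $\lbcnfm=\lbcnfbar$ by convention and the claimed inequality may fail. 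The paper handles this by a case split: when $\lbcnfm=\lbcnfbar$, feasibility of $\lbbulbar$ follows directly from the vanishing-loss assumption $\Lbul_i(\lbcnfbar,\lbbulbar)=0$ together with $\alphabul\geq\frac{B^\bullet}{n+1}$ --- the same trick you already apply to the empty $\lbcnfo$-feasible-set edge case, so the fix is immediate.
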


\begin{proof}
    See Appendix~\ref{app:proof}.
\end{proof}
The proof follows from similar arguments to the (single-step) CRC algorithm of \cite{Angelopoulos_2022_CRC}, with some additional subtleties due to the sequential nature of the problem. 
In particular, the modified empirical risk $\Rtcnf_n$ \eqref{eq:risk_cnf_n} in the first step is useful to ensure that the set in \eqref{eq:seqcrc-lbj} in nonempty, and the ``optimistic'' estimator $\lbcnfm$  proves key for symmetry purposes.

This guarantee is similar to that in Theorem 3 by \cite{xu2024twoStageCRC} (independent work, with another targeted application), but our SeqCRC method achieves a finite-sample guarantee with a \textbf{single data split}. In contrast, their approach relies on standard \crc estimators, thus requiring an additional data split to get a finite-sample guarantee for the second stage. Our method avoids this by introducing a corrected estimator, $\lbbulp$, which is based on the optimistic estimator $\lbcnfm$ from the first stage.

The bound in \eqref{eq:thm} implies two individual guarantees, for the localization and classification tasks respectively. 
In practice, we usually want a joint guarantee on the two tasks: an example is given by the following immediate corollary (we use $\max\{a,b\} \leq a+b$ for $a,b \geq 0$).

\begin{corollary}
\label{corollary_sum}
    Under the same assumptions as in Theorem \ref{th:odcrc}, 
    \begin{equation}
        \Ex{\max\big(\Lloc_\mathrm{test}(\lbcnfp, \lbloc_{+}), \Lcls_\mathrm{test}(\lbcnfp, \lbcls_{+})\big)}\leq \alphatot ,
    \end{equation}
    where $\alphatot = \alphaloc + \alphacls$.
\end{corollary}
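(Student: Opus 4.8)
The plan is to derive the joint bound directly from Theorem~\ref{th:odcrc} by instantiating its placeholder $\bullet$ twice and then combining the two resulting guarantees with an elementary pointwise inequality. Since the hypotheses are exactly those of Theorem~\ref{th:odcrc}, the precondition $\alphabul \geq \alphacnf + \frac{B^\bullet}{n+1}$ holds for both $\bullet = \mathrm{loc}$ and $\bullet = \mathrm{cls}$, so the theorem is applicable to each task separately.

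First I would apply \eqref{eq:thm} with $\bullet = \mathrm{loc}$ and then with $\bullet = \mathrm{cls}$, which yields the two individual bounds
\[
\Ex{\Lloc_\mathrm{test}(\lbcnfp, \lbloc_{+})} \leq \alphaloc
\quad\text{and}\quad
\Ex{\Lcls_\mathrm{test}(\lbcnfp, \lbcls_{+})} \leq \alphacls .
\]
It is worth noting that the confidence estimator $\lbcnfp$ produced in Step~1 does not depend on $\bullet$, so it is shared by both applications and no inconsistency arises from using a single $\lbcnfp$ in the joint statement.

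Next, since by Assumption~\ref{ass:loss-seq} the random losses $\Lloc_\mathrm{test}$ and $\Lcls_\mathrm{test}$ take values in $[0,B^\mathrm{loc}]$ and $[0,B^\mathrm{cls}]$ respectively, they are in particular nonnegative. Hence the pointwise inequality $\max\{a,b\} \leq a + b$ (valid for $a,b \geq 0$) holds on every sample path, giving
\[
\max\big(\Lloc_\mathrm{test}(\lbcnfp, \lbloc_{+}), \Lcls_\mathrm{test}(\lbcnfp, \lbcls_{+})\big)
\leq \Lloc_\mathrm{test}(\lbcnfp, \lbloc_{+}) + \Lcls_\mathrm{test}(\lbcnfp, \lbcls_{+}).
\]
Taking expectations on both sides and using linearity of expectation together with the two bounds above then gives $\Ex{\max(\Lloc_\mathrm{test}, \Lcls_\mathrm{test})} \leq \alphaloc + \alphacls = \alphatot$, which is the claim.

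There is essentially no serious obstacle here: the result is an immediate corollary of Theorem~\ref{th:odcrc}, and the only points requiring (minor) care are to confirm that a single confidence estimator $\lbcnfp$ legitimately serves both tasks and that the precondition on $\alphabul$ is inherited for each $\bullet$. The looseness of the bound---the $\max$ being controlled by the \emph{sum} of the two error budgets---is the price paid for a joint guarantee; it could in principle be tightened under further structural assumptions, but the stated version follows in the two short steps above.
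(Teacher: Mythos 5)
Your proposal is correct and follows exactly the paper's own argument: the paper derives Corollary~\ref{corollary_sum} as an immediate consequence of applying \eqref{eq:thm} once for each $\bullet\in\{\mathrm{loc},\mathrm{cls}\}$ and then invoking the pointwise inequality $\max\{a,b\}\leq a+b$ for $a,b\geq 0$ together with linearity of expectation. Your additional remarks (that $\lbcnfp$ is shared across both tasks and that the precondition on $\alphabul$ holds for each task) are accurate but were left implicit in the paper.
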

In other words, it is possible to get a global guarantee at a level $\alphatot$ by splitting it between the localization and classification tasks. The same logic is applicable with three terms, if need be to include the confidence loss in the final guarantee (at the cost of a larger total error $\alphatot=\alphacnf+\alphaloc+\alphacls$). For instance, simultaneously guaranteeing a sufficient number of predictions while ensuring their localization and classification.
Importantly, the choices of $\alphaloc$ and $\alphacls$ given a global $\alphatot$ can be decided according to the difficulty and practical importance of each task, as long as they are \emph{not} done in a data-dependent way;
that is, one cannot choose the values of $\alpha$ \textit{after} observing the calibration data.

We now comment on the assumptions from a practical standpoint.

{\bf On the \emph{vanishing loss} assumption.}
In Assumption~\ref{ass:loss-seq} the two conditions $\Lloc_i(\lbcnfbar,\lblocbar)=0$ and $\Lcls_i(\lbcnfbar,\lbclsbar)=0$ on the localization and classification losses have practical implications. 
When applied to our examples in Section~\ref{sec:app-pred-sets-losses}, they correspond to assuming that, after Non-Maximum Suppression, there is at least one predicted object per image if the underlying matching is non-injective ({i.e., multiple ground-truth objects can be matched to one predicted object}), or at least as many predictions as ground truths if it is injective (e.g., via Hungarian matching, one can match every ground-truth object to its own prediction).

{\bf On the \emph{monotone loss} assumption.}
Though intuitive, the assumption that the loss functions are monotone can fail in practice for $\Lloc_i$ and $\Lcls_i$. 
While it may not appear obvious from the loss definitions in the next section, the matching required for their computation is the root of this issue (see Remarks~\ref{rmk:nonmonotone-loc} and~\ref{rmk:nonmonotone-cls} for details).
This implies that the localization and classification loss functions are typically non-monotone in the $\lbcnf$ parameter.

To avoid this (small) gap between theory and practice, in Section~\ref{sec:seqcrc-od-algos}, we propose a simple \emph{monotonization trick}: within \eqref{eq:seqcrc-lbcnf-plus}, \eqref{eq:seqcrc-lbcnf-min}, and \eqref{eq:seqcrc-lbj} we replace $\Lloc_i$ and $\Lcls_i$ with the smallest upper bounds that are provably monotone, as done similarly in~\cite{Angelopoulos_2022_CRC}. These modified loss functions can be computed on the fly, and preserve the guarantees \eqref{eq:thm} and \eqref{eq:thm_cnf}. Nevertheless, this comes at a cost, as these conservative losses lead to larger parameters $\lblocp,\lbclsp$ and therefore larger prediction sets, as the \emph{gap} to monotonicity of the loss increases. This gap should therefore be as small and localized as possible, especially for values below $\alpha$.

\section{Application-tailored Prediction Sets \& Losses }
\label{sec:app-pred-sets-losses}

The general method presented in the previous section can be instantiated with various losses and prediction sets to meet specific practical requirements.
This section presents a selection of such sets and losses and compares their features.
They are introduced for confidence, localization and classification tasks respectively in Sections~\ref{sec:seqcrc-od-confidence}, \ref{sec:seqcrc-od-localization} and \ref{sec:seqcrc-od-classification}.
Additional losses and prediction sets are discussed in Appendix~\ref{app:additional-losses}.
Finally, in Section~\ref{sec:seqcrc-od-algos} we present algorithms with their high-level pseudo-codes both for calibration (where we monotonize irregular losses on the fly, in order to meet the assumptions behind SeqCRC) and for inference.

Throughout this section, we work with losses that are $[0,1]$-valued, i.e., $B^\mathrm{cnf}=1$ and $B^\bullet=1$.
Moreover we define prediction sets and losses for a deterministic example consisting in an image $x$ and its ground truth $y$. 
(We thus drop the index $i$ in $L_i$, except when multiple images are considered, e.g., in the algorithms' pseudo-code.)\\

Next, we start by discussing essential preliminary considerations on two topics that are specific to object detection: the level of the statistical guarantee and matching.

\subsection{Level of Guarantee \& Matching}
\label{sec:level-of-guarantee-and-matching}
Although object detection might naively be interpreted as a combination of two simple problems---classification and regression---there are modeling and algorithmic subtleties that are key to make SeqCRC's theoretical guarantee (Theorem~\ref{th:odcrc}) meaningful in OD practice.

First, the expectations in \eqref{eq:thm} and \eqref{eq:thm_cnf} are averages over ``instances'', whose definition is ambiguous in the \od context. 
We find two alternatives:

\begin{enumerate}
    \item \textit{Object-level guarantee (instances = objects)}: 
    Each distinct annotated object in an image is considered an instance. 
    For a dataset of $n$ images, where image $X_i$ contains $\abs{Y_i}$ objects, the total number of instances is $\sum_{i=1}^{n} \abs{Y_i}$. 
    \item \textit{Image-level guarantee (instances = images)}: 
    Each image and its associated set of objects in the dataset is considered a single instance. 
    For a dataset of $n$ images, this results in $n$ instances. 
\end{enumerate}

\noindent
The two settings are strongly connected, and while the former is closer to the usual approach in \od, we focus on the more general \emph{image-level} one in the core of our article. We argue that the image-level setting allows for more flexibility in the design of losses, and in particular to account for interactions between objects on an image (more details in Appendix~\ref{app:box_lvl}).

Second,
in order to compute losses on calibration data and metrics on test data,
we need to define a \textit{rule for matching the predictions of the model $f$ to ground-truth annotations}:
in our setting, an image can have multiple objects (boxes) and in practice, it is not trivial to determine which predicted box is associated with which ground-truth box, if any at all.
This matching (ground-truth $\rightarrow$ prediction) does not need to be injective (but may be, such as using a Hungarian algorithm), meaning that multiple ground-truth boxes can be matched to the same predicted box. 

Hereon,
we assume that every image $x$ produces a matching $\pi_x$ that associates each ground-truth $(b_j,c_j)$ box/class pair with its nearest prediction $(b_{\pi_x(j)}, c_{\pi_x(j)})$ with respect to a ``distance'' function $d$.\footnote{We use the term ``distance'' even if $d$ is not necessarily a metric from a mathematical viewpoint.}

The choice of $d$ is critical, as it shapes the prediction sets obtained using our conformal procedure. 
A satisfactory distance, which accurately connects true and predicted objects, is hard to design: there is a trade-off between minimizing spatial distance, and distance in the classification space. Indeed, for a given ground truth, the closest prediction spatially can be very incorrect in classification, and the closest prediction in classification can be far spatially.
For localization, following the seminal work of \cite{deGrancey_2022_object}, we use the asymmetric signed Hausdorff distance (see also \cite{rucklidge1997efficientlyLocatingObjectsHausdorff}):
$$
d_\mathrm{haus}(b, \hat{b})=\max \left\{\bhleft - \bleft, \bhtop - \btop, \bright - \bhright, \bbottom - \bhbottom \right\}.
$$

Intuitively, it corresponds to the smallest margin (in pixels) to add to any of the predicted coordinates of $\hat{b}$ such that the extended box covers the ground-truth box entirely, aligning with localization objectives.

For classification, we propose using the nonconformity score of \cite{Sadinle_2019_lac_Least_Ambiguous} as a distance, which we refer to as LAC, from their \textit{Least Ambiguous Classifier}:
$$
d_\mathrm{LAC}(c, \hat{c})=1-\hat{c}_c,
$$
where $c$ is the integer true-class label and $\hat{c}$ a probability vector (e.g. softmax output). It measures how far the predicted probability $\hat{c}$ for the true class $c$ is from being $1$.
We therefore propose to merge the two, balanced by a parameter $\tau\in(0,1)$, resulting in:
\begin{equation} \label{eq:match_dist_mix}
d_\mathrm{mix}\left((b,c),(\hat{b},\hat{c}) \right) 
= \tau d_\mathrm{LAC}(c, \hat{c}) + (1-\tau)  d_\mathrm{haus}(b, \hat{b}).
\end{equation}

The distance functions $d_\mathrm{LAC}$ and $d_\mathrm{haus}$ are natural candidates since they capture identical or similar phenomena compared to the localization and classification losses used in the next sections. Other distance choices are possible, but not all of them are relevant. 
For instance, we also consider the \textit{Generalized IoU} (Intersection over Union) distance function, following a common practice in \od \cite{rezatofighi2019generalized}:
$$
d_\mathrm{GIoU}(b,\hat{b})=1 - \frac{\mathrm{area}(b\cap\hat{b})}{\mathrm{area}(b\cup\hat{b})}+\frac{\mathrm{area}(\tilde{b}\backslash(b\cup\hat{b}))}{\mathrm{area}(\tilde{b})},
$$
where $\tilde{b}$ is the smallest convex hull enclosing both $b$ and $\hat{b}$. As we will see in Section~\ref{sec:benchmark-results}, this distance fails in practice compared to the other propositions.\\

We now address the confidence, localization and classification tasks. Recall that throughout this section we work with a generic example $(x,y)$, which can belong to calibration or test data. The notation $x$ stands for the input image, while $y$ denotes the list (or sequence) of all annotated ground-truth objects on $x$, which are elements of $\mathcal{B} \times \mathcal{C}$ (bounding box and class, see Section~\ref{ssec:pb_setup}).

\subsection{First Step: Confidence}
\label{sec:seqcrc-od-confidence}
For the first step introduced in Sec.~\ref{sec:seqcrc}, and as (2) in Fig.~\ref{fig:first_fig} we define our prediction set and loss as follows.

\paragraph{Prediction set}
For an input image $x$, the output of our OD model $f$ followed by NMS is a set of $N^\mathrm{nms}$ objects (see Sec. \ref{ssec:pb_setup} and Fig. \ref{fig:diag-od}) with confidence scores $o(x)_1,\dots,o(x)_{N^\mathrm{nms}}$ in $[0,1]$.
In the object detection literature, trained models are evaluated using mean Average Precision (mAP), which is computed by generating precision-recall curves for each class with a range of thresholds on the confidence scores.
For deployment, users must choose a fixed threshold (e.g., 0.5); this threshold determines how many predictions are excluded from the final set of detections.
We offer there to calibrate this threshold, in such a way that we obtain a statistical guarantee on a specific notion of error important for the user.

More formally, let $\lbcnf\in\Lbcnf=[0,1]$. We start by identifying objects that satisfy
\[
    \mathcal{I} = \bigl\{k\in\{1,\cdots,N^\mathrm{nms}\}:o(x)_{[k]}\geq 1-\lbcnf\bigr\} \;,
\]
where $o(x)_{[k]}$ is the $k$-th largest among the $\{o(x)_k\}_k$. Then, for any image $x$, the (post-processed) prediction set $\Gacnf(x)$ consists of all objects with sufficiently high confidence scores:
\begin{equation}
\Gacnf(x)= \bigl\{(\bhat(x)_{[k]}, \hat{c}(x)_{[k]})\bigr\}_{k\in\mathcal{I}} \;.
\label{eq:cnf-pred-set-sequence}
\end{equation}

\paragraph{Losses}
Since the loss $\Lcnf$, which controls the amount of bounding boxes,
comes before the localization step, which controls their size,
it follows that $\Lcnf$ controls a trade-off between the amount and size of bounding boxes through the definition of $\Lcnf$ and the choice of $\alphacnf$. This choice should be made depending on operational and downstream objectives or requirements. We provide several examples.

A first natural approach could be to force the number $\abs{\Gacnf(x)}$ of predictions to be close to the number $\abs{y}$ of true boxes.
However, this is not feasible because the loss would not be non-increasing in $\lbcnf$. 
Therefore, we introduce the ``box-count-threshold'' loss,
that controls this quantity in a one-sided manner, such that at least as many predictions as the number of true objects are kept:
\[
\Lcnf_{\mathrm{box-count-threshold}}(\lbcnf) = \begin{cases}
    0 &\text{ if } \abs{\Gacnf(x)} \geq \abs{y},\\
    1 &\text{ otherwise}.
\end{cases}
\]

This loss is strict: if the number of predictions is even slightly smaller than the number of ground-truth objects, the penalty is maximal ($=1$).
The ``box-count-recall'' loss is a relaxed version:
\[
    \Lcnf_{\mathrm{box-count-recall}}(\lbcnf) = \frac{\left( \abs{y} - \abs{\Gacnf(x)}\right)_{+}}{\abs{y}},
\]
where $(\cdot)_+ = \max(0,\cdot)$.

These losses do not account for the locations of the predictions and ground truths. We introduce and discuss additional examples of losses in Appendix~\ref{app:additional-losses}.

\subsection{Second Step: Localization}
\label{sec:seqcrc-od-localization}
\begin{figure}
    \centering
    \includegraphics[width=\columnwidth]{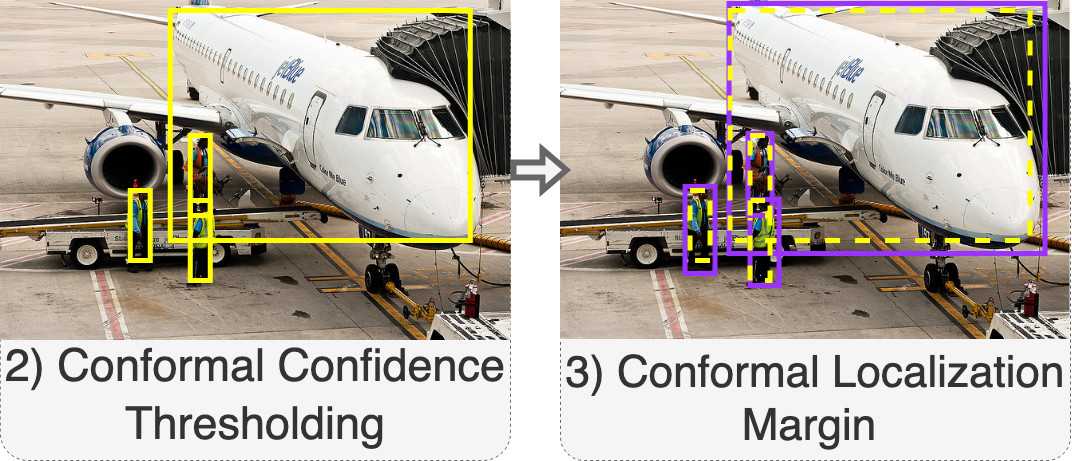}
    \caption{\textbf{Conformal Localization Step.} The localization step, one of the two elements of (3) in Fig.~\ref{fig:first_fig}, consists in applying corrective margins to each of the predicted bounding boxes.}
    \label{fig:submain-fig-loc}
\end{figure}
For the second step introduced in Sec.~\ref{sec:seqcrc} and illustrated in Fig.~\ref{fig:submain-fig-loc}, 
recall that by ``localization'' we refer to how well a bounding box contains the ground-truth bounding box of an object, regardless of its class. 
Similarly to the previous confidence step, we work with sequences of bounding boxes (multiple objects per image).

\paragraph{Prediction sets}
Roughly speaking, we alter the boxes obtained after the confidence step by modifying their coordinates.
For an image $x$, the ``prediction set'' $\Galoc(x)$ is a sequence of boxes with the same size as $\Gacnf(x)$; it contains the same bounding boxes as $\Gacnf(x)$ but adjusted as in
Eqs.~\eqref{eq:galoc_additive} or~\eqref{eq:galoc_multip} below.
 
More precisely, let $\bhat_k =(\bhleft,\bhtop,\bhright,\bhbottom)$ denote the $k$-th box in  $\Gacnf(x)$.
(We ignore the second element of the tuple in~\eqref{eq:cnf-pred-set-sequence}, which is treated in Section~\ref{sec:seqcrc-od-classification} below.)
Following \cite{deGrancey_2022_object,Andéol_2023_conformal}, a natural way to adjust the predicted box $\bhat_k$ is to add an ``additive'' margin (in pixels) around it. This new box is the $k$-th element $\Galoc(x)_k$ of our prediction set $\Galoc(x)$:
\begin{equation}
\label{eq:galoc_additive}
\Galoc(x)_k = \bhat_k+ \begin{pmatrix} -\lbloc&-\lbloc&\lbloc&\lbloc \end{pmatrix},
\end{equation}
where $\lbloc\in\Lbloc\subset\mathbb{R}_{+}$, usually bounded by the size of the image.
However, all bounding boxes will be expanded by an equal margin, which can be disproportional to their size: predictions for small objects may suffer very large corrections and vice versa. 
To mitigate this issue, one can instead
build a ``multiplicative'' margin as in \cite{deGrancey_2022_object,Andéol_2023_conformal}: one \textit{multiplies} the parameter $\lbloc$ by factors $\hat{w}$ and $\hat{h}$, respectively corresponding to the height and width of the predicted box: 
\begin{equation}
\label{eq:galoc_multip}
    \Galoc(x)_k = \bhat_k+\lbloc\begin{pmatrix} -\hat{w}_k &-\hat{h}_k &\hat{w}_k & \hat{h}_k \end{pmatrix},
\end{equation}
where $\lbloc\in\Lbloc\subset\mathbb{R}_{+}$, also bounded by the size of the image.

Note that, as already done for regression tasks (in, e.g., \cite{Papadopoulos_2002_inductive,Bostrom2017,Gupta_2022_nested}), we could replace these width and height factors by any notion of local uncertainty to improve the adaptivity of our conformal correction (see Appendix~\ref{app:additional-losses}). 
It is also possible to obtain corrections that depend on the class associated with the box \cite{timans2024adaptive}.

\paragraph{Losses}
For a given image $x$ with ground-truth labeling $y$, let $b_j$ denote the $j$-th bounding box in $y$. Following Section~\ref{sec:level-of-guarantee-and-matching}, in order to define the losses, we compute a matching $\pi_x$ that maps indices $j$ of true bounding boxes $b_j$ to indices $k$ of predicted boxes $\bhat_k$ in $\Gacnf(x)$. The predicted box paired to a true box $b_j$ is thus denoted by $\bhat_{\pi_x(j)}$.

A natural loss introduced by \cite{deGrancey_2022_object} consists in checking whether a proportion at least $\tau\in[0,1]$ of true boxes $b_j$ on an image $x$ are entirely covered by their matched predictions $\bhat_{\pi_x(j)}$ after resizing the latter as in~\eqref{eq:galoc_additive} or~\eqref{eq:galoc_multip}. Writing $\bhat_{k}^{\lbloc} = \Galoc(x)_k$ for simplicity, this reads:
\[
\Lloc_\mathrm{thr}(\lbcnf, \lbloc) = \begin{cases}
    0 \text{ if } \dfrac{\abs{\{b_j \in y,  b_j \subseteq \bhat_{\pi_x(j)}^{\lbloc}\}}}{\abs{y}} \geq \tau \\[2ex]
    1 \text{ otherwise}.
\end{cases}
\]
This has two drawbacks. 
First, the choice of $\tau$ must be done before looking at the data, and may significantly impact the result (a small change of $\tau$ may cause a large variation in the loss). Second, it penalizes without distinction boxes that fail to be covered by a single pixel, or by a significant margin. 

For this purpose, following \cite{ Andéol_2023_conformal,Andéol_2023_confident}, we also study two relaxations of the previous loss. 
The first consists in directly controlling the previous proportion, and is therefore referred to by \emph{box-wise recall}:
\begin{equation}
    \Lloc_{\rm box}(\lbcnf, \lbloc) = 1-\frac{\abs{\{b_j \in y, b_j \subseteq\bhat_{\pi_x(j)}^{\lbloc}\}}}{\abs{y}} \;.
\label{eq:od-loss-proportion}
\end{equation}
However, the inclusion condition remains restrictive. 
We can further relax the loss using the average proportion of the covered areas of the true boxes $b_j$, via the \emph{pixel-wise} loss:
\[
\Lloc_{\rm pix}(\lbcnf, \lbloc) = 1-\frac{1}{\abs{y}}\sum_{b_j \in y}\frac{\mathrm{area}(b_j \cap \bhat_{\pi_x(j)}^{\lbloc})}{\mathrm{area}(b_j)} \;,
\]
where $\mathrm{area}(b)=(\bright - \bleft)(\bbottom - \btop)$ is the area of a bounding box, and $b_j \cap \bhat$ is the (possibly degenerate) bounding box obtained by intersecting $b_j$ and $\bhat$.

\begin{remark}[On the monotonicity assumption]
\label{rmk:nonmonotone-loc}
In Assumption~\ref{ass:loss-seq} we require that losses be non-increasing in the parameters $\lbloc$ and $\lbcnf$. While the three examples $\Lloc_\mathrm{thr},\Lloc_{\rm box},\Lloc_{\rm pix}$ are easily seen to be non-increasing with respect to the additive/multiplicative margin $\lbloc$, the dependency in $\lbcnf$ is not necessarily monotone.
\begin{figure}
    \centering
    \includegraphics[width=\columnwidth]{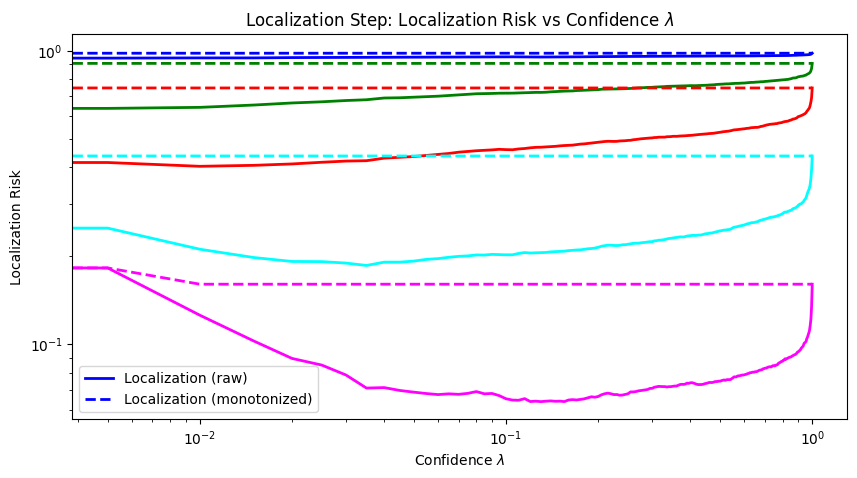}
    \caption{\textbf{Monotonization of the Localization Loss.} The plot represents the box-wise recall risk on calibration data as a function of the confidence parameter $\lbcnf$, that is, the box-wise recall loss (Eq.~\eqref{eq:od-loss-proportion}) averaged over the calibration set. The various colors (blue, green, red, cyan, magenta) correspond to different fixed values of $\lbloc$ (0, 125, 250, 375, 500). The solid lines are the raw risks, and the dotted ones are the risks after loss monotonization (cf. Section~\ref{sec:seqcrc-od-algos}). While we could expect raw losses (and thus raw risks) to be non-increasing, they are usually not, due to (partially) diverging goals between the localization loss and the distance $d$ used in the matching algorithm.}
    \label{fig:monotonicity_loc}
\end{figure}
See Fig.~\ref{fig:monotonicity_loc} for an illustration.
The reason is that, as $\lbcnf$ increases, more and more boxes are added in $\Gacnf(x)$, so that the matching $\pi_x$ between true boxes $b_j$ and boxes $\bhat_k$ in $\Gacnf(x)$ changes. This change is guided by the ``distance'' function $d$ used to compute $\pi_x$, which is only vaguely related to the three loss examples above. 

In Section~\ref{sec:seqcrc-od-algos} and Appendix~\ref{app:algo} we will introduce a \emph{monotonization trick} to fix this issue.
\end{remark}

\subsection{Classification}
\label{sec:seqcrc-od-classification}
We now address the classification task, as illustrated in Fig.~\ref{fig:submain-fig-cls}.
We adapt two well-known CP methods for classification to our setting: the \acf{LAC} of \cite{Sadinle_2019_lac_Least_Ambiguous} and that of \cite{Romano_2020_APS}, which we refer to as \acf{APS}.
However, other methods from the literature could be applied \cite{Angelopoulos_2021_RAPS,luo2024trustworthy,luo2024weighted,melki2024penalized}.

\paragraph{{Prediction sets}} 
Our prediction sets for the classification task are also defined element-wise (for every index~$k$). 
We let $\hat{c}_k$ denote the $k$-th classification prediction within $\Gacnf(x)$, ignoring the previously handled location component. Recall from Section~\ref{ssec:pb_setup} that $\hat{c}_k$ is a probability vector (e.g., a softmax output) over the label space $\mathcal{C} = \{1,\cdots,K\}$.

Letting $\lbcls\in\Lbcls=[0,1]$, the LAC prediction set~\cite{Sadinle_2019_lac_Least_Ambiguous} is defined by
\begin{equation}
\label{eq:Gacls-LAC}
        \Gacls(x)_k = \left\{ \kappa \in \mathcal{C} : \hat{c}_k(\kappa) \geq 1-\lbcls \right\}, 
\end{equation}
where $\hat{c}_{k}(\kappa)$ is the softmax score of the $k$-th prediction for the class $\kappa$. 
Put differently, all sufficiently likely classes (according to the softmax prediction) are predicted.

\begin{figure}
    \centering
    \includegraphics[width=\columnwidth]{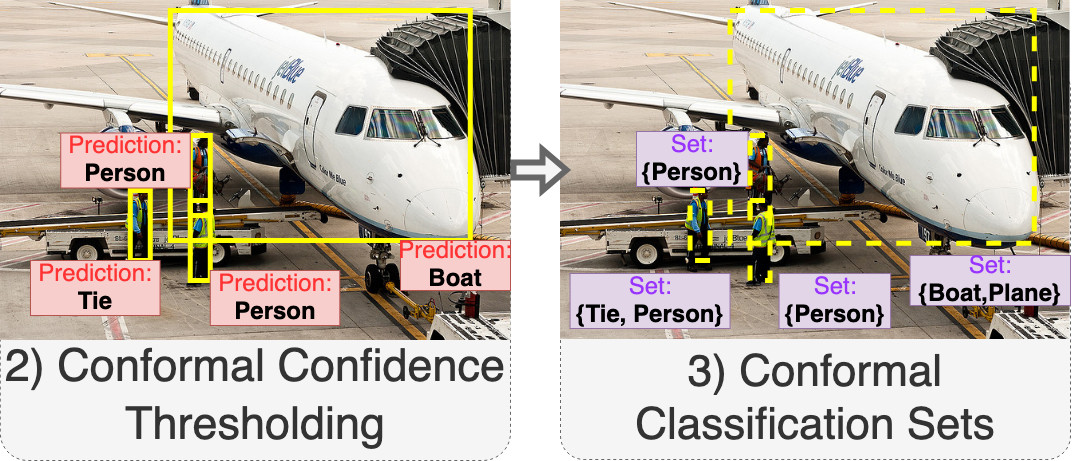}
    \caption{\textbf{Conformal Classification Step.} The classification step, one of the two elements of (3) in Fig.~\ref{fig:first_fig}, consists in a predicting a set of labels for each object, with the guarantee to contain the ground truth with high probability.}
    \label{fig:submain-fig-cls}
\end{figure}

The APS variant by \cite{Romano_2020_APS} consists in retaining the most likely classes too, but in a way that the cumulative softmax probability is above a given threshold.
More precisely, one first ranks the softmax scores in decreasing order: let $\kappa_{[1]},\cdots,\kappa_{[K]}\in\mathcal{C}$ be such that $\hat{c}_k(\kappa_{[1]}) \geq \cdots \geq \hat{c}_k(\kappa_{[K]})$. (The order can be different for different values of $k$.) 
The APS prediction set is then defined by\footnote{When $\lbcls=1$, we set $\Gacls(x)_k = \mathcal{C}$ by convention.}
\begin{equation}
\label{eq:Gacls-APS}
    \Gacls(x)_{k} = \left\{\kappa_{[1]},\dots,\kappa_{[\hat{m}(\lbcls)]}\right\},
\end{equation}    
where
\begin{equation*}
    \hat{m}(\lbcls) = \min\left\{m: \sum_{l=1}^{m} \hat{c}_{k}(\kappa_{[l]}) > \lbcls \right\}.
\end{equation*}

\paragraph{Losses}
Let $c_j \in \mathcal{C}$ denote the class label of the $j$-th object in the ground truth $y$.
We evaluate a binary loss for each true object, by checking whether its class $c_j$ belongs to the associated prediction set component $\Gacls(x)_{\pi_x(j)}$. We then aggregate these loss values. In our experiments we compute an average over all true objects, but some alternatives are discussed in Appendix~\ref{app:box_lvl}. The average version reads:
\begin{equation}
\label{eq:classif_avg_aggregation}
    L^{\mathrm{cls}}(\lbcnf, \lbcls) = \frac{1}{\abs{y}}\sum_{c_j\in y}\mathbbm{1}_{ c_j \notin \Gacls(x)_{\pi_x(j)}} \;.
\end{equation}

\begin{remark}
\label{rmk:nonmonotone-cls}
Similarly to Remark~\ref{rmk:nonmonotone-loc}, for the two prediction sets defined in \eqref{eq:Gacls-LAC} and \eqref{eq:Gacls-APS}, the loss $L^{\mathrm{cls}}(\lbcnf, \lbcls)$ is non-increasing in $\lbcls$ but not necessarily in $\lbcnf$.
\end{remark}

\subsection{Algorithm}
\label{sec:seqcrc-od-algos}
In this section we introduce---in a general form---the algorithms used to compute the prediction sets and tune their parameters. 
They can be used for any choices of prediction set and losses compatible with our method.

\begin{algorithm}
\caption{SeqCRC: Calibration of Parameters}
\label{alg:seqcrc-calibration}
\setstretch{1.2}
\begin{algorithmic}[1]
\Require Calibration dataset $\mathcal{D}_\text{cal} = \{(X_i, Y_i)\}_{i=1}^n$, object detector $f$ (including NMS), matching procedure $\pi$, desired error rates $\alphacnf, \alphaloc, \alphacls$.
\Ensure Calibrated parameters $\lbcnf_+, \lbcnf_-, \lbloc_+, \lbcls_+$

\Statex \textit{\underline{Step 0}: Definitions}
\State Define prediction sets $\Gacnf, \Galoc, \Gacls$
    \label{alg:def_psets}
\State Define losses $L_i^{\mathrm{cnf}}, L_i^{\mathrm{loc}}, L_i^{\mathrm{cls}}$ for confidence, localization, and classification
    \label{alg:def_losses}

\Statex \textit{\underline{Step 1}: Compute predictions}
\For{$i = 1$ to $n$}
    \State $\hat{Y}_i \gets f(X_i)$
\EndFor

\Statex \textit{\underline{Step 2}: Calibrate confidence parameters}
\State $\lbcnf_+ \gets$ Solve Eq.~\eqref{eq:seqcrc-lbcnf-plus}
using $(X_i,Y_i, \hat{Y}_i)_{i=1}^n$,
    the sets $\Gacnf$, $\Galoc$ and $\Gacls$ 
    ,
    and the losses
    $L_i^{\mathrm{cnf}}$, $L_i^{\mathrm{loc}}$, 
    $L_i^{\mathrm{cls}}$ 
    using Algorithm~\ref{alg:seqcrc-monotonizing-step1} (with the monotonization trick)
    \label{alg:line_lbcnf_plus}
\State $\lbcnf_- \gets$ Solve Eq.~\eqref{eq:seqcrc-lbcnf-min} using $(X_i,Y_i, \hat{Y}_i)_{i=1}^n$,
    the sets $\Gacnf$, $\Galoc$ and $\Gacls$
    ,
    and the losses
    $L_i^{\mathrm{cnf}}$, $L_i^{\mathrm{loc}}$, 
    $L_i^{\mathrm{cls}}$ 
    using Algorithm~\ref{alg:seqcrc-monotonizing-step1} (with the monotonization trick)

\Statex \textit{\underline{Step 3}: Compute matchings (see Section~\ref{sec:level-of-guarantee-and-matching})}
\For{$i = 1$ to $n$}
    \State $\hat{Y}_i \gets f(X_i)$
    \State $\pi_x \gets \pi(Y_i, \hat{Y}_i)$ 
\EndFor

\Statex \textit{\underline{Step 4}: Calibrate localization parameter}

\State $\lbloc_+ \gets$ Solve Eq.~\eqref{eq:seqcrc-lbj} 
using $(X_i,Y_i, \hat{Y}_i)_{i=1}^n$, $L_i^{\mathrm{loc}}$ and $\Galoc$ using Algorithm~\ref{alg:seqcrc-monotonizing-step2} (with the monotonization trick) \label{alg:calib-line-loc}

\Statex \textit{\underline{Step 5}: Calibrate classification parameter}
\State $\lbcls_+ \gets$ Solve Eq.~\eqref{eq:seqcrc-lbj} 
using $(X_i,Y_i, \hat{Y}_i)_{i=1}^n$, $L_i^{\mathrm{cls}}$ and $\Gacls$ using Algorithm~\ref{alg:seqcrc-monotonizing-step2} (with the monotonization trick)\label{alg:calib-line-cls}

\end{algorithmic}
\end{algorithm}

In Algorithm~\ref{alg:seqcrc-calibration} we present a high-level pseudo-code to compute the parameters $\lbcnfp,\lbcnfm,\lblocp,\lbclsp$ defined in \eqref{eq:seqcrc-lbcnf-plus}, \eqref{eq:seqcrc-lbcnf-min} and \eqref{eq:seqcrc-lbj}, while meeting the monotonicity requirements on the loss functions $\Lloc$ and $\Lcls$. (Recall from Section~\ref{sec:app-pred-sets-losses}, Remarks~\ref{rmk:nonmonotone-loc} and~\ref{rmk:nonmonotone-cls}, that these losses are not necessarily monotone in $\lbcnf$.) As mentioned in Section~\ref{sec:seqcrc-theory}, this is achieved via a \emph{monotonization trick} (e.g., \cite{Angelopoulos_2022_CRC}), i.e., by replacing $\Lloc_i(\lbcnf,\lbloc)$ and $\Lcls_i(\lbcnf,\lbcls)$ with the smallest upper bounds that are provably monotone, namely, $\sup_{\lambda' \geq \lbcnf} \Lloc_i(\lambda',\lbloc)$ and $\sup_{\lambda' \geq \lbcnf} \Lcls_i(\lambda',\lbcls)$ respectively.

More specifically, Algorithm~\ref{alg:seqcrc-calibration} uses two optimization subroutines detailed in Appendix~\ref{app:algo}: Algorithm~\ref{alg:seqcrc-monotonizing-step1} for the confidence step, and Algorithm~\ref{alg:seqcrc-monotonizing-step2} for the localization/classification step. These two algorithms ``monotonize'' the underlying losses \emph{on the fly} (while optimizing the risk), to improve computational and memory efficiency.

\textbf{Requirements}
In these algorithms, we assume the user has already trained an object detector $f$. We further suppose that they have chosen losses, prediction sets, error levels for all three steps, and a matching procedure $\pi$, as described in Sections~\ref{sec:level-of-guarantee-and-matching} through \ref{sec:seqcrc-od-classification}.

\textbf{Edge cases} 
We must also handle two particular edge cases: an image may contain no ground-truth objects (i.e., $y$ is empty), or no predictions after the confidence thresholding as in Eq.~\eqref{eq:cnf-pred-set-sequence}.
In the first case, we set the loss to zero, and in the second case, to $B$ (in this section, for all losses we have $B=1$).

Algorithm~\ref{alg:seqcrc-inference} contains the procedure to follow for an inference on a test image. Using the same prediction set definitions as in calibration, and the parameters $\lbcnf_+, \lbloc_+, \lbcls_+$ obtained from it, it builds prediction sets that satisfy the conformal guarantee of Theorem~\ref{th:odcrc}. 
The process is considerably simpler than calibration, as it solely involves constructing the prediction sets. This procedure is computationally inexpensive for all the prediction sets and losses introduced and can be regarded as negligible when compared to the cost of a forward pass of~$f$. 
Importantly, only the $\lambda_+ $ values are used during inference.

\begin{algorithm}[t]
\caption{SeqCRC: Inference}
\label{alg:seqcrc-inference}
\setstretch{1.2}
\begin{algorithmic}[1]
\Require Test image $X_\mathrm{test}$, object detector $f$ (including NMS), calibrated parameters $\lbcnf_+, \lbloc_+, \lbcls_+$ and defined prediction sets $\Gacnf, \Galoc, \Gacls$
\Ensure Prediction sets $\Gamma^\mathrm{cnf}, \Gamma^\mathrm{loc}, \Gamma^\mathrm{cls}$

\Statex \textit{\underline{Step 1}: Confidence filtering}
\State $\hat{Y}_\mathrm{test} \gets f(X_\mathrm{test})$ 
\State $\Gamma^\mathrm{cnf} \gets ((\bhat_k, \hat{c}_k, \hat{o}_k) \in \hat{Y}_\mathrm{test} : \hat{o}_k \geq 1 - \lbcnf_+)$ (Eq.~\ref{eq:cnf-pred-set-sequence})

\Statex \textit{\underline{Step 2}: Localization and classification prediction sets}
\For{$k$ in $1\dots\abs{\Gamma^\mathrm{cnf}_{\lbcnfp}(\Xtest)}$}
    \State Compute $\Gamma^\mathrm{loc}\gets\Gamma^\mathrm{loc}_{\lbcnfp,\lblocp}(\Xtest)_k$ as chosen in Section~\ref{sec:seqcrc-od-localization}
    \State Compute $\Gamma^\mathrm{cls}\gets\Gamma^\mathrm{cls}_{\lbcnfp, \lbclsp}(\Xtest)_k$ as chosen in Section~\ref{sec:seqcrc-od-classification}
\EndFor

\end{algorithmic}
\end{algorithm}

\section{Experiments}
\label{sec:experiment}

In this section, we present a comprehensive set of experiments designed to assess the performances of our SeqCRC approach to \acf{OD}. 
These experiments include large-scale evaluations that result in a thorough benchmark, encompassing all matchings, losses, and prediction sets introduced in Sections~\ref{sec:level-of-guarantee-and-matching} through \ref{sec:seqcrc-od-classification}. 

Our goal is to provide insight into the practical interpretation and implications of our conformal method, highlighting both its strengths and limitations, while clarifying the practical process leading to statistical guarantees.

\begin{table}
\centering
\caption{Comparison of set sizes and risks for all losses and prediction sets from Section~\ref{sec:app-pred-sets-losses}, using DETR-101 as a reference model. The results are split into four categories: Confidence (Cnf.) and Localization (Loc.) losses, as well as Localization (Loc.) and Classification (Cls.) Prediction Sets (Pred. Sets). The risk levels are $\alpha^\mathrm{cnf}=0.02$ for the confidence task, $\alphaloc=0.05$ for localization, and $\alphacls=0.05$ for classification. 
Thus, we apply the SeqCRC method
at global level $\alphatot=\alphaloc+\alphacls=0.1$ (see Corollary~\ref{corollary_sum}).
For each setting, the results are shown using optimal choices for all remaining parameters.}
\label{tab:combined_losses_pred_sets_detr_01}
\begin{tabular}{lllrR{1cm}R{1cm}}
&& Setting & Set Size & Task Risk & Global Risk \tabularnewline
\midrule
\multirow{5}{*}{\rotatebox[origin=c]{90}{Loss}}& \multirow{2}{*}{\rotatebox[origin=c]{90}{Cnf.}} 
& box\_count\_threshold & 25.588 & 0.022 & 0.086 \tabularnewline
&& box\_count\_recall & 17.778 & 0.019 & 0.085 \tabularnewline
\cmidrule(r){2-6}
&\multirow{3}{*}{\rotatebox[origin=c]{90}{Loc.}}
& thresholded & 1.552 & 0.046 & 0.097 \tabularnewline
&& boxwise & 1.504 & 0.049 & 0.097 \tabularnewline
&& pixelwise & 1.043 & 0.047 & 0.096 \tabularnewline
\midrule
\multirow{4}{*}{\rotatebox[origin=c]{90}{Pred. Sets}}&
\multirow{2}{*}{\rotatebox[origin=c]{90}{Loc.}} 
& additive & 1.047 & 0.052 & 0.100 \tabularnewline
&& multiplicative & 1.043 & 0.047 & 0.096 \tabularnewline
\cmidrule(r){2-6}
&\multirow{2}{*}{\rotatebox[origin=c]{90}{Cls.}} 
& aps & 1.007 & 0.050 & 0.082 \tabularnewline
&& lac & 0.994 & 0.051 & 0.087 \tabularnewline
\end{tabular}
\end{table}

\begin{figure}[t]
    \centering
    \includegraphics[width=.7\columnwidth]{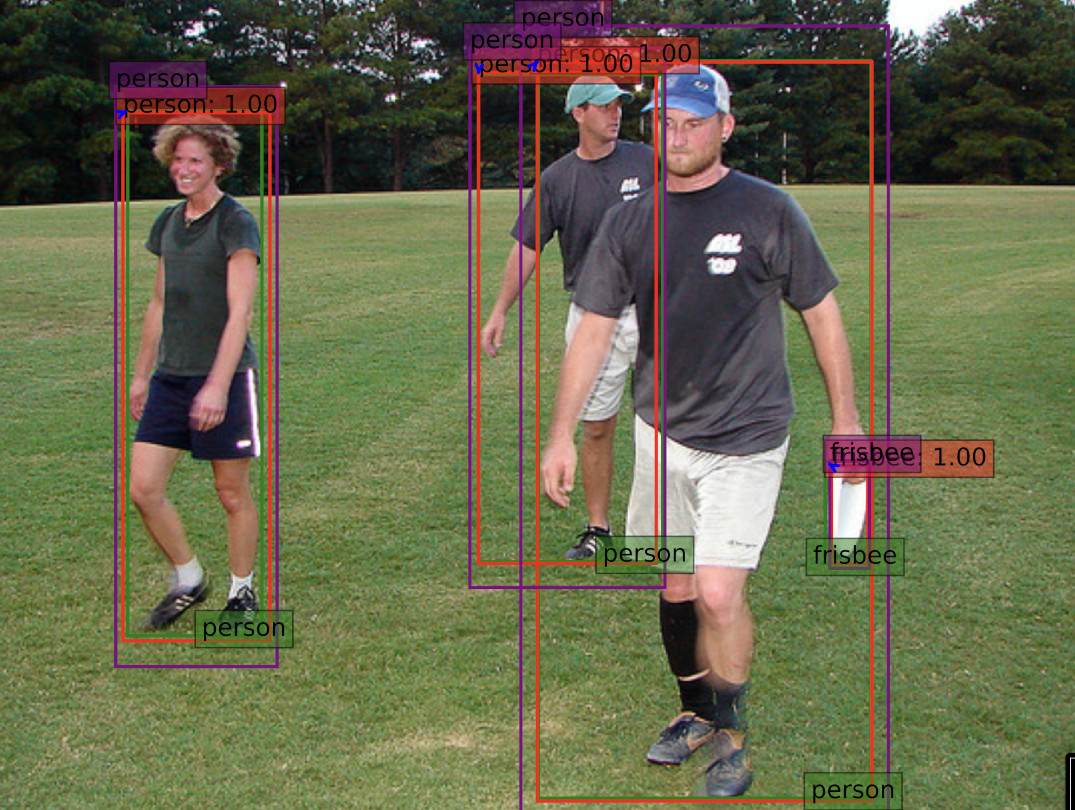}
    \caption{Example of a successful conformal prediction. 
    The model's {\color[HTML]{FF0000}predictions} are close to the {\color[HTML]{008001}ground truths}, and the {\color[HTML]{800080}conformal predictions} for the localization and classification tasks both include their respective ground truths.
    }
    \label{fig:frisbee}
\end{figure}

\subsection{Experimental Setup}
\label{sec:experimental-setup}
In these experiments, we use two reference models: YOLOv8~\cite{Jocher_2023_YOLO_v8} (more precisely, YOLOv8x\footnote{\url{https://github.com/ultralytics/ultralytics}}, 68M parameters), a recent improvement in a long line of single-stage detectors including \cite{Redmon_2016_YOLO,Jocher_2020_YOLOv5_by_Ultralytics}, and DETR~\cite{Carion_2020_detr} (DETR-101\footnote{\url{https://github.com/facebookresearch/detr}}, 60M parameters), a well-known transformer-based predictor.
It is important to note that while all results here are based on these models,
the method we present is \textbf{model-agnostic} and can be applied straightforwardly to virtually any other model (following our setting in Section~\ref{sec:od-uq}). The choice of these models emphasizes the broad applicability of our new framework, as well as the potential advantages or shortcomings of some families of models.

We employ the MS-COCO dataset~\cite{Lin_2014_COCO}, a standard benchmark in the field. 
For our calibration and inference procedures, we use the validation split of this dataset. 
This approach is common when an annotated dataset that is independent of training and model selection is unavailable~\cite{Angelopoulos_2022_CRC,angelopoulos2021learn}.
While potentially breaking Assumption~\ref{ass:cp-independence}, the practical availability constraints often
make the validation split the best choice
for conformal calibration and evaluation. 
The validation data is then split into two equal segments of $n=2500$ instances each, dedicated to calibration and inference respectively. 

\subsubsection{Methods}
We conduct experiments on MS-COCO for both models (YOLO8 and DETR) and for all combinations of losses, distance functions (matchings), and prediction sets introduced in Section~\ref{sec:app-pred-sets-losses}. 
For the matching ``mix'' referring to distance \eqref{eq:match_dist_mix}, we set $\tau=0.25$, providing a fair balance between localization and classification distances, according to preliminary experiments.
For all experiments, the IoU threshold for the NMS is set to $0.5$.
We focus on the setting of Corollary~\ref{corollary_sum} that offers a joint guarantee on the localization and classification tasks (hence $\alphatot = \alphacls + \alphaloc$ here). 
We consider two global error levels : $\alphatot=0.1$ and $\alphatot=0.2$. For the former we set $\alphacnf=0.02$, $\alphaloc=0.05$ and $\alphacls=0.05$ for confidence, localization and classification respectively. Similarly for the latter we set $\alphacnf=0.03$, $\alphaloc=0.1$, $\alphacls=0.1$. 
Finally, to mitigate the pathological second-step loss values towards $\lbcnf=1$ (i.e. a null confidence threshold $1-\lbcnf$) as illustrated in Fig.~\ref{fig:monotonicity_loc}, we filter out predictions with confidence lower than $10^{-3}$. This threshold is chosen independently of the data, for all experiments. (This thus corresponds to refining Steps 1 and 2 in Fig.~\ref{fig:first_fig}, and does not break the guarantees of Theorem~\ref{th:odcrc}.) This leads to much lighter correction when applying the monotonization trick and therefore to smaller set sizes.

\subsubsection{Metrics} \label{sec:experiment_metrics}

For evaluation, we follow a common practice in Conformal Prediction, that of reporting the two metrics of \textbf{Empirical Risk} and \textbf{Set Size}. 
We examine their relationships with \od metrics in Appendix~\ref{app:further_experiments}. 
Throughout this section, we focus on the test data sequence $ (X_{\mathrm{test},i}, Y_{\mathrm{test},i})_{i=1}^{n_\mathrm{test}} $.

The \emph{empirical risk} (or simply \emph{risk}) measures the average loss on test data. For binary losses, this equals $1$ minus the coverage, a standard metric in Conformal Prediction.
(In the case of a binary loss, it corresponds to $1$ minus the coverage, one of the main metrics in Conformal Prediction.)
For each task $j \in \{ \mathrm{cnf,loc,cls}\}$, the {empirical risk} that we measure is
\[
j\text{-Risk:}\quad \left\{\begin{array}{ll}
     \frac{1}{n_{\mathrm{test}}} \sum\limits_{i=1}^{n_{\mathrm{test}}}L^j_\mathrm{test,i}(\lbcnfp) & \mathrm{if~} j=\mathrm{cnf} \\
     \frac{1}{n_{\mathrm{test}}} \sum\limits_{i=1}^{n_{\mathrm{test}}}L^j_\mathrm{test,i}(\lbcnfp,\lambda^j_+) & \mathrm{if~} j \in \{ \mathrm{loc,cls}\}
\end{array}\right.
\label{eq:metric-risk}
\]
The $j$-Risk can be compared against the target value $\alpha^j$, for each task $j \in \{ \mathrm{cnf,loc,cls}\}$ individually. 
However, a \textbf{Global Risk} can be computed by
\[
\frac{1}{n_{\mathrm{test}}} \sum_{i=1}^{n_{\mathrm{test}}}\max\left\{\Lloc_{\mathrm{test},i}(\lbcnfp, \lblocp), \Lcls_{\mathrm{test},i}(\lbcnfp, \lbclsp)\right\}.
\label{eq:metric-global-risk}
\]
This corresponds to the joint risk controlled in Corollary~\ref{corollary_sum} (therefore compared against $\alphatot$), and it is our main target. As mentioned after the corollary, one may include the confidence risk if desired; in our experiments, we focus on localization and classification only.

Secondly, for each task (including confidence), we measure the mean set size, averaged over images. As the sets take various shapes depending on the tasks, we introduce three different metrics, tailored for interpretability. 

First, set sizes of confidence sets are defined by
\[
\text{Confidence Set Size:}\quad
\frac{1}{n_{\mathrm{test}}}\sum_{i=1}^{n_{\mathrm{test}}}\abs{\Gamma^\mathrm{cnf}_{\lbcnfp}(X_{\mathrm{test},i})}.
\]
This metric measures the average number of boxes output by our procedure per image.
Then, for localization, the metric we introduce is inspired from \cite{Andéol_2023_confident}; it is given by the ratio of the conformal box' area over that of its original prediction (referred to as \emph{Stretch} in the source). More precisely:
\[
    \text{Localization Set Size:}\quad
    \frac{1}{n_{\mathrm{test}}}\sum_{i=1}^{n_{\mathrm{test}}}\frac{1}{n_{\mathrm{test},i}}\sum_{\bhat_k, \bhat_{k}^{\lblocp}}  \sqrt{\frac{\mathrm{area}(\bhat_{k}^{\lblocp})}{\mathrm{area}(\bhat_k)}},
\]
where $n_{\mathrm{test},i} = \abs{\Gamma^\mathrm{cnf}_{\lbcnfp}(X_{\mathrm{test},i})}$ is the (final) number of predicted boxes on the $i$-th test image, and where $\bhat_k$ and $\bhat_{k}^{\lblocp}$ are the $k$-th elements of $\Gamma^\mathrm{cnf}_{\lbcnfp}(X_{\mathrm{test},i})$ and $\Gamma^\mathrm{loc}_{\lbcnfp,\lblocp}(X_{\mathrm{test},i})$ respectively---in other words, the same $k$-th predicted box before and after altering its size with the $\lblocp$ parameter (e.g. an additive margin).

Finally, the set size for classification prediction sets is defined (similarly as above) by averaging over two levels: within a test image, and then over the test set;
\[
\text{Classification Set Size:}\quad
\frac{1}{n_\mathrm{test}}\sum_{i=1}^{n_\mathrm{test}}\frac{1}{n_{\mathrm{test},i}}\sum_{\hat{c}_{k}^{\lbclsp}} \abs{\hat{c}_{k}^{\lbclsp}},
\]
where $\hat{c}_{k}^{\lbclsp}=\Gamma^\mathrm{cls}_{\lbcnfp,\lbclsp}(X_{\mathrm{test},i})_k$ denotes the classification prediction set for the $k$-th prediction on image $X_{\mathrm{test},i}$.

\begin{table}
\centering
\caption{Effect of target $\alphatot$ level on set sizes for Confidence (Cnf.), Localization (Loc.) and Classification (Cls.) for four matching functions: the Generalized IoU (GIoU), the asymmetric signed Hausdorff divergence (Hausdorff), the Least-Ambiguous Classifier non-conformity score (LAC), and a mixture of the last two (Mix), the best performing overall.}
\label{tab:alpha_effect_sets_per_loss}
\begin{tabular}{lrrrr}
Matching & Target $\alphatot$ & Cnf. Size & Loc. Size & Cls. Size \\
\midrule
\multirow{2}{*}{GIoU} & 0.1 & 17.778 & 28.241 & 44.471 \\
& 0.2 & 14.046 & 23.690 & 32.335 \\
\midrule
\multirow{2}{*}{Hausdorff} & 0.1 & 25.588 & 1.043 & 41.846 \\
& 0.2 & 14.046 & 0.999 & 22.035 \\
\midrule
\multirow{2}{*}{LAC} & 0.1 & 25.588 & 14.147 & 0.994 \\
& 0.2 & 22.657 & 7.786 & 0.653 \\
\midrule
\multirow{2}{*}{Mix} & 0.1 & 25.588 & 1.334 & 8.228 \\
& 0.2 & 22.657 & 1.018 & 0.931 \\
\end{tabular}
\end{table}

\begin{table}
\centering
\caption{Comparison of set sizes for all losses and prediction sets from Section~\ref{sec:app-pred-sets-losses} for two models: DETR-101 and YOLOv8x. The comparison is done for a global error level $\alphatot=0.1$. The set size reported are the ones of the associated task in its part of the table.}
\label{tab:detr_yolo_set_sizes}
\begin{tabular}{lllrrrr}
&& Metric & \multicolumn{2}{c}{DETR} & \multicolumn{2}{c}{YOLOv8}\\
&&          & Risk & Set Size & Risk & Set Size\\
\midrule
\multirow{5}{*}{\rotatebox[origin=c]{90}{Loss}}&\multirow{2}{*}{\rotatebox[origin=c]{90}{Cnf.}} 
& {\scriptsize box\_count\_threshold} & 0.022 & 25.588 & 0.012 & 18.855 \\
&& {\scriptsize box\_count\_recall} & 0.019 & 17.778 & 0.019 & 11.710 \\
\cmidrule(r){2-7}
&\multirow{3}{*}{\rotatebox[origin=c]{90}{Loc.}} 
& {\scriptsize thresholded} & 0.046 & 1.552 & 0.047 & 5.451 \\
&& {\scriptsize boxwise} & 0.049 & 1.504 & 0.046 & 5.225 \\
&& {\scriptsize pixelwise} & 0.047 & 1.043 & 0.049 & 3.867 \\
\midrule
\multirow{4}{*}{\rotatebox[origin=c]{90}{Pred. Sets}}&\multirow{2}{*}{\rotatebox[origin=c]{90}{Loc.}} 
& {\scriptsize additive} & 0.052 & 1.047 & 0.049 & 3.867 \\
&& {\scriptsize multiplicative} & 0.047 & 1.043 & 0.052 & 4.181 \\
\cmidrule(r){2-7}
&\multirow{2}{*}{\rotatebox[origin=c]{90}{Cls.}} 
& {\scriptsize aps} & 0.050 & 1.007 & 0.043 & 0.996 \\
&& {\scriptsize lac} & 0.051 & 0.994 & 0.049 & 0.717 \\
\end{tabular}
\end{table}

\subsection{COD Toolkit}

To emphasize the reproducibility and reusability of our work, we introduce a comprehensive toolkit, 
which we refer to as the ``Conformal Object Detection Toolkit,''\footnote{
    \url{https://github.com/leoandeol/cods}.
}
or \emph{COD} for short. 
This open-source repository includes several widely used object detectors, including, but not limited to, YOLO and DETR.

It includes the procedures outlined in this paper, multiple loss functions, and a suite of visualization tools for clear understanding and interpretation of results.
Figure~\ref{fig:frisbee} shows an example, where the ground truth, prediction, and corrected bounding boxes are colored green, red, and purple, respectively.
The class label, top-1 predicted label, and conformal sets are also displayed.

The COD toolkit is implemented in Python and PyTorch, compatible with both CPU and GPU.
Finally, it supports the integration of new procedures, prediction sets, and loss functions, enabling extensive benchmarks and further comparisons in future works.

\begin{figure}
    \centering
    \includegraphics[width=.7\columnwidth]{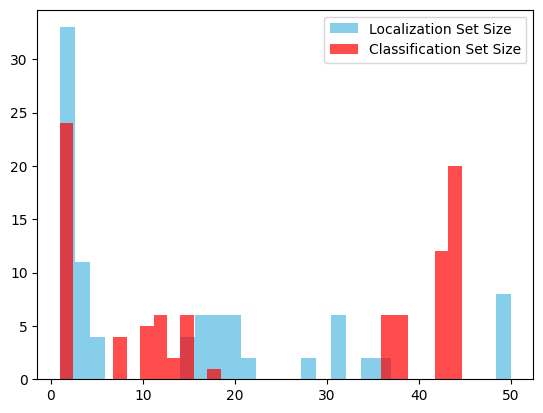}
    \caption{Histogram of set sizes for all combinations of losses, prediction sets and matching functions for $\alphatot=0.1$, using the detector DETR. In {\color[HTML]{87ceeb}blue}
    the conformal set sizes in Localization, and in {\color[HTML]{ff4c4c}red}, the classification ones (see Section \ref{sec:experiment_metrics}). While the two have different units, their scale is similar, and, in distribution, both display a heavy tail. A large share of experiment settings lead to dramatically worse performance (= set size) at inference.
    }
    \label{fig:set-size-histogram}
\end{figure}

\subsection{Benchmark \& Results}
\label{sec:benchmark-results}

The benchmark consists of a large-scale study of all combinations of the losses, prediction sets, matching functions and more (see Section~\ref{sec:experimental-setup}).
This results in the largest benchmark in Conformal Object Detection, with hundreds of individual runs. Key results are presented in several tables and figures in this section, while complete results are provided in Appendix~\ref{app:complete_benchmark}.

\textbf{Impact of Component Choice}.
We begin by examining Table~\ref{tab:combined_losses_pred_sets_detr_01}. 
For each loss and prediction set introduced in Section~\ref{sec:app-pred-sets-losses}, we present the best experimental result (i.e., the reported experiment being the best combination of the remaining parameters). 
It does not include a section for both classification losses, and confidence prediction sets, as only one of each is introduced in Section~\ref{sec:app-pred-sets-losses}.
The table is organized as follows: it is split into two, with the upper part comparing different losses, and the lower part about prediction sets. These are then subdivided into tasks, where we compare losses or prediction sets specific to each task.
For instance, the first line reads: for the box\_count\_threshold confidence loss, and the best remaining hyperparameters (other losses, matching, etc), the confidence set size is 25.588, with an empirical test risk of 0.022. The global (joint localization and classification) empirical risk is 0.086.
The experiments are conducted with DETR-101, in the $\alphatot=0.1$ setting.
For virtually all tasks and global risk levels, the risk is controlled, i.e., below its corresponding risk level $\alphatot$ or $\alpha^j$. A few experiments showcase a risk slightly above their respective $\alpha$, likely due to randomness from sampling. 
We also observe that the set sizes are sufficiently small and therefore practically informative for each loss. 
For example, in classification, when the classes in the set are semantically close, or induce a similar response, a set size of $3$ can be considered informative. For example, a set containing pedestrian, animal and cyclist for autonomous driving.
The choice of loss function significantly affects the resulting set sizes, particularly for the losses presented here. 
The ``pixelwise'' loss, a relaxation of the previous two, produces smaller corrections at the cost of only slightly weaker guarantees (e.g., a guarantee on containing most of the ground truth rather than all of it), confirming results from \cite{Andéol_2023_confident}. Similarly, the relaxed confidence loss ``box\_count\_recall'' leads to the same conclusion.

However, these losses are compared using the best possible remaining parameters.
While presented set sizes are reasonably small, this is not true for all settings. 
Figure~\ref{fig:set-size-histogram} shows the histogram of set sizes for localization and classification across numerous experiments, where each element represents a unique combination of losses, prediction sets, matching functions, and models. 
The histogram reveals a heavy tail: although a significant proportion of experiments produce informative set sizes (leftmost bar), a notable share results in excessively large set sizes, including bounding boxes as large as the image and conformal classification sets with dozens of classes.
This implies, that for a desired error level $\alphatot$, the choice of losses can lead to either informative, or overly large --unusable-- sets.

{\textbf{Matching Functions \& Error Levels}.}
We now look at Table~\ref{tab:alpha_effect_sets_per_loss}.
It compares, for each 4 proposed matching functions in Section~\ref{sec:level-of-guarantee-and-matching}, and two risk levels $\alphatot$, the size metrics of the three tasks, for the best performing experiment in this setting (i.e. best losses and prediction sets for each $\alphatot$ and matching function).
We observe that the confident set size is constant for all matching functions for $\alphatot=0.1$.
This is due, in our setting, to the \emph{maximum} of the three risks in Eq.~\eqref{eq:seqcrc-lbcnf-plus} being always the confidence risk. Indeed, when $\alphacnf$ is not too high, at least a box is selected on most images and therefore by definition, the localization and classification risks in this step are null (due to $\bar{\lambda}$).
However, for $\alphatot=0.2$, we observe two confident set sizes, which is due to the best experiment (in the sense of localization and classification set sizes) relying on different confidence losses depending on the matching.
We further observe that localization and classification set sizes are respectively minimized with the ``Hausdorff'' and ``LAC metrics'', while ``Mix'' produces an interesting trade-off of the two. However, the GIoU matching, while more \od-inspired, suffers from poor performance across the board, due to the lack of connection to the subsequent losses, that use the matching. Finally, in the ``Mix'' setting, a large jump in classification set size can be noted. Indeed, as observed in other experiments in Appendix~\ref{app:complete_benchmark}, the impact of a risk level $\alpha$ to its associated task's resulting set size is not smooth, it can greatly vary for a small change in target risk. 

{\textbf{Model-Agnosticism}.}
We now look at Table~\ref{tab:detr_yolo_set_sizes} which compares set sizes and empirical risks for DETR-101 and YOLOv8x. It is built similarly to Table~\ref{tab:combined_losses_pred_sets_detr_01}, introduced previously.
We observe that for both model, the risk is controlled for all experiments, that is below the corresponding risk level $\alphatot$ or $\alpha^j$.\footnote{Using APS results in a looser bound, as discussed in \cite{Angelopoulos_2021_RAPS}.} This result is a direct consequence of the model-agnosticism of conformal prediction and therefore of SeqCRC -- it is applicable regardless of the performance or size of the model. However, the set size depends on the quality of the underlying model. 
Furthermore, there is a trade-off in terms of number, and size of (conformal) bounding boxes. 
We observe here that using YOLOv8 tends to lead to much fewer selected bounding boxes, but with much larger corrections in localization: with the same hyperparameters used for calibration, the resulting trade-off differs.

\subsection{Discussion}
\label{sec:exp-discussion}

{\textbf{Risk Control}.}
In all our experiments, the risk meets or is very close to the target value, which empirically confirms Theorem~\ref{th:odcrc}. 
However, the global risk from Corollary~\ref{corollary_sum} is often unnecessarily conservative (see Table~\ref{tab:combined_losses_pred_sets_detr_01} and Appendix~\ref{app:complete_benchmark}), suggesting that its correction could be refined.

{\textbf{Tradeoff: Guarantee Strength versus Set Size}.}
As shown in Tables~\ref{tab:combined_losses_pred_sets_detr_01} and~\ref{tab:detr_yolo_set_sizes}, relaxed losses such as the “pixelwise” localization loss yield much smaller prediction sets, but at the cost of weaker guarantees. Unlike the stricter losses, which ensure full object coverage, the pixelwise variant only guarantees that a proportion of the ground-truth area is covered (i.e., missing a few pixels is not heavily penalized).
This tradeoff is often acceptable, but in some safety-critical applications (e.g., \cite{zouzou2025robustvisionbasedrunwaydetection}) full-object guarantees remain necessary.

{\textbf{Strengths and Limitations in Practice}.}
Qualitatively, many examples (e.g., Fig.~\ref{fig:frisbee}) show accurate object counts, predictions that are fully contained in conformal boxes, and correct singleton classification sets.
Limitations, however, also appear: in Fig.~\ref{fig:zebras}, while the two zebras are well predicted, an additional large box includes both animals without ground-truth support, which reveals two issues.

First, our approach guarantees a large enough \emph{recall} but only empirically limits false positives (via the confidence thresholding step).
In fact, precision is a non-monotone function of its parameters, and monotonizing it using our approach could lead to too large risks for a solution to exist.

Second, MS-COCO annotations are inconsistent: individual objects may be labeled alongside a larger enclosing box, and sometimes include unlabeled instances (see Appendix~\ref{app:further_experiments}). Models trained on such data reproduce this variability, generating group boxes unpredictably.

Finally, we recall that conformal prediction brings a guarantee on the loss of the prediction \emph{with regards to} the ground truth. 
If the ground truth is incorrect or inconsistent (e.g., due to annotation choices or labeling errors), and the model in fact predicts the object more accurately than the labels, then conformal prediction must impose very large corrections in order to achieve small risk. These corrections could inflate the prediction sets to the point of becoming practically unusable.

\begin{figure}
    \centering
    \includegraphics[width=.7\columnwidth]{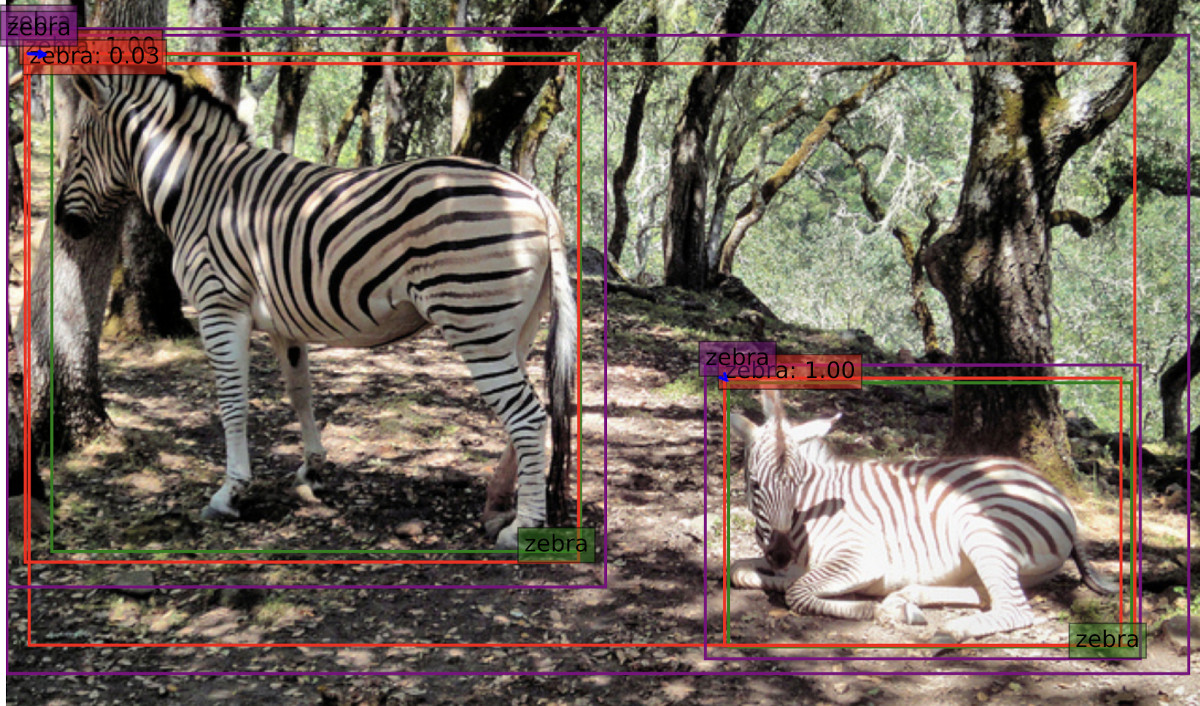}
    \caption{Example of an annotation/model artifact. 
    Beyond the two correctly detected zebras, an additional (larger) prediction includes both animals, but does not correspond to any ground truth. 
    We argue this is due to artifacts in the annotation of MS-COCO, which are reproduced by the pretrained model.}
    \label{fig:zebras}
\end{figure}

\section{Conclusion \& Future Works}

In this work, we address the critical challenge of building trustworthy object detection (OD) systems. We focus on a family of statistically valid uncertainty quantification methods: conformal risk control. Though this topic had previously been studied in some restricted OD cases (for a single model, or a sub-task of \od), we provide the first framework to build guaranteed prediction sets for all common tasks in OD.

Our main contribution is the new statistical method \emph{Sequential Conformal Risk Control} (SeqCRC). 
Unlike previous works using \crc, SeqCRC allows to control risks associated with two sequential tasks using two \emph{dependent} parameters. In OD this provides a statistically sound way to tune a confidence threshold as well as classification and localization ``error margins''. 

We study various ways to measure prediction errors (conformal loss functions), both old and new, and discuss their properties as well as other key hyperparameters (e.g., matching distance). We test our framework through extensive experiments, which confirm that our SeqCRC method effectively controls different risks and produces small enough set sizes.

To support the research community, we provide an open-source codebase that we used for our experiments. It includes models and procedures, enabling development of new techniques, error metrics, and associated guarantees. We hope this codebase and our extensive experiments can be used for future benchmarks in Conformal Object Detection.

Our analysis identifies key directions for future research.
For instance, designing new confidence loss functions that better balance valuable guarantees with smaller (more informative)  prediction sets would be key in practice. Other examples mentioned in Section~\ref{sec:exp-discussion} are about extending the framework to control false positives (precision), or improving its robustness against inconsistencies in ground-truth annotations. 

\section*{Acknowledgements}

Our work has benefitted from the AI Cluster ANITI and the research program DEEL. ANITI is funded by the France 2030 program under the Grant agreement n°ANR-23-IACL-0002. DEEL is an integrative program of the AI Cluster ANITI, designed and operated jointly with IRT Saint Exupéry, with the financial support from its industrial and academic partners and the France 2030 program under the Grant agreement n°ANR-10-AIRT-01.

A. Mazoyer has been supported by the project ROMEO (ANR-21-ASIA-0001) from the ASTRID joint program of the French National Research Agency (ANR) and the French Innovation and Defence Agency (AID)

{
    \bibliographystyle{ieeetr}
    \bibliography{bibliography}
}

\clearpage
\appendices
\section{Proofs}
\label{app:proof}

Throughout this section, for the sake of readability, we sometimes denote the test sample as the $(n+1)$-th sample, thus replacing $\Xtest, \Ytest, L_{\mathrm{test}}$ by $X_{n+1}, Y_{n+1}, L_{n+1}$.

For readability again, we also leave out a few measure-theoretic details, such as the fact that we use the cylindrical $\sigma$-algebra (e.g., \cite[p.16]{GineNickl2015mathematicalFoundations}) to properly define the random functions $\Lcnf_i$ or $\Lbul_i$, or the fact that $\lb^\mathrm{cnf}_{\pm},\lbcnfo,\lbbulp,\lbbulo$ can be made measurable (i.e., that they are well-defined random variables), as well as other quantities such as $\Lbul_i\left(\lbcnfo, \lbbulo\right)$.

\subsection{Main theorem}

The main theorem is reminded below.
Recall that the placeholder $\bullet \in \{\mathrm{loc},\mathrm{cls}\}$ means that we work both for the localization and classification tasks, and that $\bar{\lambda}^\mathrm{cnf} = \max \Lambda^\mathrm{cnf}$.

\begin{customthm}{2}
    \label{th:odcrc-bis}
    Suppose that Assumptions \ref{ass:cp-independence} and \ref{ass:loss-seq}
    hold true.\footnote{See Footnote~\ref{ft:iidlosses}.} 
    Let $\alphacnf \geq 0$ and $\alphabul \geq \alphacnf + \frac{B^\bullet}{n+1}$. 
    Then, $\lb^\mathrm{cnf}_{\pm}$ and 
        $\lbbulp$ introduced in \eqref{eq:seqcrc-lbcnf-plus}, \eqref{eq:seqcrc-lbcnf-min} and \eqref{eq:seqcrc-lbj} are well defined, and
    \begin{equation}
        \tag{7}
        \label{eq:thm-bis}
        \Ex{L_{n+1}^\bullet(\lbcnfp, \lbbulp)} \leq \alphabul \;.
    \end{equation}
    Under the additional assumption that $\Lcnf_i(\lbcnfbar)\leq\alpha^\mathrm{cnf}$ almost surely for all $i \in \{1,\ldots,n+1\}$, we also have
    \begin{equation}
        \tag{8}
        \label{eq:thm_cnf-bis}
        \Ex{L_{n+1}^\mathrm{cnf}(\lbcnfp)} \leq \alphacnf \;.
    \end{equation}
\end{customthm}

\begin{proof} 
The fact that $\lbcnf_\pm$ are well defined raises no specific issue (we use the convention $\inf \varnothing = \lbcnfbar$ in case the sets in \eqref{eq:seqcrc-lbcnf-plus} or \eqref{eq:seqcrc-lbcnf-min} are empty). Next we start by showing that $\lbbulp$ is well defined, and then prove the bounds~\eqref{eq:thm_cnf-bis} and~\eqref{eq:thm-bis}. \\[0.1cm]
    \textbf{Step 1: correct definition of $\lbbulp$.} 
    We show that the set in \eqref{eq:seqcrc-lbj} is nonempty, by distinguishing two cases. 
    If $\lbcnfm=\lbcnfbar$, then using the assumptions $\Lbul_i(\lbcnfbar,\bar{\lambda}^\bullet)=0$ and $\alpha^\bullet\geq\frac{B^\bullet}{n+1}$, we can see that the set
    \begin{equation}
        \label{eq:Lbulp-set}
        \mathcal{L}_+^\bullet = \left\{ \lbbul \in \Lbbul : \frac{n\Rbul_{n}(\lbcnfm,\lbbul)}{n+1} + \frac{B^\bullet}{n+1} \leq \alpha^\bullet\right\}
    \end{equation}
    contains $\bar{\lambda}^\bullet$ and is thus nonempty. (Recall that $\Rbul_{n}(\lbcnf,\lbbul) = \frac{1}{n}\sum_{i=1}^{n} \Lbul_i(\lbcnf, \lbbul)$.)
    
    On the other hand, if $\lbcnfm < \lbcnfbar$, then by definition of $\lbcnfm$,\footnote{In particular, when $\lbcnfm < \lbcnfbar$, the set in \eqref{eq:seqcrc-lbcnf-min} is nonempty.} by $\Rbul_n(\lbcnf,\bar{\lambda}^\bullet) \leq \Rtcnf_n(\lbcnf)$, and by right-continuity of $\Rbul_n$ in its first argument, we have
    $\frac{n}{n+1}\Rbul_n(\lbcnfm,\bar{\lambda}^\bullet)\leq \alpha^\text{cnf}$ almost surely. 
    Combining this with the assumption $\alpha^\bullet \geq \alpha^\text{cnf} + \frac{B^\bullet}{n+1}$ implies that $\bar{\lambda}^\bullet\in\mathcal{L}_+^\bullet$ as in the previous case.
    
    Therefore, in both cases, the (random) set $\mathcal{L}_+^\bullet$ is almost surely nonempty, so that $\lbbulp = \inf \mathcal{L}_+^\bullet$ is indeed well defined.
    
    \ \\
    \textbf{Step 2: proof of Eq.~\eqref{eq:thm_cnf-bis}.} Though it almost follows from \cite[Theorem 1]{Angelopoulos_2022_CRC}, we briefly recall the main arguments for the sake of completeness. 
    In this step, we make the additional assumption that $\Lcnf_i(\lbcnfbar)\leq\alpha^\mathrm{cnf}$ almost surely for all $i\in \{1,\dots,n+1\}$. 
    The proof proceeds by introducing the oracle
    \begin{equation}
    \label{eq:lbcnfo}
        \lbcnfo = \inf \left\{ \lbcnf \in \Lbcnf : \Rtcnf_{n+1}(\lbcnf)\leq \alpha^\text{cnf} \right\} \;,
    \end{equation}
    where 
    \begin{align}
    \Rtcnf_{n+1}(\lbcnf) & = \max \Big\{\Rcnf_{n+1}(\lbcnf), \Rloc_{n+1}(\lbcnf,\lblocbar),\nonumber \\
&\qquad\qquad\Rcls_{n+1}(\lbcnf,\lbclsbar)  \big\}\;.\label{eq:Rtnp1}
    \end{align}
First note that $\Rtcnf_{n+1}(\lbcnfbar) \leq \alpha^\mathrm{cnf}$ almost surely. (Indeed, by assumption, $\Rcnf_{n+1}(\lbcnfbar) = \frac{1}{n+1}\sum_{i=1}^{n+1}\Lcnf_i(\lbcnfbar)\leq\alpha^\mathrm{cnf}$ and $\Rbul_{n+1}(\lbcnfbar,\lbbulbar)=\frac{1}{n+1}\sum_{i=1}^{n+1}\Lbul_i(\lbcnfbar,\lbbulbar)=0$ a.s.) Therefore, the set in \eqref{eq:lbcnfo} is nonempty and $\lbcnfo$ is well defined.\footnote{In Step 3.a below, we will also handle cases in which this set is empty.}
Moreover, by \eqref{eq:lbcnfo}, by $\Rcnf_{n+1} \leq \Rtcnf_{n+1}$ and by right-continuity of  $\Rcnf_{n+1}$, we have
     $\Rcnf_{n+1}(\lbcnfo) \leq \alpha^\text{cnf}$ 
    almost surely. Taking the expectation and noting that $\lbcnfo$ is symmetric (i.e., invariant under any permutation of the $n+1$ calibration/test points; see Lemma \ref{lemma:symmetry}  in Appendix~\ref{sec:technicallemma} below) yields $\Ex{L_{n+1}^\mathrm{cnf}(\lbcnfo)} \leq \alphacnf$. The fact that $\lbcnfp \geq \lbcnfo$ (see details in Step 3.a below) and that $L_{n+1}^\mathrm{cnf}$ is non-increasing concludes the proof of~\eqref{eq:thm_cnf-bis}.
    \ \\
    \textbf{Step 3: proof of Eq.~\eqref{eq:thm-bis}.} The rest of the proof is novel. Though similar in spirit to  that of \cite[Theorem 1]{Angelopoulos_2022_CRC}, our multitask extension requires some additional care because $\lbbulp$ uses \emph{the same calibration data} as $\lbcnfp$. This is in contrast with, e.g., the analysis of \cite[Theorem~3 in Section~3.2]{xu2024twoStageCRC} that works for two independent calibration datasets (data-splitting trick).
    Note moreover that, contrary to Step 2, in all the sequel, we do not assume that $\Lcnf_i(\lbcnfbar)\leq\alpha^\mathrm{cnf}$ almost surely for all $i$.

    We proceed in three steps: we first compare $\lbcnfp$ and $\lbbulp$ with the oracles $\lbcnfo$ and $\lbbulo$ defined in~\eqref{eq:lbcnfo-new} and~\eqref{eq:lbbulo} below, which yields $\Ex{\Lbul_{n+1}(\lbcnfp,\lbbulp)}\leq \Ex{\Lbul_{n+1}(\lbcnfo, \lbbulo)}$. Then, in Step~3.c, we show that $\Ex{\Lbul_{n+1}(\lbcnfo, \lbbulo)}\leq\alpha^\bullet$.

    \ \\
    \textit{Step 3.a: we show that $\lbcnf_- \leq \lbcnf_* \leq \lbcnf_+$ almost surely.}
    To that end, recall that\footnote{Though not written explicitly to simplify notation, the parameter $\lbcnf$ runs over $\Lbcnf$ in each of the three sets below.}
        
    \begin{align}   
    \lbcnf_{+} & = \inf \left\{ \lbcnf : \frac{n\Rtcnf_{n}(\lbcnf)}{n+1} + \frac{\tilde{B}^\text{cnf}}{n+1} \leq \alpha^\text{cnf} \right\} \;,                  \nonumber \\[2ex] 
    \lbcnfo & = \inf \left\{ \lbcnf : \Rtcnf_{n+1}(\lbcnf) \leq \alpha^\text{cnf} \right\} \;, \label{eq:lbcnfo-new}   \\[2ex] 
    \lbcnf_{-} & = \inf \left\{ \lbcnf : \frac{n\Rtcnf_{n}(\lbcnf)}{n+1} + \frac{0}{n+1} \leq \alpha^\text{cnf} \right\} \;, \nonumber
    \end{align}   
    where $\inf\varnothing=\lbcnfbar$ by convention. (The definition of $\lbcnfo$ in \eqref{eq:lbcnfo-new} is slightly more general than in \eqref{eq:lbcnfo}, since empty sets are possible without the assumption on $\Lcnf_i(\lbcnfbar)$.) 
    
    Note that the three sets above, say, $A_+ \subset A_* \subset A_-$, are nested. (This is by definition of $\Rtcnf_{n+1}$ in \eqref{eq:Rtnp1} and because $\Lcnf_{n+1}$ and $\Lbul_{n+1}$ are bounded between $0$ and $\Tilde{B}^\mathrm{cnf}=\max \{ B^\mathrm{cnf}, B^\mathrm{loc}, B^\mathrm{cls} \}$ by Assumption~\ref{ass:loss-seq}.). Taking the infimum (with the convention $\inf\varnothing=\lbcnfbar$) yields $\lbcnf_- \leq \lbcnf_* \leq \lbcnf_+$ almost surely. 

    \ \\
    \textit{Step 3.b: we show that $\lbbulo \leq \lbbulp$ almost surely,} where the oracle $\lbbulo$ (which depends on the first-step oracle $\lbcnfo$) is defined by 
    \begin{equation}
    \label{eq:lbbulo}
        \lbbulo = \inf \left\{ \lbbul \in \Lbbul : \frac{1}{n+1}\sum_{i=1}^{n+1} \Lbul_i(\lbcnfo, \lbbul) \leq \alpha^\bullet \right\} \;.
    \end{equation}
 To see why $\lbbulo$ is bounded from above by $\lbbulp = \inf \mathcal{L}_+^\bullet$ (see \eqref{eq:seqcrc-lbj} and \eqref{eq:Lbulp-set}), recall that $\lbcnfm \leq \lbcnfo$ (Step 3.a), and that the $L_{i}^\bullet$ are non-increasing in $\lbcnf$ and bounded from above by $B^\bullet$ (by Assumption~\ref{ass:loss-seq}). Therefore, almost surely,  for all $\lbbul\in\Lbbul$,
    
    \begin{align*}
        & \frac{1}{n+1}\sum_{i=1}^{n} \Lbul_i(\lbcnfm, \lbbul) + \frac{B^\bullet}{n+1}   \\
        \geq & \frac{1}{n+1}\sum_{i=1}^{n} \Lbul_i(\lbcnfo, \lbbul) + \frac{\Lbul_{n+1}(\lbcnfo, \lbbul)}{n+1} \;.
    \end{align*}
    The set in~\eqref{eq:lbbulo} thus contains the set $\mathcal{L}_+^\bullet$ defined in~\eqref{eq:Lbulp-set}, so that $\lbbulo \leq \lbbulp$ almost surely. Importantly, the fact that $\mathcal{L}_+^\bullet$ is a.s. nonempty also entails that the set in~\eqref{eq:lbbulo} is a.s. nonempty.
    
    Putting Steps 3.a and 3.b together, and by monotonicity of $L_{n+1}^\bullet$ in both its variables, we obtain $\Lbul_{n+1}(\lbcnfp,\lbbulp)\leq \Lbul_{n+1}(\lbcnfo, \lbbulo)$ a.s., so that, taking expectations,
    \begin{equation}
        \label{eq:part1-seqrc-proof}
        \Ex{\Lbul_{n+1}(\lbcnfp,\lbbulp)}\leq \Ex{\Lbul_{n+1}(\lbcnfo, \lbbulo)} \;.
    \end{equation}

    \ \\
    \textit{Step 3.c: we show that $\Ex{\Lbul_{n+1}(\lbcnfo,\lbbulo)}\leq \alpha^\bullet$,} by using the fact that $\lbcnf_*$ and $\lbbulo$ are symmetric in the sense of Lemma \ref{lemma:symmetry}  in Appendix~\ref{sec:technicallemma} below.

More precisely, since the sequence $(L^\text{cnf}_i, L^\text{loc}_i, L^\text{cls}_i)_{i=1}^{n+1}$ is i.i.d. (by Assumption~\ref{ass:cp-independence}) and thus exchangeable, and since the oracles $\lbcnfo$ and $\lbbulo$ are symmetric (by Lemma \ref{lemma:symmetry}), we have, permuting $\Lbul_{n+1}$ with any $\Lbul_i$,
\begin{equation*}
    \begin{aligned}
        \Ex{\Lbul_{n+1}(\lbcnfo, \lbbulo)}      & =\frac{1}{n+1}\sum_{i=1}^{n+1}\Ex{\Lbul_i\left(\lbcnfo, \lbbulo\right)}\\
        & = \Ex{\frac{1}{n+1}\sum_{i=1}^{n+1}\Lbul_i\left(\lbcnfo, \lbbulo\right)} \leq \alpha^\bullet\;,
    \end{aligned}
\end{equation*}
where we used the fact that $\frac{1}{n+1}\sum_{i=1}^{n+1}\Lbul_i\left(\lbcnfo, \lbbulo\right) \leq \alpha^\bullet$ almost surely. (This follows from the fact that the set in \eqref{eq:lbbulo} is nonempty, with an infimum in $\lbbulo$, and by right-continuity of $\lbbul \mapsto \Lbul_i(\lbcnf_*, \lb^\bullet)$.)
    Combining \eqref{eq:part1-seqrc-proof} with the last inequality above concludes the proof of \eqref{eq:thm-bis}.
\end{proof}

\subsection{A simple symmetry property}
\label{sec:technicallemma}
We now state and prove a simple symmetry property that is key to obtaining Theorem~\ref{th:odcrc}. Put briefly, it means that the oracle parameters $\lbcnfo$ and $\lbbulo$, which are functions of $(L^\text{cnf}_1, L^\text{loc}_1, L^\text{cls}_1), \ldots, (L^\text{cnf}_{n+1}, L^\text{loc}_{n+1}, L^\text{cls}_{n+1})$, are invariant under any permutation of these loss $3$-tuples.  Recall that we use the placeholder $\bullet\in\{\text{loc}, \text{cls}\}$.

\begin{lemma}
    \label{lemma:symmetry}
        The oracle parameters $\lbcnfo$ and $\lbbulo$ defined in~\eqref{eq:lbcnfo-new} and~\eqref{eq:lbbulo} are symmetric functions of $(L^\text{cnf}_1, L^\text{loc}_1, L^\text{cls}_1), \ldots, (L^\text{cnf}_{n+1}, L^\text{loc}_{n+1}, L^\text{cls}_{n+1})$. 
        That is, for any permutation $\sigma$ of $\left\{1, \ldots, n+1\right\}$, we have
        
        \begin{nonumber}
        \begin{align}
            &\lbcnfo\left((L^\text{cnf}_1, L^\text{loc}_1, L^\text{cls}_1), \ldots, (L^\text{cnf}_{n+1}, L^\text{loc}_{n+1}, L^\text{cls}_{n+1}) \right) = \\
            &\lbcnfo \left(
                (L^\text{cnf}_{\sigma(1)}, L^\text{loc}_{\sigma(1)}, L^\text{cls}_{\sigma(1)}), \ldots, (L^\text{cnf}_{\sigma(n+1)}, L^\text{loc}_{\sigma(n+1)}, L^\text{cls}_{\sigma(n+1)})
            \right)
        \end{align}
        \end{nonumber}
        and
        \begin{nonumber}
        \begin{align}
            &\lbbulo\left((L^\text{cnf}_1, L^\text{loc}_1, L^\text{cls}_1), \ldots, (L^\text{cnf}_{n+1}, L^\text{loc}_{n+1}, L^\text{cls}_{n+1}) \right) = \\
            &\lbbulo \left(
                (L^\text{cnf}_{\sigma(1)}, L^\text{loc}_{\sigma(1)}, L^\text{cls}_{\sigma(1)}), \ldots, (L^\text{cnf}_{\sigma(n+1)}, L^\text{loc}_{\sigma(n+1)}, L^\text{cls}_{\sigma(n+1)})
            \right).
        \end{align}
        \end{nonumber}
\end{lemma}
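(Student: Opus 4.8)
The plan is to observe that both oracle parameters are constructed entirely from \emph{empirical averages} over the $n+1$ calibration/test points, and that any such average is manifestly invariant under a relabelling of its summation indices. Since $\lbbulo$ is defined through $\lbcnfo$, I would establish the symmetry of $\lbcnfo$ first and then feed that into the argument for $\lbbulo$.

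First I would fix an arbitrary permutation $\sigma$ of $\{1,\dots,n+1\}$ and examine the modified empirical risk $\Rtcnf_{n+1}(\lbcnf)$ appearing in \eqref{eq:lbcnfo-new}. By its definition in \eqref{eq:Rtnp1}, for each fixed $\lbcnf$ it equals the maximum of the three averages $\frac{1}{n+1}\sum_{i=1}^{n+1}\Lcnf_i(\lbcnf)$, $\frac{1}{n+1}\sum_{i=1}^{n+1}\Lloc_i(\lbcnf,\lblocbar)$ and $\frac{1}{n+1}\sum_{i=1}^{n+1}\Lcls_i(\lbcnf,\lbclsbar)$. Replacing each loss $3$-tuple $(L^\text{cnf}_i,L^\text{loc}_i,L^\text{cls}_i)$ by $(L^\text{cnf}_{\sigma(i)},L^\text{loc}_{\sigma(i)},L^\text{cls}_{\sigma(i)})$ merely reindexes each of these sums ($j=\sigma(i)$), leaving them unchanged; hence $\Rtcnf_{n+1}(\lbcnf)$ is a symmetric function of the tuples for every $\lbcnf$. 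It follows that the sublevel set $\{\lbcnf\in\Lbcnf:\Rtcnf_{n+1}(\lbcnf)\leq\alphacnf\}$ is identical before and after applying $\sigma$, and therefore so is its infimum, giving symmetry of $\lbcnfo$.

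Next I would treat $\lbbulo$, defined in \eqref{eq:lbbulo} as the infimum over $\lbbul\in\Lbbul$ of the set where $\frac{1}{n+1}\sum_{i=1}^{n+1}\Lbul_i(\lbcnfo,\lbbul)\leq\alphabul$. The only point requiring care is that the first argument $\lbcnfo$ is itself data-dependent; but having just shown $\lbcnfo$ to be invariant under $\sigma$, I may substitute its common value and treat it as a fixed constant in what follows. With $\lbcnfo$ so held fixed, the quantity $\frac{1}{n+1}\sum_{i=1}^{n+1}\Lbul_i(\lbcnfo,\lbbul)$ is again an average over all $n+1$ indices, hence unchanged under the permutation by the same reindexing argument. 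Thus the defining sublevel set and its infimum $\lbbulo$ are symmetric as well, and taking $\sigma$ arbitrary completes the proof.

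I do not expect any genuine obstacle here: the entire content reduces to the permutation-invariance of sums over all $n+1$ points, applied twice. The one subtlety worth stating explicitly is the nested dependence of $\lbbulo$ on $\lbcnfo$, which is precisely why the two symmetries must be proved \emph{in order} rather than simultaneously—the symmetry of $\lbcnfo$ is what legitimizes treating it as a fixed constant in the second step.
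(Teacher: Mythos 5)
Your proposal is correct and follows essentially the same route as the paper's own proof: symmetry of $\lbcnfo$ via permutation-invariance of the three averages composing $\Rtcnf_{n+1}$, then symmetry of $\lbbulo$ by reusing that fact together with the invariance of the sum in its defining sublevel set. You also identify the same (and only) subtlety the paper flags, namely that the data-dependent $\lbcnfo$ inside the definition of $\lbbulo$ must be handled by proving the two symmetries in order.
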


    \begin{proof}
        The proof is rather straightforward, but we provide it for the sake of completeness. We start by showing that $\lbcnfo$ is symmetric.
        Recall from \eqref{eq:lbcnfo-new} that
        \[
        \lbcnfo = \inf \left\{ \lbcnf \in \Lbcnf :\Rtcnf_{n+1}(\lbcnf)\leq \alpha^\text{cnf} \right\}
        \]
        (with the convention $\inf\varnothing=\lbcnfbar$), 
        and note from \eqref{eq:Rtnp1} that $\Rtcnf_{n+1}$ is a maximum of three terms that are invariant under any permutation of the $3$-tuples $(L^\text{cnf}_1, L^\text{loc}_1, L^\text{cls}_1), \ldots, (L^\text{cnf}_{n+1}, L^\text{loc}_{n+1}, L^\text{cls}_{n+1})$. Therefore, $\lbcnfo$ is symmetric.
        
        We now prove the symmetry of $\lbbulo$, which is only a little more subtle. Indeed, note from~\eqref{eq:lbbulo} that the definition
        \[
        \lbbulo = \inf \left\{ \lbbul \in \Lbbul : \frac{1}{n+1}\sum_{i=1}^{n+1} \Lbul_i(\lbcnfo, \lbbul) \leq \alpha^\bullet \right\}
        \]
        involves the oracle parameter $\lbcnfo$. Since both the sum operator and $\lbcnfo$ (by the first part of the proof) are permutation-invariant, so is $\lbbulo$. This concludes the proof.
    \end{proof}

\begin{figure*}[t!] 
    \centering
    \subfloat[Successful detection of bears]{\includegraphics[width=0.40\linewidth]{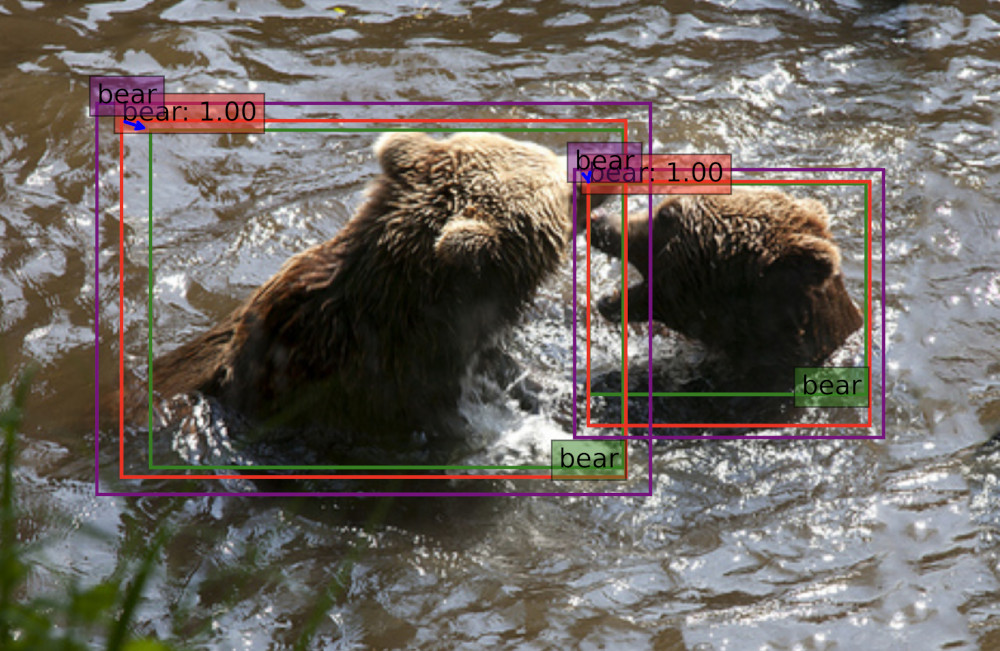}\label{fig:app-succ-subfig1}}
    \hfill 
    \subfloat[Successful detection of a clock tower]{\includegraphics[width=0.2\linewidth]{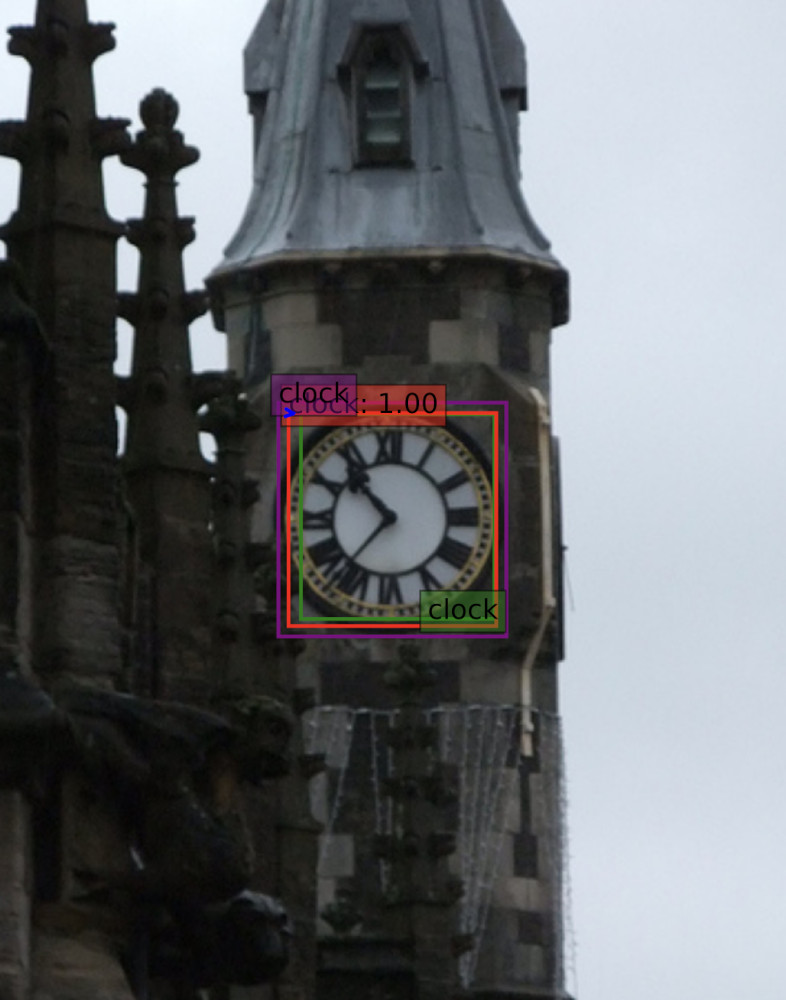}\label{fig:app-succ-subfig2}}
    \hfill 
    \subfloat[\emph{Technically} successful detection of an airplane]{\includegraphics[width=0.35\linewidth]{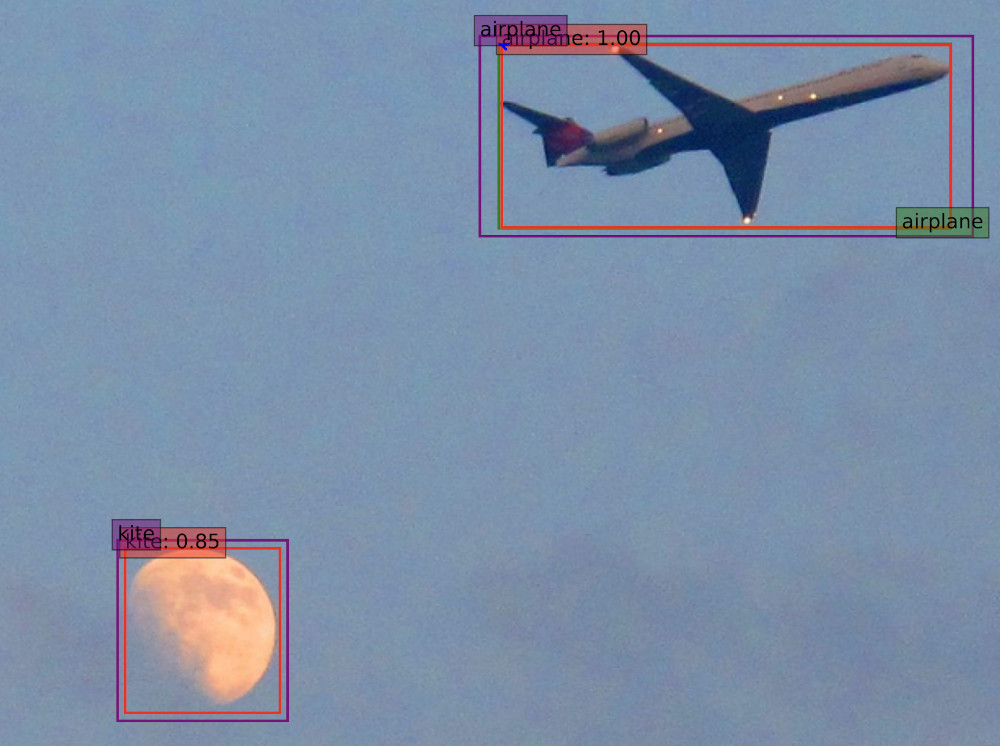}\label{fig:app-succ-subfig3}}
    \caption{Examples of two successful conformal predictions, and one \emph{technically successful}: this is a consequence of our recall-based guarantee.}
    \label{fig:app-succ}
\end{figure*}

\section{Additional Losses}
\label{app:additional-losses}
We here discuss losses not mentioned in the article, and generally not tested. They are presented for information, and potential future works when appropriate needs arise.

As discussed in details in Appendix~\ref{app:box_lvl}, many natural image-level losses are aggregations of object-level losses. While there are exceptions to that, the losses presented in this appendix are of this form. Formally, denote by $\Phi:S(\R)\mapsto\R$ an aggregation function, where $S(\R)$ correspond to the set of all the finite sequence of real values.
We introduce three main aggregation strategies.
\begin{enumerate}
    \item Average: $\Phi_\mathrm{avg}(S)=\frac{1}{\abs{S}}\sum_{i=1}^{\abs{S}}S_i$
    \item Max: $\Phi_\mathrm{max}(S)=\max_i S_i$
    \item Thresholded Average (at level $\tau$): $\Phi_\mathrm{thr}(S)=\mathbbm{1}_{\Phi_\mathrm{avg}(S)>\tau}$
\end{enumerate}

In the next sections, we therefore introduce object-level losses, than can be used with any of the aforementioned aggregation functions.

\subsection{Confidence}

The losses introduced in our work for confidence focus on the number of prediction, regardless of their location on the image. It is common for most confident boxes to be located close to true objects with well-performing predictors. We have indeed observed that accounting for the localization of the prediction at the confidence step was not necessary with our models and dataset. This may however, not be the case generally, and we therefore introduce the following losses, which account only for prediction sufficiently close to the ground truth.
These losses are presented for an image $x$, for a true box $b$, and the set of conformal bounding boxes $\Gacnf(x)$.
First, one can simply consider the inclusion of the ground truth:
\[
    \Lcnf_\mathrm{incl}(\lbcnf)=\begin{cases}
    0 &\text{ if } \exists \bhat\in\Gacnf(x),b \subseteq \bhat\\
    1 &\text{ otherwise}.
    \end{cases}
\]
This criterion is however very stringent and can result very quickly in a very low confidence threshold, i.e., numerous predictions being kept. We therefore recommend to use it only after applying a scaling or margin on the predictions $\bhat$ by a factor decided a priori (hyperparameter).

\begin{figure*}[t]
    \centering
    \includegraphics[width=\textwidth]{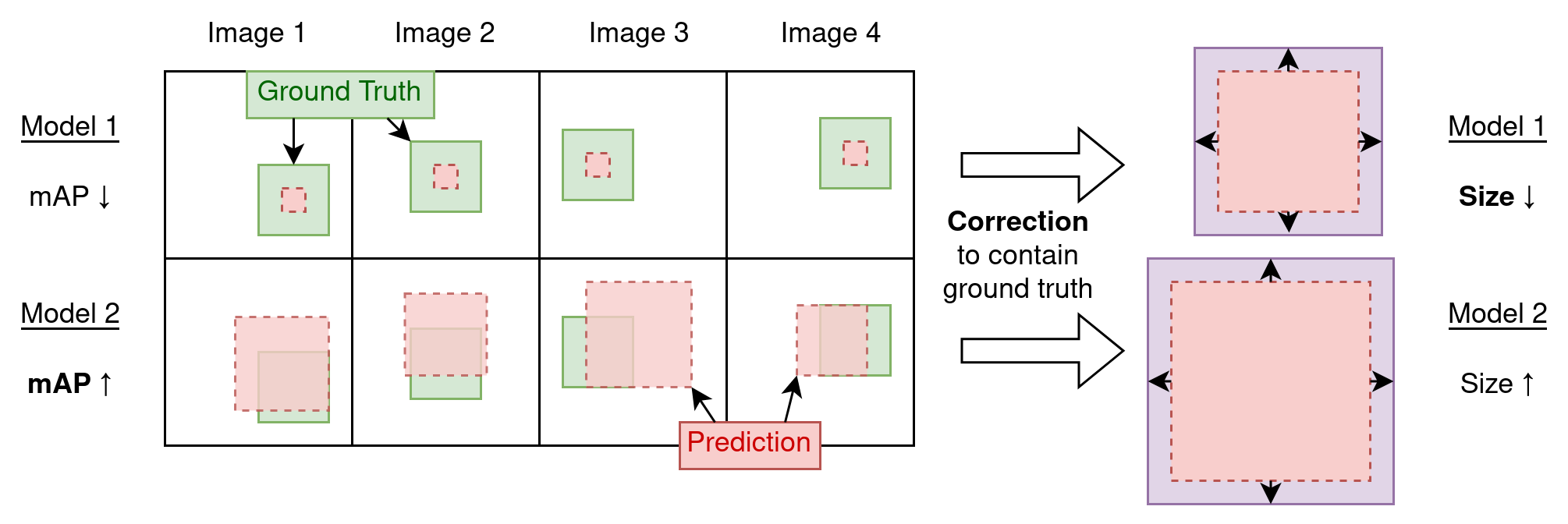}
    \caption{Higher 
    \cp performance (small correction prescribed) does not necessarily correlate to a higher mean Average Precision (mAP). 
    Considering a simple additive margin in \cp, an always-smaller but always-centered prediction performs worse in mAP, but requires significantly less conformal correction than a better mAP-performing model, which often overshoots the ground truth box, uniformly at random in each direction.}
    \label{fig:map_vs_conf}
\end{figure*}

An alternative is to replace the inclusion criterion $\subseteq$ by a distance, formalized by

\[
\Lcnf_{\mathrm{dst}}(\lbcnf) = \begin{cases}
    0 &\text{ if } \exists \bhat\in\Gacnf(x), d((b,c), (\bhat,\hat{c}))<\tau\\
    1 &\text{ otherwise},
\end{cases}
\]
where $d:(\mathcal{B}\times\mathcal{C})\times(\mathcal{B}\times\Sigma_{K-1})\rightarrow\R^{+}$ is any function measuring a distance (particularly, it does not need to be symmetric), and $\tau$ is a pre-defined threshold for a prediction to be considered sufficiently close to a ground truth.
Note that we define the loss to be $0$ when there are no ground truths on an image.
Note that the labels $c$ and $\hat{c}$ are also included, and may or not be used (such as to set arbitrarily high values to significantly different classes). 
We recommend for the sake of consistency to use the same distance as picked for matching, discussed in Section~\ref{sec:level-of-guarantee-and-matching}.

Alternatively, we can average the distance between the predictions and the ground truths. This is a non-binarized version of the loss function mentioned above.

\[
\Lcnf_{\mathrm{dist}}(\lbcnf) = 
    \min_{(\bhat,\hat{c})\in\Gacnf(x)} 
        d\left(
            ( b,c ),
            ( \bhat,\hat{c} ) 
        \right),
\]
where $d:(\mathcal{B}\times\mathcal{C})\times(\mathcal{B}\times\Sigma_{K-1})\rightarrow[0,1]$ is any \emph{normalized} distance (or divergence). Again, we recommend to use the same distance used for matching predictions and ground truths, in order to obtain more interpretable guarantees.

Nevertheless, an issue pertains: the prediction sets for confidence depend strongly on the confidence score and are not adaptive. Further work in this direction, such as building more complex, uncertainty-aware confidence scores, or combining them is warranted.

\subsection{Localization}

The prediction sets introduced in the main article are simple, but suffer from a lack of adaptivity. This is partly due to the lack of direct notion of uncertainty for predicted bounding boxes. 
A recent work, BayesOD~\cite{Harakeh_2020_bayesod} method replaces Non-Maximum Suppression by a Bayesian fusion, which results in uncertainty metrics on the coordinates of the corners of the predictions. 
Such information can be replace the width/height estimates in the multiplicative correction. We expect the resulting conformal boxes could grant better adaptivity:

\begin{equation}
    \Galoc(x)_k = \bhat_k+\begin{pmatrix}
    \hat{\sigma}^1_k & \hat{\sigma}^2_k & \hat{\sigma}^3_k & \hat{\sigma}^4_k
\end{pmatrix} \,,
\end{equation}
where the $\hat{\sigma}^i_k$ would quantify uncertainty of the $k$-th box coordinate, such as empirical standard deviations.

\subsection{Classification}

Classification is, unlike the two preceding ones, a fairly common task in conformal prediction. We therefore refer the reader to the large literature on the topic, including but not limited to \cite{Angelopoulos_2021_RAPS, luo2024trustworthy, luo2024weighted, melki2024penalized}.

\subsection{Note on Monotonicity}
\label{app:monotonicity}
While we mention several times throughout the article the necessity of the loss to be monotone in all its parameters. As mentioned in Remark \ref{rmk:nonmonotone-loc}, and introduced in ~\ref{sec:seqcrc-od-algos}, we apply a monotonization process that alter the considered losses to make them compatible with our theoretical results. Since our optimization algorithm, introduced in the next section, allows to control losses that are non-monotone by monotonizing them on the fly, any losses can be used. However, at the two extremes lie opposite results. On the one hand, when monotonicity is verified, our algorithm offers no penalty. On the other hand, when the loss is strongly non-monotone 

, the resulting prediction sets will be overly large and non-informative.

Additionally, we could in principle enforce monotonicity in $\lbcnf$ by computing a non-injective matching $\pi_x$ with a distance $d$ that mimics a given localization loss (so that adding boxes in $\Gacnf(x)$ could only decrease this loss). However, this would not fix the lack of monotonicity for classification losses (see Remark~\ref{rmk:nonmonotone-cls}), since we use the same matching $\pi_x$ for the two tasks.

\section{Types of guarantee \& Matching} 
\label{app:box_lvl}
\textbf{Types of guarantee:}

On a given image, there can a priori be any number of objects (and their associated bounding boxes). 
On the one hand, each of those individual objects may be considered as ground truth individually, i.e. $Y_i=(b_i, c_i)$. On the other hand, the ground truth can be defined as the set of all objects present on an image, i.e. $Y_i=(b_{i,j},c_{i,j})_{j=1}^{n_i}$, as introduced in this work.
Respectively, we refer to those as object-level and image-level guarantees following \cite{Andéol_2023_confident} as well as ~\cite{deGrancey_2022_object,Andéol_2023_conformal} where object-level guarantees have previously been referred to as box-level.
However, the exact setting of the latter varied, as in these previous works, conformal prediction was only applied at the object level to true-positive bounding boxes 
(i.e., ground-truth - prediction pairs, with only pairs with IoU above a certain threshold $\beta$ considered).

Defining an object, or the set of objects on an image as the ground truth $Y_i$ implies a different assumption on the data generating process, and thus a different guarantee. 
Indeed, the expectation $\Ex{L_\mathrm{test}(\lambda)}$ is on the joint distribution of the calibration set and the test pair $(X_\mathrm{test}, Y_\mathrm{test})$. 
Therefore, the guarantees hold ``on average'' over either image-object set pairs or over image-individual object pairs, depending on the previous choice.
Importantly, the difference in definition also implies a different assumption: the pairs $(X_i,Y_i)$ are assumed to be independent and identically distributed. 
While it is beyond the scope of this work to discuss the real word validity of this assumption, the image-level one is milder than the object-level one, which implicitly assumes independence between objects of a same image.

From a practical standpoint, if an image always contains exactly one object, there is no difference. 
More generally, when the image-wise loss is an aggregation of object-level losses, a connection can be made between the guarantees by simple reweighing.

The choice between object-level and image-level setting depends on practical requirements and should be reflected in the definition of the losses. 
An example is a binary loss: in the object-level setting, it entails that a proportion at least $1-\alpha$ of ground truth bounding boxes are correctly covered, while in the image-level setting, it entails that, on a proportion $1-\alpha$ of images at least, all bounding boxes (or a desired proportion) are covered.

\textbf{Matching:} 

As common in object detection~\cite{Jocher_2023_YOLO_v8, Carion_2020_detr}, a matching between the true bounding boxes and predicted ones is necessary, not only for evaluation here but also for calibration. 
We refer to by matching, a family of functions $\pi_i$ defined for each image $X_i$ that maps the index $j$ of an object on $X_i$ to the index of a predicted box among $\Gacnf(X_i)$. 
More formally, $\pi_i : \{1,\cdots,n_i\}\rightarrow \{1,\cdots,\abs{\Gacnf(X_i)}\}$ with $n_i$ the number of objects on image $X_i$ and $\Gacnf(X_i)$ the sets of bounding boxes likely to correspond to objects on image $X_i$. 
We do not make further assumption on the matching algorithm. 
Therefore it includes injective functions, i.e. each prediction being matched to at most one ground truth.
We can also consider functions where some inputs might have identic output, i.e. reusing predictions for multiple ground truth.
An example of an injective matching is the commonly used Hungarian algorithm.

Additionally, the above definition of matching fits most practical needs, but a more general approach with a one-to-many matching is also valid.
Intuitively, this setting corresponds to allowing multiple predictions to be matched to a single ground truth object. 
That general approach includes several settings, even ones that do not explicitly consider a matching, as in~\cite{Andéol_2023_confident}.
Finally, prediction sets here can be reused as-is in this general setting, and losses only have to be slightly modified in most cases.

\section{Algorithms}
\label{app:algo}

In this section, we introduce two algorithms to compute the first- and second-step parameters of our SeqCRC method.

Recall that we use a \emph{monotonization trick} to enforce monotonicity of $\Lloc$ and $\Lcls$ in $\lbcnf$. That is, we replace $\Lloc_i(\lbcnf,\lbloc)$ and $\Lcls_i(\lbcnf,\lbcls)$ with $\sup_{\lambda' \geq \lbcnf} \Lloc_i(\lambda',\lbloc)$ and $\sup_{\lambda' \geq \lbcnf} \Lcls_i(\lambda',\lbcls)$ respectively. For computational and memory efficiency purposes, these suprema are computed on the fly (i.e., while optimizing the underlying risks).

More precisely, the pseudo-code of Algorithm~\ref{alg:seqcrc-monotonizing-step1} can be used to compute $\lbcnfp$ and $\lbcnfm$ defined in \eqref{eq:seqcrc-lbcnf-plus} and \eqref{eq:seqcrc-lbcnf-min}, where $\Rloc_n(\lbcnf,\lblocbar)$ and $\Rcls_n(\lbcnf,\lbclsbar)$ (the localization and classification components of $\Rtcnf_n(\lbcnf)$, see  \eqref{eq:risk_cnf_n}) are obtained after monotonizing the localization and classification losses with respect to $\lbcnf$.\footnote{For broader applicability, in the actual code, we also monotonize the losses $\Lcnf_i(\lbcnf)$ within $\Rcnf_n(\lbcnf)$ in \eqref{eq:risk_cnf_n}, even if all $\Lcnf$ examples given in Section~\ref{sec:seqcrc-od-confidence} are already monotone.} To implement the monotonization trick, we use the key fact that the set $\Gacnf(x)$ defined in \eqref{eq:cnf-pred-set-sequence} is a piecewise-constant function of $\lbcnf$ (in a right-continuous manner), with discontinuity points given by the confidence scores $o(x)_{[k]}$ of predicted objects.

Similarly, the pseudo-code of Algorithm~\ref{alg:seqcrc-monotonizing-step2} shows how to compute $\lblocp$ and $\lbclsp$ defined in \eqref{eq:seqcrc-lbj}, where the risk $\Rbul_n(\lbcnfm, \lbbul)$ is obtained after monotonizing the localization or classification losses with respect to $\lbcnf$. More precisely, we output (slight) upper-approximations of $\lblocp$ and $\lbclsp$ using a binary search.

Both algorithms are written for $\Lbcnf = [0,1]$.

\begin{algorithm}
\caption{SeqCRC optimization for Step 1 (confidence), with monotonization trick}
\label{alg:seqcrc-monotonizing-step1}
\begin{algorithmic}[1]
\Require Loss functions $(\Lcnf_i)_{i=1}^n$, $(\Lloc_i)_{i=1}^n$ and $(\Lcls_i)_{i=1}^n$, error level $\alphacnf$, loss bound $B$, parameter bounds $\lblocbar,\lbclsbar$, confidence scores $C$ over whole dataset (list of $n$ sublists).
\Ensure Confidence parameter $\lbcnfp$ (if input $B$ equals $\Tilde{B}^\mathrm{cnf}$) or $\lbcnfm$ (if $B=0$); see \eqref{eq:seqcrc-lbcnf-plus} and \eqref{eq:seqcrc-lbcnf-min}.

\State Flatten the hierarchical list $C$ of confidence scores into a single array. Create an array $I$ of the same size, filled with the associated  image indices.
\State Sort both arrays simultaneously by increasing order of confidence scores, and denote the results by $\mathcal{C}$ and $\mathcal{I}$.
\State Shift $\mathcal{C}$'s values to the left, and replace the last (missing) element by $1$.
\State \% Next we search for the infimum $\lbcnf$ value in \eqref{eq:seqcrc-lbcnf-plus} or \eqref{eq:seqcrc-lbcnf-min}, starting from $\lbcnf=1$ downwards, while computing the losses $\Lcnf_i(\lbcnf)$ and the monotonized losses $\sup_{\lambda' \geq \lbcnf} \Lbul_{i}(\lambda', \lbbulbar)$ \emph{on the fly}.
\State $\lbcnf \gets 1$
\State Define three (monotonized) loss arrays: $\mathrm{Lcnf}$, $\mathrm{Lloc}$, $\mathrm{Lcls}$.
\For{$i = 1,\dots,n$}
    \State Compute losses with $\lbcnf$: $\mathrm{Lcnf}[i] \gets \Lcnf_{i}(\lbcnf)$, $\mathrm{Lloc}[i] \gets \Lloc_{i}(\lbcnf, \lblocbar)$, and $\mathrm{Lcls}[i] \gets \Lcls_{i}(\lbcnf, \lbclsbar)$.
\EndFor
\State Compute the three risks $R^\mathrm{cnf}=n^{-1}\sum_{i=1}^{n}\mathrm{Lcnf}[i]$, $R^\mathrm{loc}=n^{-1}\sum_{i=1}^{n}\mathrm{Lloc}[i]$ and $R^\mathrm{cls}=n^{-1}\sum_{i=1}^{n}\mathrm{Lcls}[i]$.
\State Compute the maximum risk $R=\max\{R^\mathrm{cnf},R^\mathrm{loc},R^\mathrm{cls}\}$.
\State \% We now iteratively decrease the value of $\lbcnf$.
\For{$c, i \in \mathcal{C}, \mathcal{I}$}
    \State Store previous value $\lbcnf {}^{(\mathrm{old})} \gets \lbcnf$.
    \State Set $\lbcnf \gets 1-c$.
    \State On image $i$, update confidence loss and  \emph{monotonized} localization/classification losses:\\
    $\mathrm{Lcnf}[i] \gets \Lcnf_{i}(\lbcnf)$\\
    $\mathrm{Lloc}[i] \gets \max\{\mathrm{Lloc}[i], \Lloc_{i}(\lbcnf, \lblocbar)\}$\\
    $\mathrm{Lcls}[i] \gets \max\{\mathrm{Lcls}[i], \Lcls_{i}(\lbcnf, \lbclsbar)\}$.
    \State Compute the three risks $R^\mathrm{cnf}=n^{-1}\sum_{i=1}^{n}\mathrm{Lcnf}[i]$, $R^\mathrm{loc}=n^{-1}\sum_{i=1}^{n}\mathrm{Lloc}[i]$ and $R^\mathrm{cls}=n^{-1}\sum_{i=1}^{n}\mathrm{Lcls}[i]$.
    \State Compute maximum risk $R=\max\{R^\mathrm{cnf},R^\mathrm{loc},R^\mathrm{cls}\}$.
    \State 
    \% Stopping condition:
    \If{$\frac{n}{n+1}R + \frac{B}{n+1}>\alphacnf$}\label{algo4-stopping-condition} \Return previous $\lbcnf {}^{(\mathrm{old})}$.
    \EndIf
\EndFor
\State \Return $0$ \%  if stopping condition is never satisfied
\end{algorithmic}
\end{algorithm}

\begin{algorithm}
\caption{SeqCRC optimization for Step 2 (localization/classification), with monotonization trick}
\label{alg:seqcrc-monotonizing-step2}
\begin{algorithmic}[1]
\Require Loss functions $(\Lbul_i)_{i=1}^n$, error level $\alphabul$, loss bound $B^\bullet$, confidence parameter $\lbcnfm$ from Step~1, confidence scores~$C$ over the whole dataset (list of $n$ sublists), a priori lower and upper bounds $l,u$ on $\lbbulp$, number $S$ of steps.
\Ensure Localization/classification parameter $\lbbulp$; see \eqref{eq:seqcrc-lbj}.
\State Create an empty array $\hat{\Lambda}$
\For{$k = 1,\ldots,S$}
    \State $\lbbul \gets \frac{l+u}{2}$
    \State Flatten the hierarchical list $C$ of confidence scores into a single array. Create an array $I$ of the same size, filled with the associated  image indices.
    \State Sort both arrays simultaneously by increasing order of confidence scores, and denote the results by $\mathcal{C}$ and $\mathcal{I}$.
    \State Shift $\mathcal{C}$'s values to the left, and replace the last (missing) element by $1$.
    \State \% Next we compute the monotonized losses $\sup_{\lambda' \geq \lbcnfm} \Lbul_{i}(\lambda', \lbbul)$, starting from $\lambda'=1$ downwards.
    \State Define a monotonized loss array $L$.
    \State $\lambda' \gets 1$ 
    \For{$i = 1,\dots,n$}
        \State Set $L[i] \gets \Lbul_i (\lambda', \lbbul)$
    \EndFor
    \For{$c, i \in \mathcal{C}, \mathcal{I}$}
        \State Set $\lambda' \gets 1-c$
        \State Update monotonized loss of image $i$ using confidence parameter $\lambda'$: $L[i] \gets \max \{L[i],\Lbul_{i}(\lambda', \lbbul)\}$
        \If{$\lambda' \leq \lbcnfm$}\label{algo-stopping-condition} \% Stopping condition
            \State \textbf{break}
        \EndIf
    \EndFor
    \State Compute the risk with monotonized losses:  $R \gets n^{-1}\sum_{i=1}^{n} L[i]$
    \If{$\frac{n}{n+1}R + \frac{B^\bullet}{n+1}\leq\alphabul$}
        \State Append $\lbbul$ to $\hat{\Lambda}$
        \State $u \gets \lbbul$
    \Else
        \State $l \gets \lbbul$
    \EndIf\\
    \Return the last element of $\hat{\Lambda}$ if nonempty; otherwise, raise an error
\EndFor
\end{algorithmic}
\end{algorithm}

\section{Details on Experiments}
In this section, we introduce the complete benchmark of our experiments, as well as additional experiments which include discussions on metrics in OD and conformal OD, localization margins and monotonicity.

\subsection{Complete benchmark}
\label{app:complete_benchmark}
We include the complete results of our experiments here in Tables~
\ref{tab:yolo_results_alpha_01_styled},  \ref{tab:yolo_results_alpha_02_styled},
\ref{tab:detr101_results_alpha_01_styled}, and \ref{tab:detr101_results_alpha_02_styled}. All experiments are conducted on a single NVIDIA RTX 4090. Each individual experiment takes approximately 20 minutes of runtime, most of it being taken by the monotonizing optimization algorithm.
We now introduce multiple additional visualizations of our best-performing experiment run, for additional discussions on the limitations of conformal object detection.
On Fig.~\ref{fig:app-succ}, we observe two successful detections, with singleton classification prediction sets and small localization margins. However, on the third image, the moon is predicted to be a kite. While this is incorrect due to the monotonicity constraints of \crc, we may only provide recall guarantees, i.e., ensure that we do not miss the ground truth. In that framework, additional predictions such as the moon predicted as a kite, are not penalized.
In Fig.~\ref{fig:app-fail}, we observe another instance of the previous mistake, since the boat is detected twice: once excluding its mast, and once including it. The next two images illustrate two failure cases of our guarantee, corresponding to previously discussed issues in terms of abuses of labeling. On the middle image, books are sometimes annotated individually, and sometimes as a block. On the right image, a statue is annotated as a person. The variability in the definition of ground truths leads to perceived failures, that are not necessarily actual failures. As visible on both image (b) and (c), the clock  (bottom-right) and statue (person, top) are respectively assigned a book and a clock predictions by our matching. In order to reach lower levels of error rates $\alpha$, these mistakes would have to be controlled, and would result in a need for significant margins at inference (or models that are able to learn this bias).

\subsection{Additional Experiments}
\label{app:further_experiments}
In this section, we discuss three additional experiments: one on the non-monotonicity of losses, one on the correlation between OD and CP metrics, the other on the number of margins used for conformal predicted boxes.

\subsubsection{mAP vs CP}
In the course of our exploratory experiments, we encountered an intriguing yet logically consistent phenomenon: the performance metrics commonly used in \odf do not necessarily align with the efficacy of \cpf techniques, particularly in terms of set size. 
This is exemplified in Figure \ref{fig:map_vs_conf}.

To clarify, consider two hypothetical models, Model 1 and Model 2, with Model 2 demonstrating superior performance in terms of mean Average Precision (mAP). This superiority is evidenced by its larger Intersection over Union (IoU) values with the ground truth across four sampled images, attributed in part to its tendency to predict larger bounding boxes. Notably, these models exhibit distinct error characteristics: Model 1 primarily errs in scaling, whereas Model 2's errors are predominantly cause by translations.
However, a critical insight emerges when considering the necessity to ensure that the ground truth is entirely captured within the predicted prediction's boundaries. Both models require the addition of a comparable margin to their predictions to meet this criterion. 
Consequently, due to Model 2's initially larger predictions, the resultant conformal boxes are significantly larger than those of Model 1.

In a sense, a model that fails in a consistent and predictable manner (``reliably'') can be more efficiently adjusted and at a lower ``cost'' in CP terms, as opposed to a model that, despite its higher performance metrics, fails in an unpredictable or ``random' fashion.

\begin{figure*}[t!] 
    \centering
    \subfloat[\emph{Technically} successful detection of a boat]{\includegraphics[width=0.5\linewidth]{}\label{fig:app-fail-subfig1}}
    \hfill 
    \subfloat[Unsuccessful prediction of a clock]{\includegraphics[width=0.23\linewidth]{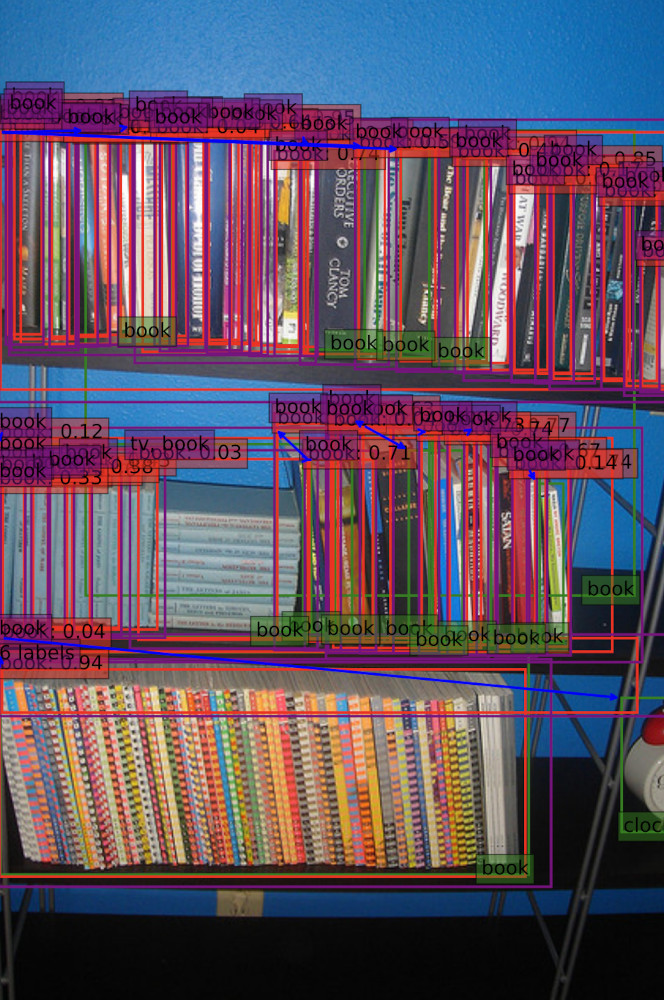}\label{fig:app-fail-subfig2}}
    \hfill 
    \subfloat[Unsuccessful prediction of a statue (person)]{\includegraphics[width=0.23\linewidth]{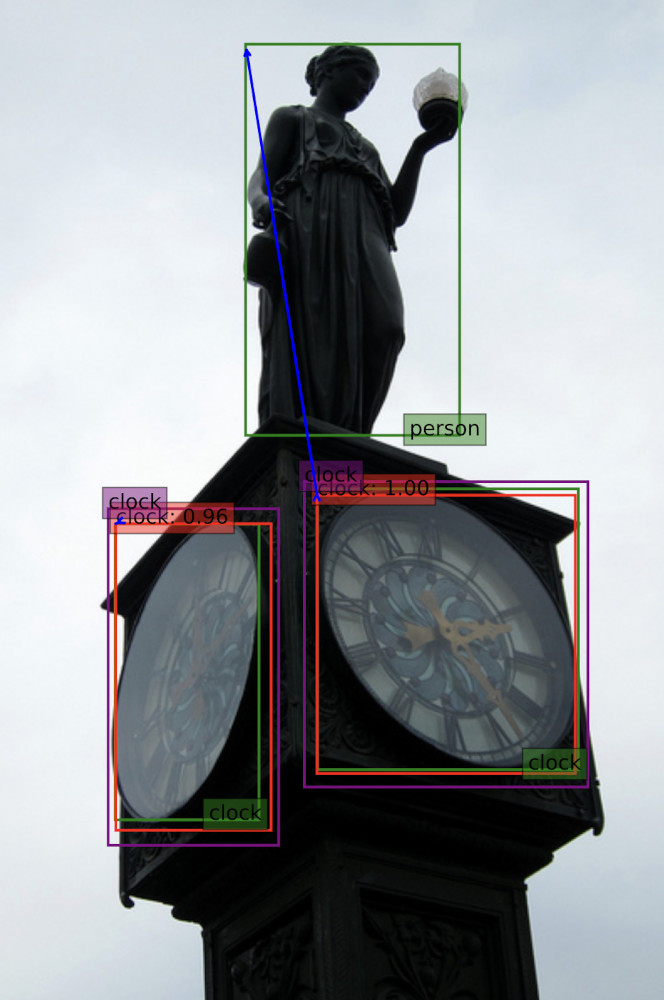}\label{fig:app-fail-subfig3}}
    \caption{Examples of one \emph{technically successful} conformal prediction (as in Fig.~\ref{fig:app-succ}) and two failures, due to a missing prediction.}
    \label{fig:app-fail}
\end{figure*}

\subsubsection{Margins}
In our research, we executed a targeted experiment to ascertain the optimal strategy for applying margins to the predicted bounding boxes in Object Detection. Our primary objective was to determine whether uniform margins should be applied across all four sides of a bounding box, whether separate margins should be used for height and width, or whether distinct margins should be applied to each side. It is important to note that while intuitively, the latter approach may seem advantageous in most scenarios, it inherently equates to conducting a multiple-hypothesis test, necessitating a correction for multiple comparisons.

\begin{table}[ht]
\centering
\small
\caption{Comparison of Margins in pixels, Coverages and Set sizes for considering \{1,2,4\} distinct values for margin corrections (with Bonferroni correction), for $\alpha=0.1$ (resp. $\alpha=0.05$, $\alpha=0.025$ with correction).}
\begin{tabular}{c@{\hskip 5pt} c@{\hskip 5pt}c@{\hskip 5pt}c@{\hskip 5pt}c@{\hskip 5pt} c@{\hskip 5pt}c@{\hskip 5pt}}
  {N. margins} 
& \multicolumn{4}{c}{Value of the Margin} 
& {Cov} 
& {Mean Set Size} \\
\cmidrule(lr){2-5}
& Left & Top & Right & Bottom & \\
\midrule
1 & 11.88 & 11.88 & 11.88 & 11.88 & 96.30\,\% & 142 \\
2 & 19.58 & 16.18 & 19.58 & 16.18 & 97.43\,\% & 145 \\
4 & 26.34 & 24.89 & 28.11 & 14.30 & 97.99\,\% & 151 \\
\end{tabular}

\label{tab:margin_exp}
\end{table}

In our analysis, we employed a Bonferroni correction to account for this multiplicity. However, the application of such a correction incurs a certain ``cost'' and may not always yield superior results. For instance, as evidenced in Table \ref{tab:margin_exp} of our study, we observed a trend wherein the more margins incorporated, the more conservative our procedure became. Consequently, in our specific case, it was more efficient to apply a single uniform margin in all directions.

It is crucial to underscore that this conclusion is not universally applicable across all datasets and models. For example, in a dataset where objects predominantly align along the same row, a more prudent approach might involve applying larger corrections horizontally and smaller vertically. This experimental approach is not resource-intensive and can be readily replicated on different datasets using our publicly available repository.

\begin{figure}[htb!] 
    \centering
    \includegraphics[width=\columnwidth]{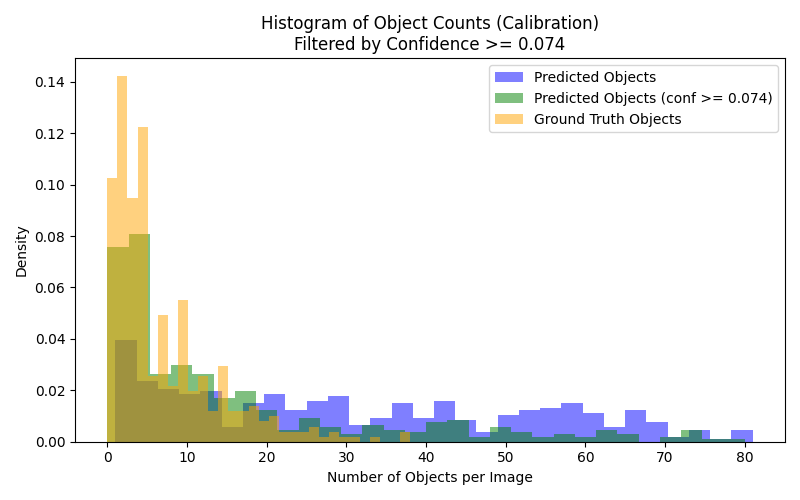}
    \caption{Histogram of the number of objects per image for Ground Truths (orange), Predictions with only pre-filtering (Blue), and Prediction after confidence threshold (with the Box Count Recall loss, $\alphacnf=0.02$ and DETR-50)}
    \label{fig:histo-set-size}
\end{figure}

\subsubsection{Distribution of Object Counts}
In this section we visualize on Fig.~\ref{fig:histo-set-size} the empirical distribution of the number of objects for the ground truths and predictions. The predictions are obtained using DETR-101 and are displayed in two variations. The  blue histogram corresponds to the number of objects only after filtering out those below $0.001$ confidence, and the green histogram using a confidence threshold obtained for the box count recall loss with $\alphacnf=0.02$. 
First, notice that the number of predicted objects can be quite large, and is heavy-tailed.
We then notice that while the obtained confidence threshold $0.074$ appears quite low, the distribution of counts seems to get closer to the ground truth distribution (plotted in orange).
Indeed, it remains conservative with regards to the ground truth object counts generally, while not being too loose in many cases. 
Additionally, while choosing the standard threshold $0.5$ leads to a distribution (not displayed here for clarity) matching the ground truth one even more, it is not guaranteed to contain, with high probability, enough objects.

\section{Effect of Object Detection and Conformal Settings}
\label{app:od-cp}

Obtaining statistically valid predictions in \odf is a multifaceted problem, dependent upon the desired level of guarantee and the specific configuration of object detectors. 
A pivotal issue concerns the ambiguity of the concept of ``ground truth''. 
This can be conceptualized as either a set of bounding boxes for an image or as individual bounding boxes. 

In this paper, we adopt the latter approach, the box-level setting, for its clarity in comparison to the set-based approach. 

Our study also touches upon the levels of guarantees in terms of confidence, localization, and classification within \od. 
Notably, classification and localization are inherently box-level concepts. 
Classification presupposes the correct identification of an object within a predicted bounding box, rendering predictions meaningless in the absence of an actual object. 
In our \cpf approach, we primarily control for the classification accuracy of the predicted bounding box nearest to the ground truth, not taking into account the classification accuracy of other predicted boxes that might have more accurately identified the object, albeit with less precise coordinates. 
Regarding confidence thresholds, formalization is challenging due to it not actually being a ground truth. In this context, we opted for an image-level guarantee, ensuring that, with a probability of at least $1-\alpha$, the number of predictions matches or exceeds the number of ground truths, irrespective of their spatial accuracy.

Lastly, regarding the $x_1y_1 x_2y_2$ (coordinates of two corners) versus $xywh$ (coordinates of the center + width and height) bounding box representations, while fundamentally equivalent, their application within \cp algorithms yields divergent results. 
The $xywh$ format inherently involves symmetric corrections and a translational adjustment, whereas the $xyxy$ format necessitates independent corrections for each side. However, these approaches can be reformulated to yield equivalent predictive sets.

\newpage
  
    \begin{table*}[htbp]
    \centering
    \tiny
    \caption{Experimental Results using the DETR-101 model for $\alphatot=0.1$}
    \label{tab:detr101_results_alpha_01_styled}
    \begin{tabular}{@{}lllll rrcrrcr@{}}
    \toprule
    \multicolumn{1}{c}{\multirow{2}{*}{\shortstack{ Matching\\Function}}} & 
    \multicolumn{1}{c}{\multirow{2}{*}{\shortstack{ Conf.\\Loss}}} & 
    \multicolumn{1}{c}{\multirow{2}{*}{\shortstack{ Loc.\\Loss}}} & 
    \multicolumn{1}{c}{\multirow{2}{*}{\shortstack{ Cls.\\Pred.\\Set}}} & 
    \multicolumn{1}{c}{\multirow{2}{*}{\shortstack{ Loc.\\Pred.\\Set.}}} & 
    \multicolumn{2}{c}{Confidence} & \multicolumn{2}{c}{Localization} & 
    \multicolumn{2}{c}{Classification} & \multicolumn{1}{c}{Global} \\
    \cmidrule(lr){6-7} \cmidrule(lr){8-9} \cmidrule(lr){10-11} \cmidrule(lr){12-12}
    & & & & & \multicolumn{1}{c}{Size} & \multicolumn{1}{c}{Risk} & 
    \multicolumn{1}{c}{Size} & \multicolumn{1}{c}{Risk} & 
    \multicolumn{1}{c}{Size} & \multicolumn{1}{c}{Risk} & 
    \multicolumn{1}{c}{Risk} \\
    \midrule
    mix & box\_count\_threshold & thresholded & lac & additive & 25.5884 & 0.022 & 4.5646 & 0.0472 & 12.7719 & 0.0449 & 0.0858 \\
mix & box\_count\_threshold & thresholded & lac & multiplicative & 25.5884 & 0.022 & 2.717 & 0.0372 & 12.7719 & 0.0449 & 0.0791 \\
mix & box\_count\_threshold & thresholded & aps & additive & 25.5884 & 0.022 & 4.5646 & 0.0416 & 17.9396 & 0.037 & 0.0727 \\
mix & box\_count\_threshold & thresholded & aps & multiplicative & 25.5884 & 0.022 & 2.2074 & 0.0404 & 11.1172 & 0.0459 & 0.0818 \\
mix & box\_count\_threshold & pixelwise & lac & additive & 25.5884 & 0.022 & 1.4788 & 0.0488 & 8.228 & 0.0464 & 0.083 \\
mix & box\_count\_threshold & pixelwise & lac & multiplicative & 25.5884 & 0.022 & 1.3337 & 0.0474 & 8.228 & 0.0464 & 0.0835 \\
mix & box\_count\_threshold & pixelwise & aps & additive & 25.5884 & 0.022 & 1.4788 & 0.0488 & 11.1172 & 0.0459 & 0.0829 \\
mix & box\_count\_threshold & pixelwise & aps & multiplicative & 25.5884 & 0.022 & 1.3337 & 0.0474 & 11.1171 & 0.0459 & 0.0835 \\
mix & box\_count\_threshold & boxwise & lac & additive & 25.5884 & 0.022 & 3.8942 & 0.0464 & 8.228 & 0.0464 & 0.0799 \\
mix & box\_count\_threshold & boxwise & lac & multiplicative & 25.5884 & 0.022 & 2.086 & 0.0468 & 8.228 & 0.0464 & 0.0838 \\
mix & box\_count\_threshold & boxwise & aps & additive & 25.5884 & 0.022 & 3.8942 & 0.0464 & 11.1172 & 0.0459 & 0.0794 \\
mix & box\_count\_threshold & boxwise & aps & multiplicative & 25.5884 & 0.022 & 2.086 & 0.0468 & 11.1171 & 0.0459 & 0.0837 \\
mix & box\_count\_recall & thresholded & lac & additive & 17.778 & 0.0185 & 4.7833 & 0.0468 & 12.0055 & 0.0471 & 0.0847 \\
mix & box\_count\_recall & thresholded & lac & multiplicative & 17.778 & 0.0185 & 2.6168 & 0.0324 & 12.0055 & 0.0471 & 0.0744 \\
mix & box\_count\_recall & thresholded & aps & additive & 17.778 & 0.0185 & 4.7833 & 0.0468 & 14.9545 & 0.0468 & 0.0844 \\
mix & box\_count\_recall & thresholded & aps & multiplicative & 17.778 & 0.0185 & 2.6168 & 0.0324 & 14.9545 & 0.0468 & 0.0743 \\
mix & box\_count\_recall & pixelwise & lac & additive & 17.778 & 0.0185 & 1.7924 & 0.0475 & 12.0055 & 0.0471 & 0.0807 \\
mix & box\_count\_recall & pixelwise & lac & multiplicative & 17.778 & 0.0185 & 1.5503 & 0.0443 & 12.0055 & 0.0471 & 0.0802 \\
mix & box\_count\_recall & pixelwise & aps & additive & 17.778 & 0.0185 & 1.7924 & 0.0475 & 14.9545 & 0.0468 & 0.0809 \\
mix & box\_count\_recall & pixelwise & aps & multiplicative & 17.778 & 0.0185 & 1.5503 & 0.0443 & 14.9545 & 0.0468 & 0.0805 \\
mix & box\_count\_recall & boxwise & lac & additive & 17.778 & 0.0185 & 4.1997 & 0.0476 & 12.0055 & 0.0471 & 0.0798 \\
mix & box\_count\_recall & boxwise & lac & multiplicative & 17.778 & 0.0185 & 2.4471 & 0.0432 & 12.0055 & 0.0471 & 0.0801 \\
mix & box\_count\_recall & boxwise & aps & additive & 17.778 & 0.0185 & 4.1997 & 0.0476 & 14.9545 & 0.0468 & 0.0799 \\
mix & box\_count\_recall & boxwise & aps & multiplicative & 17.778 & 0.0185 & 2.4471 & 0.0432 & 14.9545 & 0.0468 & 0.0804 \\
hausdorff & box\_count\_threshold & thresholded & lac & additive & 25.5884 & 0.022 & 3.1521 & 0.0492 & 42.5629 & 0.0539 & 0.1001 \\
hausdorff & box\_count\_threshold & thresholded & lac & multiplicative & 25.5884 & 0.022 & 1.5522 & 0.0456 & 42.5629 & 0.0539 & 0.0971 \\
hausdorff & box\_count\_threshold & thresholded & aps & additive & 25.5884 & 0.022 & 3.1521 & 0.0492 & 41.8463 & 0.0548 & 0.1009 \\
hausdorff & box\_count\_threshold & thresholded & aps & multiplicative & 25.5884 & 0.022 & 1.5522 & 0.0456 & 41.8461 & 0.0548 & 0.0978 \\
hausdorff & box\_count\_threshold & pixelwise & lac & additive & 25.5884 & 0.022 & 1.047 & 0.052 & 42.5629 & 0.0539 & 0.099 \\
hausdorff & box\_count\_threshold & pixelwise & lac & multiplicative & 25.5884 & 0.022 & 1.0425 & 0.0466 & 42.5629 & 0.0539 & 0.0947 \\
hausdorff & box\_count\_threshold & pixelwise & aps & additive & 25.5884 & 0.022 & 1.047 & 0.052 & 41.8463 & 0.0548 & 0.1 \\
hausdorff & box\_count\_threshold & pixelwise & aps & multiplicative & 25.5884 & 0.022 & 1.0425 & 0.0466 & 41.846 & 0.0548 & 0.0956 \\
hausdorff & box\_count\_threshold & boxwise & lac & additive & 25.5884 & 0.022 & 2.6066 & 0.0506 & 42.5629 & 0.0539 & 0.0968 \\
hausdorff & box\_count\_threshold & boxwise & lac & multiplicative & 25.5884 & 0.022 & 1.5036 & 0.049 & 42.5629 & 0.0539 & 0.0971 \\
hausdorff & box\_count\_threshold & boxwise & aps & additive & 25.5884 & 0.022 & 2.6066 & 0.0506 & 41.8463 & 0.0548 & 0.0977 \\
hausdorff & box\_count\_threshold & boxwise & aps & multiplicative & 25.5884 & 0.022 & 1.5036 & 0.049 & 41.846 & 0.0548 & 0.0981 \\
hausdorff & box\_count\_recall & thresholded & lac & additive & 17.778 & 0.0185 & 3.8291 & 0.0472 & 38.134 & 0.0515 & 0.0948 \\
hausdorff & box\_count\_recall & thresholded & lac & multiplicative & 17.778 & 0.0185 & 1.8169 & 0.0396 & 38.134 & 0.0515 & 0.0883 \\
hausdorff & box\_count\_recall & thresholded & aps & additive & 17.778 & 0.0185 & 3.8291 & 0.0472 & 37.16 & 0.0543 & 0.0978 \\
hausdorff & box\_count\_recall & thresholded & aps & multiplicative & 17.778 & 0.0185 & 1.8169 & 0.0396 & 37.1598 & 0.0543 & 0.0911 \\
hausdorff & box\_count\_recall & pixelwise & lac & additive & 17.778 & 0.0185 & 1.1577 & 0.051 & 38.134 & 0.0515 & 0.0946 \\
hausdorff & box\_count\_recall & pixelwise & lac & multiplicative & 17.778 & 0.0185 & 1.114 & 0.0506 & 38.134 & 0.0515 & 0.0953 \\
hausdorff & box\_count\_recall & pixelwise & aps & additive & 17.778 & 0.0185 & 1.1577 & 0.051 & 37.16 & 0.0543 & 0.0976 \\
hausdorff & box\_count\_recall & pixelwise & aps & multiplicative & 17.778 & 0.0185 & 1.114 & 0.0506 & 37.1598 & 0.0543 & 0.0982 \\
hausdorff & box\_count\_recall & boxwise & lac & additive & 17.778 & 0.0185 & 2.9928 & 0.0522 & 38.134 & 0.0515 & 0.0947 \\
hausdorff & box\_count\_recall & boxwise & lac & multiplicative & 17.778 & 0.0185 & 1.6957 & 0.0492 & 38.134 & 0.0515 & 0.0938 \\
hausdorff & box\_count\_recall & boxwise & aps & additive & 17.778 & 0.0185 & 2.9928 & 0.0522 & 37.16 & 0.0543 & 0.0978 \\
hausdorff & box\_count\_recall & boxwise & aps & multiplicative & 17.778 & 0.0185 & 1.6957 & 0.0492 & 37.1599 & 0.0543 & 0.0967 \\
lac & box\_count\_threshold & thresholded & lac & additive & 25.5884 & 0.022 & 21.6022 & 0.0408 & 0.9939 & 0.0508 & 0.0874 \\
lac & box\_count\_threshold & thresholded & lac & multiplicative & 25.5884 & 0.022 & 18.7336 & 0.0488 & 0.9939 & 0.0508 & 0.0942 \\
lac & box\_count\_threshold & thresholded & aps & additive & 25.5884 & 0.022 & 21.6022 & 0.0408 & 1.0066 & 0.0505 & 0.0871 \\
lac & box\_count\_threshold & thresholded & aps & multiplicative & 25.5884 & 0.022 & 18.7336 & 0.0488 & 1.0066 & 0.0505 & 0.0939 \\
lac & box\_count\_threshold & pixelwise & lac & additive & 25.5884 & 0.022 & 17.1966 & 0.0459 & 0.9939 & 0.0508 & 0.0855 \\
lac & box\_count\_threshold & pixelwise & lac & multiplicative & 25.5884 & 0.022 & 14.147 & 0.0501 & 0.9939 & 0.0508 & 0.0892 \\
lac & box\_count\_threshold & pixelwise & aps & additive & 25.5884 & 0.022 & 17.1966 & 0.0459 & 1.0066 & 0.0505 & 0.0852 \\
lac & box\_count\_threshold & pixelwise & aps & multiplicative & 25.5884 & 0.022 & 14.147 & 0.0501 & 1.0066 & 0.0505 & 0.0889 \\
lac & box\_count\_threshold & boxwise & lac & additive & 25.5884 & 0.022 & 20.4218 & 0.0431 & 0.9939 & 0.0508 & 0.082 \\
lac & box\_count\_threshold & boxwise & lac & multiplicative & 25.5884 & 0.022 & 16.2098 & 0.05 & 0.9939 & 0.0508 & 0.0878 \\
lac & box\_count\_threshold & boxwise & aps & additive & 25.5884 & 0.022 & 20.4218 & 0.0431 & 1.0066 & 0.0505 & 0.0816 \\
lac & box\_count\_threshold & boxwise & aps & multiplicative & 25.5884 & 0.022 & 16.2098 & 0.05 & 1.0066 & 0.0505 & 0.0876 \\
lac & box\_count\_recall & thresholded & lac & additive & 17.778 & 0.0185 & 20.0357 & 0.0352 & 1.2434 & 0.0482 & 0.0806 \\
lac & box\_count\_recall & thresholded & lac & multiplicative & 17.778 & 0.0185 & 20.0441 & 0.0404 & 1.2434 & 0.0482 & 0.0855 \\
lac & box\_count\_recall & thresholded & aps & additive & 17.778 & 0.0185 & 20.0357 & 0.0352 & 1.3256 & 0.0474 & 0.0797 \\
lac & box\_count\_recall & thresholded & aps & multiplicative & 17.778 & 0.0185 & 20.0441 & 0.0404 & 1.3256 & 0.0474 & 0.0846 \\
lac & box\_count\_recall & pixelwise & lac & additive & 17.778 & 0.0185 & 15.8575 & 0.0427 & 1.2434 & 0.0482 & 0.0826 \\
lac & box\_count\_recall & pixelwise & lac & multiplicative & 17.778 & 0.0185 & 15.1964 & 0.0457 & 1.2434 & 0.0482 & 0.0852 \\
lac & box\_count\_recall & pixelwise & aps & additive & 17.778 & 0.0185 & 15.8575 & 0.0427 & 1.3256 & 0.0474 & 0.082 \\
lac & box\_count\_recall & pixelwise & aps & multiplicative & 17.778 & 0.0185 & 15.1964 & 0.0457 & 1.3256 & 0.0474 & 0.0849 \\
lac & box\_count\_recall & boxwise & lac & additive & 17.778 & 0.0185 & 18.7786 & 0.0395 & 1.2434 & 0.0482 & 0.0785 \\
lac & box\_count\_recall & boxwise & lac & multiplicative & 17.778 & 0.0185 & 17.596 & 0.044 & 1.2434 & 0.0482 & 0.0822 \\
lac & box\_count\_recall & boxwise & aps & additive & 17.778 & 0.0185 & 18.7786 & 0.0395 & 1.3256 & 0.0474 & 0.078 \\
lac & box\_count\_recall & boxwise & aps & multiplicative & 17.778 & 0.0185 & 17.596 & 0.044 & 1.3256 & 0.0474 & 0.0819 \\
giou & box\_count\_threshold & thresholded & lac & additive & 25.5884 & 0.022 & 35.5329 & 0.0224 & 44.5444 & 0.0482 & 0.0697 \\
giou & box\_count\_threshold & thresholded & aps & additive & 25.5884 & 0.022 & 35.5329 & 0.0224 & 43.6363 & 0.0491 & 0.0707 \\
giou & box\_count\_threshold & pixelwise & lac & additive & 25.5884 & 0.022 & 31.4165 & 0.0359 & 44.5444 & 0.0482 & 0.0815 \\
giou & box\_count\_threshold & pixelwise & lac & multiplicative & 25.5884 & 0.022 & 158.2964 & 0.0263 & 44.5444 & 0.0482 & 0.0738 \\
giou & box\_count\_threshold & pixelwise & aps & additive & 25.5884 & 0.022 & 31.4165 & 0.0359 & 43.6364 & 0.0491 & 0.0822 \\
giou & box\_count\_threshold & pixelwise & aps & multiplicative & 25.5884 & 0.022 & 158.2964 & 0.0263 & 43.6363 & 0.0491 & 0.0748 \\
giou & box\_count\_threshold & boxwise & lac & additive & 25.5884 & 0.022 & 35.0271 & 0.0305 & 44.5444 & 0.0482 & 0.0764 \\
giou & box\_count\_threshold & boxwise & lac & multiplicative & 25.5884 & 0.022 & 183.4134 & 0.0239 & 44.5444 & 0.0482 & 0.0715 \\
giou & box\_count\_threshold & boxwise & aps & additive & 25.5884 & 0.022 & 35.0271 & 0.0305 & 43.6364 & 0.0491 & 0.0772 \\
giou & box\_count\_threshold & boxwise & aps & multiplicative & 25.5884 & 0.022 & 183.4134 & 0.0239 & 43.6363 & 0.0491 & 0.0724 \\
giou & box\_count\_recall & thresholded & lac & additive & 17.778 & 0.0185 & 31.8864 & 0.0144 & 44.6575 & 0.0458 & 0.0596 \\
giou & box\_count\_recall & thresholded & aps & additive & 17.778 & 0.0185 & 31.8864 & 0.0144 & 44.4713 & 0.0456 & 0.0593 \\
giou & box\_count\_recall & pixelwise & lac & additive & 17.778 & 0.0185 & 28.2407 & 0.0279 & 44.6575 & 0.0458 & 0.0716 \\
giou & box\_count\_recall & pixelwise & lac & multiplicative & 17.778 & 0.0185 & 165.6919 & 0.0114 & 44.6575 & 0.0458 & 0.0566 \\
giou & box\_count\_recall & pixelwise & aps & additive & 17.778 & 0.0185 & 28.2407 & 0.0279 & 44.4707 & 0.0456 & 0.0712 \\
giou & box\_count\_recall & pixelwise & aps & multiplicative & 17.778 & 0.0185 & 165.6919 & 0.0114 & 44.4713 & 0.0456 & 0.0565 \\
giou & box\_count\_recall & boxwise & lac & additive & 17.778 & 0.0185 & 31.4174 & 0.0204 & 44.6575 & 0.0458 & 0.0646 \\
giou & box\_count\_recall & boxwise & lac & multiplicative & 17.778 & 0.0185 & 192.0147 & 0.01 & 44.6575 & 0.0458 & 0.0553 \\
giou & box\_count\_recall & boxwise & aps & additive & 17.778 & 0.0185 & 31.4174 & 0.0204 & 44.4707 & 0.0456 & 0.0642 \\
giou & box\_count\_recall & boxwise & aps & multiplicative & 17.778 & 0.0185 & 192.0147 & 0.01 & 44.4713 & 0.0456 & 0.0551 \\
    \bottomrule
    \end{tabular}
    \end{table*}

    \begin{table*}[htbp]
    \centering
    \tiny
    \caption{Experimental Results using the DETR-101 model for $\alphatot=0.2$}
    \label{tab:detr101_results_alpha_02_styled}
    \begin{tabular}{@{}lllll rrcrrcr@{}}
    \toprule
    \multicolumn{1}{c}{\multirow{2}{*}{\shortstack{ Matching\\Function}}} & 
    \multicolumn{1}{c}{\multirow{2}{*}{\shortstack{ Conf.\\Loss}}} & 
    \multicolumn{1}{c}{\multirow{2}{*}{\shortstack{ Loc.\\Loss}}} & 
    \multicolumn{1}{c}{\multirow{2}{*}{\shortstack{ Cls.\\Pred.\\Set}}} & 
    \multicolumn{1}{c}{\multirow{2}{*}{\shortstack{ Loc.\\Pred.\\Set.}}} & 
    \multicolumn{2}{c}{Confidence} & \multicolumn{2}{c}{Localization} & 
    \multicolumn{2}{c}{Classification} & \multicolumn{1}{c}{Global} \\
    \cmidrule(lr){6-7} \cmidrule(lr){8-9} \cmidrule(lr){10-11} \cmidrule(lr){12-12}
    & & & & & \multicolumn{1}{c}{Size} & \multicolumn{1}{c}{Risk} & 
    \multicolumn{1}{c}{Size} & \multicolumn{1}{c}{Risk} & 
    \multicolumn{1}{c}{Size} & \multicolumn{1}{c}{Risk} & 
    \multicolumn{1}{c}{Risk} \\
    \midrule
    mix & box\_count\_threshold & thresholded & lac & additive & 22.6572 & 0.0268 & 2.5801 & 0.0976 & 0.9313 & 0.0878 & 0.1663 \\
mix & box\_count\_threshold & thresholded & lac & multiplicative & 22.6572 & 0.0268 & 1.7213 & 0.0884 & 0.9313 & 0.0878 & 0.1628 \\
mix & box\_count\_threshold & thresholded & aps & additive & 22.6572 & 0.0268 & 2.5801 & 0.0976 & 0.9936 & 0.0813 & 0.1609 \\
mix & box\_count\_threshold & thresholded & aps & multiplicative & 22.6572 & 0.0268 & 1.7213 & 0.0884 & 0.9936 & 0.0813 & 0.1582 \\
mix & box\_count\_threshold & pixelwise & lac & additive & 22.6572 & 0.0268 & 1.03 & 0.0981 & 0.9313 & 0.0878 & 0.1522 \\
mix & box\_count\_threshold & pixelwise & lac & multiplicative & 22.6572 & 0.0268 & 1.0179 & 0.0977 & 0.9313 & 0.0878 & 0.1524 \\
mix & box\_count\_threshold & pixelwise & aps & additive & 22.6572 & 0.0268 & 1.03 & 0.0981 & 0.9936 & 0.0813 & 0.149 \\
mix & box\_count\_threshold & pixelwise & aps & multiplicative & 22.6572 & 0.0268 & 1.0179 & 0.0977 & 0.9936 & 0.0813 & 0.1492 \\
mix & box\_count\_threshold & boxwise & lac & additive & 22.6572 & 0.0268 & 2.1558 & 0.0957 & 0.9313 & 0.0878 & 0.1477 \\
mix & box\_count\_threshold & boxwise & lac & multiplicative & 22.6572 & 0.0268 & 1.5515 & 0.0938 & 0.9313 & 0.0878 & 0.151 \\
mix & box\_count\_threshold & boxwise & aps & additive & 22.6572 & 0.0268 & 2.1558 & 0.0957 & 0.9936 & 0.0813 & 0.1439 \\
mix & box\_count\_threshold & boxwise & aps & multiplicative & 22.6572 & 0.0268 & 1.5515 & 0.0938 & 0.9936 & 0.0813 & 0.1477 \\
mix & box\_count\_recall & thresholded & lac & additive & 14.0464 & 0.0264 & 2.744 & 0.0988 & 1.064 & 0.0918 & 0.1675 \\
mix & box\_count\_recall & thresholded & lac & multiplicative & 14.0464 & 0.0264 & 1.9139 & 0.0768 & 1.064 & 0.0918 & 0.1538 \\
mix & box\_count\_recall & thresholded & aps & additive & 14.0464 & 0.0264 & 2.744 & 0.0988 & 1.0787 & 0.0918 & 0.1674 \\
mix & box\_count\_recall & thresholded & aps & multiplicative & 14.0464 & 0.0264 & 1.9139 & 0.0768 & 1.0787 & 0.0918 & 0.1537 \\
mix & box\_count\_recall & pixelwise & lac & additive & 14.0464 & 0.0264 & 1.0842 & 0.095 & 1.064 & 0.0918 & 0.1508 \\
mix & box\_count\_recall & pixelwise & lac & multiplicative & 14.0464 & 0.0264 & 1.0655 & 0.0967 & 1.064 & 0.0918 & 0.1538 \\
mix & box\_count\_recall & pixelwise & aps & additive & 14.0464 & 0.0264 & 1.0842 & 0.095 & 1.0787 & 0.0918 & 0.1508 \\
mix & box\_count\_recall & pixelwise & aps & multiplicative & 14.0464 & 0.0264 & 1.0655 & 0.0967 & 1.0787 & 0.0918 & 0.1538 \\
mix & box\_count\_recall & boxwise & lac & additive & 14.0464 & 0.0264 & 2.3195 & 0.0972 & 1.064 & 0.0918 & 0.1498 \\
mix & box\_count\_recall & boxwise & lac & multiplicative & 14.0464 & 0.0264 & 1.6957 & 0.0899 & 1.064 & 0.0918 & 0.1508 \\
mix & box\_count\_recall & boxwise & aps & additive & 14.0464 & 0.0264 & 2.3195 & 0.0972 & 1.0787 & 0.0918 & 0.1498 \\
mix & box\_count\_recall & boxwise & aps & multiplicative & 14.0464 & 0.0264 & 1.6957 & 0.0899 & 1.0787 & 0.0918 & 0.1509 \\
hausdorff & box\_count\_threshold & thresholded & lac & additive & 22.6572 & 0.0268 & 1.833 & 0.1024 & 25.3486 & 0.1012 & 0.19 \\
hausdorff & box\_count\_threshold & thresholded & lac & multiplicative & 22.6572 & 0.0268 & 1.3817 & 0.1112 & 25.3486 & 0.1012 & 0.2023 \\
hausdorff & box\_count\_threshold & thresholded & aps & additive & 22.6572 & 0.0268 & 1.833 & 0.1024 & 27.6101 & 0.1012 & 0.1901 \\
hausdorff & box\_count\_threshold & thresholded & aps & multiplicative & 22.6572 & 0.0268 & 1.3817 & 0.1112 & 27.6102 & 0.1012 & 0.2028 \\
hausdorff & box\_count\_threshold & pixelwise & lac & additive & 22.6572 & 0.0268 & 1.0009 & 0.0704 & 25.3486 & 0.1012 & 0.1536 \\
hausdorff & box\_count\_threshold & pixelwise & lac & multiplicative & 22.6572 & 0.0268 & 1.0179 & 0.0611 & 25.3486 & 0.1012 & 0.1463 \\
hausdorff & box\_count\_threshold & pixelwise & aps & additive & 22.6572 & 0.0268 & 1.0009 & 0.0704 & 27.6101 & 0.1012 & 0.1548 \\
hausdorff & box\_count\_threshold & pixelwise & aps & multiplicative & 22.6572 & 0.0268 & 1.0179 & 0.0611 & 27.6102 & 0.1012 & 0.1474 \\
hausdorff & box\_count\_threshold & boxwise & lac & additive & 22.6572 & 0.0268 & 1.6399 & 0.099 & 25.3486 & 0.1012 & 0.1742 \\
hausdorff & box\_count\_threshold & boxwise & lac & multiplicative & 22.6572 & 0.0268 & 1.2847 & 0.1014 & 25.3486 & 0.1012 & 0.1822 \\
hausdorff & box\_count\_threshold & boxwise & aps & additive & 22.6572 & 0.0268 & 1.6399 & 0.099 & 27.6101 & 0.1012 & 0.1752 \\
hausdorff & box\_count\_threshold & boxwise & aps & multiplicative & 22.6572 & 0.0268 & 1.2847 & 0.1014 & 27.6102 & 0.1012 & 0.1834 \\
hausdorff & box\_count\_recall & thresholded & lac & additive & 14.0464 & 0.0264 & 2.1505 & 0.1024 & 22.0349 & 0.0986 & 0.1871 \\
hausdorff & box\_count\_recall & thresholded & lac & multiplicative & 14.0464 & 0.0264 & 1.526 & 0.096 & 22.0349 & 0.0986 & 0.1852 \\
hausdorff & box\_count\_recall & thresholded & aps & additive & 14.0464 & 0.0264 & 2.1505 & 0.1024 & 23.6444 & 0.0989 & 0.1881 \\
hausdorff & box\_count\_recall & thresholded & aps & multiplicative & 14.0464 & 0.0264 & 1.526 & 0.096 & 23.6444 & 0.0989 & 0.1859 \\
hausdorff & box\_count\_recall & pixelwise & lac & additive & 14.0464 & 0.0264 & 0.999 & 0.0914 & 22.0349 & 0.0986 & 0.1677 \\
hausdorff & box\_count\_recall & pixelwise & lac & multiplicative & 14.0464 & 0.0264 & 1.017 & 0.0813 & 22.0349 & 0.0986 & 0.1597 \\
hausdorff & box\_count\_recall & pixelwise & aps & additive & 14.0464 & 0.0264 & 0.999 & 0.0914 & 23.6444 & 0.0989 & 0.169 \\
hausdorff & box\_count\_recall & pixelwise & aps & multiplicative & 14.0464 & 0.0264 & 1.017 & 0.0813 & 23.6444 & 0.0989 & 0.161 \\
hausdorff & box\_count\_recall & boxwise & lac & additive & 14.0464 & 0.0264 & 1.7532 & 0.1016 & 22.0349 & 0.0986 & 0.1739 \\
hausdorff & box\_count\_recall & boxwise & lac & multiplicative & 14.0464 & 0.0264 & 1.4049 & 0.1 & 22.0349 & 0.0986 & 0.1763 \\
hausdorff & box\_count\_recall & boxwise & aps & additive & 14.0464 & 0.0264 & 1.7532 & 0.1016 & 23.6444 & 0.0989 & 0.1748 \\
hausdorff & box\_count\_recall & boxwise & aps & multiplicative & 14.0464 & 0.0264 & 1.4049 & 0.1 & 23.6444 & 0.0989 & 0.1778 \\
lac & box\_count\_threshold & thresholded & lac & additive & 22.6572 & 0.0268 & 17.2748 & 0.0824 & 0.6533 & 0.0989 & 0.1694 \\
lac & box\_count\_threshold & thresholded & lac & multiplicative & 22.6572 & 0.0268 & 12.0794 & 0.1056 & 0.6533 & 0.0989 & 0.1872 \\
lac & box\_count\_threshold & thresholded & aps & additive & 22.6572 & 0.0268 & 17.2748 & 0.0824 & 0.9936 & 0.0548 & 0.1292 \\
lac & box\_count\_threshold & thresholded & aps & multiplicative & 22.6572 & 0.0268 & 12.0794 & 0.1056 & 0.9936 & 0.0548 & 0.1515 \\
lac & box\_count\_threshold & pixelwise & lac & additive & 22.6572 & 0.0268 & 12.277 & 0.0919 & 0.6533 & 0.0989 & 0.1594 \\
lac & box\_count\_threshold & pixelwise & lac & multiplicative & 22.6572 & 0.0268 & 7.7858 & 0.0999 & 0.6533 & 0.0989 & 0.1656 \\
lac & box\_count\_threshold & pixelwise & aps & additive & 22.6572 & 0.0268 & 12.277 & 0.0919 & 0.9936 & 0.0548 & 0.1268 \\
lac & box\_count\_threshold & pixelwise & aps & multiplicative & 22.6572 & 0.0268 & 7.7858 & 0.0999 & 0.9936 & 0.0548 & 0.1355 \\
lac & box\_count\_threshold & boxwise & lac & additive & 22.6572 & 0.0268 & 15.4127 & 0.0901 & 0.6533 & 0.0989 & 0.1565 \\
lac & box\_count\_threshold & boxwise & lac & multiplicative & 22.6572 & 0.0268 & 9.1685 & 0.0984 & 0.6533 & 0.0989 & 0.1631 \\
lac & box\_count\_threshold & boxwise & aps & additive & 22.6572 & 0.0268 & 15.4127 & 0.0901 & 0.9936 & 0.0548 & 0.1232 \\
lac & box\_count\_threshold & boxwise & aps & multiplicative & 22.6572 & 0.0268 & 9.1685 & 0.0984 & 0.9936 & 0.0548 & 0.1328 \\
lac & box\_count\_recall & thresholded & lac & additive & 14.0464 & 0.0264 & 15.1958 & 0.0796 & 0.817 & 0.1014 & 0.1689 \\
lac & box\_count\_recall & thresholded & lac & multiplicative & 14.0464 & 0.0264 & 12.9423 & 0.0956 & 0.817 & 0.1014 & 0.1805 \\
lac & box\_count\_recall & thresholded & aps & additive & 14.0464 & 0.0264 & 15.1958 & 0.0796 & 0.9928 & 0.0699 & 0.1399 \\
lac & box\_count\_recall & thresholded & aps & multiplicative & 14.0464 & 0.0264 & 12.9423 & 0.0956 & 0.9928 & 0.0699 & 0.1537 \\
lac & box\_count\_recall & pixelwise & lac & additive & 14.0464 & 0.0264 & 10.9039 & 0.0887 & 0.817 & 0.1014 & 0.1586 \\
lac & box\_count\_recall & pixelwise & lac & multiplicative & 14.0464 & 0.0264 & 8.2643 & 0.0943 & 0.817 & 0.1014 & 0.1641 \\
lac & box\_count\_recall & pixelwise & aps & additive & 14.0464 & 0.0264 & 10.9039 & 0.0887 & 0.9928 & 0.0699 & 0.1356 \\
lac & box\_count\_recall & pixelwise & aps & multiplicative & 14.0464 & 0.0264 & 8.2643 & 0.0943 & 0.9928 & 0.0699 & 0.1415 \\
lac & box\_count\_recall & boxwise & lac & additive & 14.0464 & 0.0264 & 13.6096 & 0.0852 & 0.817 & 0.1014 & 0.155 \\
lac & box\_count\_recall & boxwise & lac & multiplicative & 14.0464 & 0.0264 & 9.8398 & 0.0912 & 0.817 & 0.1014 & 0.1602 \\
lac & box\_count\_recall & boxwise & aps & additive & 14.0464 & 0.0264 & 13.6096 & 0.0852 & 0.9928 & 0.0699 & 0.1314 \\
lac & box\_count\_recall & boxwise & aps & multiplicative & 14.0464 & 0.0264 & 9.8398 & 0.0912 & 0.9928 & 0.0699 & 0.1376 \\
giou & box\_count\_threshold & thresholded & lac & additive & 22.6572 & 0.0268 & 32.7365 & 0.0524 & 34.0559 & 0.0876 & 0.1366 \\
giou & box\_count\_threshold & thresholded & lac & multiplicative & 22.6572 & 0.0268 & 147.1712 & 0.0468 & 34.0559 & 0.0876 & 0.1337 \\
giou & box\_count\_threshold & thresholded & aps & additive & 22.6572 & 0.0268 & 32.7365 & 0.0524 & 33.8632 & 0.0898 & 0.1389 \\
giou & box\_count\_threshold & thresholded & aps & multiplicative & 22.6572 & 0.0268 & 147.1712 & 0.0468 & 33.8633 & 0.0898 & 0.1359 \\
giou & box\_count\_threshold & pixelwise & lac & additive & 22.6572 & 0.0268 & 27.477 & 0.0681 & 34.0559 & 0.0876 & 0.1469 \\
giou & box\_count\_threshold & pixelwise & lac & multiplicative & 22.6572 & 0.0268 & 101.6635 & 0.0525 & 34.0559 & 0.0876 & 0.1374 \\
giou & box\_count\_threshold & pixelwise & aps & additive & 22.6572 & 0.0268 & 27.477 & 0.0681 & 33.8632 & 0.0898 & 0.1491 \\
giou & box\_count\_threshold & pixelwise & aps & multiplicative & 22.6572 & 0.0268 & 101.6635 & 0.0525 & 33.8633 & 0.0898 & 0.1394 \\
giou & box\_count\_threshold & boxwise & lac & additive & 22.6572 & 0.0268 & 31.6264 & 0.0573 & 34.0559 & 0.0876 & 0.136 \\
giou & box\_count\_threshold & boxwise & lac & multiplicative & 22.6572 & 0.0268 & 116.7034 & 0.05 & 34.0559 & 0.0876 & 0.1351 \\
giou & box\_count\_threshold & boxwise & aps & additive & 22.6572 & 0.0268 & 31.6264 & 0.0573 & 33.8632 & 0.0898 & 0.1384 \\
giou & box\_count\_threshold & boxwise & aps & multiplicative & 22.6572 & 0.0268 & 116.7034 & 0.05 & 33.8633 & 0.0898 & 0.1371 \\
giou & box\_count\_recall & thresholded & lac & additive & 14.0464 & 0.0264 & 28.1762 & 0.0312 & 32.523 & 0.0827 & 0.1122 \\
giou & box\_count\_recall & thresholded & lac & multiplicative & 14.0464 & 0.0264 & 153.597 & 0.0172 & 32.523 & 0.0827 & 0.0994 \\
giou & box\_count\_recall & thresholded & aps & additive & 14.0464 & 0.0264 & 28.1762 & 0.0312 & 32.335 & 0.0819 & 0.1113 \\
giou & box\_count\_recall & thresholded & aps & multiplicative & 14.0464 & 0.0264 & 153.597 & 0.0172 & 32.3353 & 0.0819 & 0.0986 \\
giou & box\_count\_recall & pixelwise & lac & additive & 14.0464 & 0.0264 & 23.6903 & 0.0541 & 32.523 & 0.0827 & 0.1297 \\
giou & box\_count\_recall & pixelwise & lac & multiplicative & 14.0464 & 0.0264 & 104.6357 & 0.0243 & 32.523 & 0.0827 & 0.1056 \\
giou & box\_count\_recall & pixelwise & aps & additive & 14.0464 & 0.0264 & 23.6903 & 0.0541 & 32.335 & 0.0819 & 0.1289 \\
giou & box\_count\_recall & pixelwise & aps & multiplicative & 14.0464 & 0.0264 & 104.6357 & 0.0243 & 32.3353 & 0.0819 & 0.1048 \\
giou & box\_count\_recall & boxwise & lac & additive & 14.0464 & 0.0264 & 27.2076 & 0.0418 & 32.523 & 0.0827 & 0.1187 \\
giou & box\_count\_recall & boxwise & lac & multiplicative & 14.0464 & 0.0264 & 120.4633 & 0.0215 & 32.523 & 0.0827 & 0.1029 \\
giou & box\_count\_recall & boxwise & aps & additive & 14.0464 & 0.0264 & 27.2076 & 0.0418 & 32.335 & 0.0819 & 0.1177 \\
giou & box\_count\_recall & boxwise & aps & multiplicative & 14.0464 & 0.0264 & 120.4633 & 0.0215 & 32.3353 & 0.0819 & 0.102 \\
    \bottomrule
    \end{tabular}
    \end{table*}

    \begin{table*}[htbp]
    \centering
    \tiny
    \caption{Experimental Results using the YOLOv8x model for $\alphatot=0.1$}
    \label{tab:yolo_results_alpha_01_styled}
    \begin{tabular}{@{}lllll rrcrrcr@{}}
    \toprule
    \multicolumn{1}{c}{\multirow{2}{*}{\shortstack{ Matching\\Function}}} & 
    \multicolumn{1}{c}{\multirow{2}{*}{\shortstack{ Conf.\\Loss}}} & 
    \multicolumn{1}{c}{\multirow{2}{*}{\shortstack{ Loc.\\Loss}}} & 
    \multicolumn{1}{c}{\multirow{2}{*}{\shortstack{ Cls.\\Pred.\\Set}}} & 
    \multicolumn{1}{c}{\multirow{2}{*}{\shortstack{ Loc.\\Pred.\\Set.}}} & 
    \multicolumn{2}{c}{Confidence} & \multicolumn{2}{c}{Localization} & 
    \multicolumn{2}{c}{Classification} & \multicolumn{1}{c}{Global} \\
    \cmidrule(lr){6-7} \cmidrule(lr){8-9} \cmidrule(lr){10-11} \cmidrule(lr){12-12}
    & & & & & \multicolumn{1}{c}{Size} & \multicolumn{1}{c}{Risk} & 
    \multicolumn{1}{c}{Size} & \multicolumn{1}{c}{Risk} & 
    \multicolumn{1}{c}{Size} & \multicolumn{1}{c}{Risk} & 
    \multicolumn{1}{c}{Risk} \\
    \midrule
    mix & box\_count\_threshold & thresholded & lac & additive & 18.8552 & 0.0124 & 6.2473 & 0.0464 & 69.8821 & 0.0517 & 0.0943 \\
mix & box\_count\_threshold & thresholded & lac & multiplicative & 18.8552 & 0.0124 & 6.6374 & 0.052 & 69.8821 & 0.0517 & 0.1021 \\
mix & box\_count\_threshold & thresholded & aps & additive & 18.8552 & 0.0124 & 6.2473 & 0.0464 & 40.7516 & 0.0418 & 0.0852 \\
mix & box\_count\_threshold & thresholded & aps & multiplicative & 18.8552 & 0.0124 & 6.6374 & 0.052 & 40.7516 & 0.0418 & 0.0929 \\
mix & box\_count\_threshold & pixelwise & lac & additive & 18.8552 & 0.0124 & 4.3435 & 0.0477 & 69.8823 & 0.0517 & 0.092 \\
mix & box\_count\_threshold & pixelwise & lac & multiplicative & 18.8552 & 0.0124 & 4.1814 & 0.0517 & 69.8823 & 0.0517 & 0.098 \\
mix & box\_count\_threshold & pixelwise & aps & additive & 18.8552 & 0.0124 & 4.3435 & 0.0477 & 40.7516 & 0.0418 & 0.0816 \\
mix & box\_count\_threshold & pixelwise & aps & multiplicative & 18.8552 & 0.0124 & 4.1814 & 0.0517 & 40.7516 & 0.0418 & 0.0885 \\
mix & box\_count\_threshold & boxwise & lac & additive & 18.8552 & 0.0124 & 6.1211 & 0.0432 & 69.8823 & 0.0517 & 0.0883 \\
mix & box\_count\_threshold & boxwise & lac & multiplicative & 18.8552 & 0.0124 & 5.8593 & 0.0495 & 69.8823 & 0.0517 & 0.0961 \\
mix & box\_count\_threshold & boxwise & aps & additive & 18.8552 & 0.0124 & 6.1211 & 0.0432 & 40.7516 & 0.0418 & 0.0781 \\
mix & box\_count\_threshold & boxwise & aps & multiplicative & 18.8552 & 0.0124 & 5.8593 & 0.0495 & 40.7516 & 0.0418 & 0.0869 \\
mix & box\_count\_recall & thresholded & lac & additive & 11.7104 & 0.0189 & 5.4582 & 0.0456 & 67.5809 & 0.0531 & 0.0936 \\
mix & box\_count\_recall & thresholded & lac & multiplicative & 11.7104 & 0.0189 & 6.961 & 0.0544 & 67.5809 & 0.0531 & 0.1061 \\
mix & box\_count\_recall & thresholded & aps & additive & 11.7104 & 0.0189 & 5.4582 & 0.0456 & 41.6644 & 0.0416 & 0.0835 \\
mix & box\_count\_recall & thresholded & aps & multiplicative & 11.7104 & 0.0189 & 6.961 & 0.0544 & 41.6644 & 0.0416 & 0.0948 \\
mix & box\_count\_recall & pixelwise & lac & additive & 11.7104 & 0.0189 & 3.867 & 0.0487 & 67.5809 & 0.0531 & 0.0935 \\
mix & box\_count\_recall & pixelwise & lac & multiplicative & 11.7104 & 0.0189 & 4.5352 & 0.0492 & 67.5809 & 0.0531 & 0.0976 \\
mix & box\_count\_recall & pixelwise & aps & additive & 11.7104 & 0.0189 & 3.867 & 0.0487 & 41.6644 & 0.0416 & 0.0814 \\
mix & box\_count\_recall & pixelwise & aps & multiplicative & 11.7104 & 0.0189 & 4.5352 & 0.0492 & 41.6644 & 0.0416 & 0.0859 \\
mix & box\_count\_recall & boxwise & lac & additive & 11.7104 & 0.0189 & 5.2321 & 0.0453 & 67.5809 & 0.0531 & 0.0902 \\
mix & box\_count\_recall & boxwise & lac & multiplicative & 11.7104 & 0.0189 & 6.2333 & 0.0477 & 67.5809 & 0.0531 & 0.0965 \\
mix & box\_count\_recall & boxwise & aps & additive & 11.7104 & 0.0189 & 5.2321 & 0.0453 & 41.6644 & 0.0416 & 0.0791 \\
mix & box\_count\_recall & boxwise & aps & multiplicative & 11.7104 & 0.0189 & 6.2333 & 0.0477 & 41.6644 & 0.0416 & 0.085 \\
hausdorff & box\_count\_threshold & thresholded & lac & additive & 18.8552 & 0.0124 & 6.2473 & 0.046 & 70.2243 & 0.0523 & 0.0945 \\
hausdorff & box\_count\_threshold & thresholded & lac & multiplicative & 18.8552 & 0.0124 & 7.4398 & 0.0464 & 70.2243 & 0.0523 & 0.0975 \\
hausdorff & box\_count\_threshold & thresholded & aps & additive & 18.8552 & 0.0124 & 6.2473 & 0.046 & 40.9013 & 0.0434 & 0.0864 \\
hausdorff & box\_count\_threshold & thresholded & aps & multiplicative & 18.8552 & 0.0124 & 7.4398 & 0.0464 & 40.9012 & 0.0434 & 0.089 \\
hausdorff & box\_count\_threshold & pixelwise & lac & additive & 18.8552 & 0.0124 & 4.3435 & 0.0477 & 70.2243 & 0.0523 & 0.0926 \\
hausdorff & box\_count\_threshold & pixelwise & lac & multiplicative & 18.8552 & 0.0124 & 4.6191 & 0.0473 & 70.2243 & 0.0523 & 0.0952 \\
hausdorff & box\_count\_threshold & pixelwise & aps & additive & 18.8552 & 0.0124 & 4.3435 & 0.0477 & 40.9013 & 0.0434 & 0.0829 \\
hausdorff & box\_count\_threshold & pixelwise & aps & multiplicative & 18.8552 & 0.0124 & 4.6191 & 0.0473 & 40.9013 & 0.0434 & 0.0862 \\
hausdorff & box\_count\_threshold & boxwise & lac & additive & 18.8552 & 0.0124 & 6.1211 & 0.0431 & 70.2243 & 0.0523 & 0.0886 \\
hausdorff & box\_count\_threshold & boxwise & lac & multiplicative & 18.8552 & 0.0124 & 6.5888 & 0.0452 & 70.2243 & 0.0523 & 0.0934 \\
hausdorff & box\_count\_threshold & boxwise & aps & additive & 18.8552 & 0.0124 & 6.1211 & 0.0431 & 40.9013 & 0.0434 & 0.0796 \\
hausdorff & box\_count\_threshold & boxwise & aps & multiplicative & 18.8552 & 0.0124 & 6.5888 & 0.0452 & 40.9012 & 0.0434 & 0.0846 \\
hausdorff & box\_count\_recall & thresholded & lac & additive & 11.7104 & 0.0189 & 5.4513 & 0.0468 & 67.9788 & 0.0539 & 0.0957 \\
hausdorff & box\_count\_recall & thresholded & lac & multiplicative & 11.7104 & 0.0189 & 8.0526 & 0.044 & 67.9788 & 0.0539 & 0.0967 \\
hausdorff & box\_count\_recall & thresholded & aps & additive & 11.7104 & 0.0189 & 5.4513 & 0.0468 & 41.7312 & 0.0428 & 0.0858 \\
hausdorff & box\_count\_recall & thresholded & aps & multiplicative & 11.7104 & 0.0189 & 8.0526 & 0.044 & 41.7312 & 0.0428 & 0.0857 \\
hausdorff & box\_count\_recall & pixelwise & lac & additive & 11.7104 & 0.0189 & 3.867 & 0.0487 & 67.9788 & 0.0539 & 0.0945 \\
hausdorff & box\_count\_recall & pixelwise & lac & multiplicative & 11.7104 & 0.0189 & 4.9961 & 0.0438 & 67.9788 & 0.0539 & 0.0939 \\
hausdorff & box\_count\_recall & pixelwise & aps & additive & 11.7104 & 0.0189 & 3.867 & 0.0487 & 41.7312 & 0.0428 & 0.0824 \\
hausdorff & box\_count\_recall & pixelwise & aps & multiplicative & 11.7104 & 0.0189 & 4.9961 & 0.0438 & 41.7312 & 0.0428 & 0.0821 \\
hausdorff & box\_count\_recall & boxwise & lac & additive & 11.7104 & 0.0189 & 5.2252 & 0.0457 & 67.9788 & 0.0539 & 0.0918 \\
hausdorff & box\_count\_recall & boxwise & lac & multiplicative & 11.7104 & 0.0189 & 6.9853 & 0.0416 & 67.9788 & 0.0539 & 0.0921 \\
hausdorff & box\_count\_recall & boxwise & aps & additive & 11.7104 & 0.0189 & 5.2252 & 0.0457 & 41.7312 & 0.0428 & 0.0808 \\
hausdorff & box\_count\_recall & boxwise & aps & multiplicative & 11.7104 & 0.0189 & 6.9853 & 0.0416 & 41.7312 & 0.0428 & 0.0807 \\
lac & box\_count\_threshold & thresholded & lac & additive & 18.8552 & 0.0124 & 25.0369 & 0.0504 & 0.7168 & 0.0489 & 0.0946 \\
lac & box\_count\_threshold & thresholded & lac & multiplicative & 18.8552 & 0.0124 & 21.6649 & 0.0428 & 0.7168 & 0.0489 & 0.0869 \\
lac & box\_count\_threshold & thresholded & aps & additive & 18.8552 & 0.0124 & 25.0369 & 0.0504 & 0.996 & 0.0428 & 0.0893 \\
lac & box\_count\_threshold & thresholded & aps & multiplicative & 18.8552 & 0.0124 & 21.6649 & 0.0428 & 0.996 & 0.0428 & 0.0817 \\
lac & box\_count\_threshold & pixelwise & lac & additive & 18.8552 & 0.0124 & 20.6405 & 0.0467 & 0.7168 & 0.0489 & 0.086 \\
lac & box\_count\_threshold & pixelwise & lac & multiplicative & 18.8552 & 0.0124 & 15.8047 & 0.0491 & 0.7168 & 0.0489 & 0.0853 \\
lac & box\_count\_threshold & pixelwise & aps & additive & 18.8552 & 0.0124 & 20.6405 & 0.0467 & 0.996 & 0.0428 & 0.0817 \\
lac & box\_count\_threshold & pixelwise & aps & multiplicative & 18.8552 & 0.0124 & 15.8047 & 0.0491 & 0.996 & 0.0428 & 0.0812 \\
lac & box\_count\_threshold & boxwise & lac & additive & 18.8552 & 0.0124 & 24.1811 & 0.0459 & 0.7168 & 0.0489 & 0.0828 \\
lac & box\_count\_threshold & boxwise & lac & multiplicative & 18.8552 & 0.0124 & 17.8473 & 0.0486 & 0.7168 & 0.0489 & 0.0838 \\
lac & box\_count\_threshold & boxwise & aps & additive & 18.8552 & 0.0124 & 24.1811 & 0.0459 & 0.996 & 0.0428 & 0.079 \\
lac & box\_count\_threshold & boxwise & aps & multiplicative & 18.8552 & 0.0124 & 17.8473 & 0.0486 & 0.996 & 0.0428 & 0.0797 \\
lac & box\_count\_recall & thresholded & lac & additive & 11.7104 & 0.0189 & 21.8857 & 0.0388 & 1.0285 & 0.0489 & 0.083 \\
lac & box\_count\_recall & thresholded & lac & multiplicative & 11.7104 & 0.0189 & 25.4212 & 0.0348 & 1.0285 & 0.0489 & 0.079 \\
lac & box\_count\_recall & thresholded & aps & additive & 11.7104 & 0.0189 & 21.8857 & 0.0388 & 1.3236 & 0.0478 & 0.0822 \\
lac & box\_count\_recall & thresholded & aps & multiplicative & 11.7104 & 0.0189 & 25.4212 & 0.0348 & 1.3236 & 0.0478 & 0.0786 \\
lac & box\_count\_recall & pixelwise & lac & additive & 11.7104 & 0.0189 & 17.6658 & 0.0437 & 1.0285 & 0.0489 & 0.0823 \\
lac & box\_count\_recall & pixelwise & lac & multiplicative & 11.7104 & 0.0189 & 17.7072 & 0.0426 & 1.0285 & 0.0489 & 0.0792 \\
lac & box\_count\_recall & pixelwise & aps & additive & 11.7104 & 0.0189 & 17.6658 & 0.0437 & 1.3236 & 0.0478 & 0.0822 \\
lac & box\_count\_recall & pixelwise & aps & multiplicative & 11.7104 & 0.0189 & 17.7072 & 0.0426 & 1.3236 & 0.0478 & 0.0795 \\
lac & box\_count\_recall & boxwise & lac & additive & 11.7104 & 0.0189 & 20.4723 & 0.0425 & 1.0285 & 0.0489 & 0.0797 \\
lac & box\_count\_recall & boxwise & lac & multiplicative & 11.7104 & 0.0189 & 20.036 & 0.0411 & 1.0285 & 0.0489 & 0.077 \\
lac & box\_count\_recall & boxwise & aps & additive & 11.7104 & 0.0189 & 20.4723 & 0.0425 & 1.3236 & 0.0478 & 0.0796 \\
lac & box\_count\_recall & boxwise & aps & multiplicative & 11.7104 & 0.0189 & 20.036 & 0.0411 & 1.3236 & 0.0478 & 0.0772 \\
giou & box\_count\_threshold & thresholded & lac & additive & 18.8552 & 0.0124 & 43.6338 & 0.016 & 65.9061 & 0.0401 & 0.0559 \\
giou & box\_count\_threshold & thresholded & aps & additive & 18.8552 & 0.0124 & 43.6338 & 0.016 & 65.4388 & 0.0268 & 0.0421 \\
giou & box\_count\_threshold & pixelwise & lac & additive & 18.8552 & 0.0124 & 39.4903 & 0.0258 & 65.906 & 0.0401 & 0.0655 \\
giou & box\_count\_threshold & pixelwise & lac & multiplicative & 18.8552 & 0.0124 & 171.0406 & 0.0154 & 65.906 & 0.0401 & 0.0553 \\
giou & box\_count\_threshold & pixelwise & aps & additive & 18.8552 & 0.0124 & 39.4903 & 0.0258 & 65.4388 & 0.0268 & 0.0506 \\
giou & box\_count\_threshold & pixelwise & aps & multiplicative & 18.8552 & 0.0124 & 171.0406 & 0.0154 & 65.4388 & 0.0268 & 0.0417 \\
giou & box\_count\_threshold & boxwise & lac & additive & 18.8552 & 0.0124 & 43.3738 & 0.0198 & 65.906 & 0.0401 & 0.0595 \\
giou & box\_count\_threshold & boxwise & lac & multiplicative & 18.8552 & 0.0124 & 195.5759 & 0.0136 & 65.906 & 0.0401 & 0.0534 \\
giou & box\_count\_threshold & boxwise & aps & additive & 18.8552 & 0.0124 & 43.3738 & 0.0198 & 65.4388 & 0.0268 & 0.0448 \\
giou & box\_count\_threshold & boxwise & aps & multiplicative & 18.8552 & 0.0124 & 195.5759 & 0.0136 & 65.4388 & 0.0268 & 0.0399 \\
giou & box\_count\_recall & thresholded & lac & additive & 11.7104 & 0.0189 & 35.7372 & 0.0092 & 67.048 & 0.0417 & 0.0504 \\
giou & box\_count\_recall & thresholded & aps & additive & 11.7104 & 0.0189 & 35.7372 & 0.0092 & 66.5712 & 0.024 & 0.0325 \\
giou & box\_count\_recall & pixelwise & lac & additive & 11.7104 & 0.0189 & 32.3787 & 0.0192 & 67.0478 & 0.0417 & 0.0602 \\
giou & box\_count\_recall & pixelwise & lac & multiplicative & 11.7104 & 0.0189 & 171.6958 & 0.0073 & 67.0478 & 0.0417 & 0.0486 \\
giou & box\_count\_recall & pixelwise & aps & additive & 11.7104 & 0.0189 & 32.3787 & 0.0192 & 66.5712 & 0.024 & 0.041 \\
giou & box\_count\_recall & pixelwise & aps & multiplicative & 11.7104 & 0.0189 & 171.6958 & 0.0073 & 66.5712 & 0.024 & 0.0306 \\
giou & box\_count\_recall & boxwise & lac & additive & 11.7104 & 0.0189 & 35.5256 & 0.0128 & 67.0478 & 0.0417 & 0.0538 \\
giou & box\_count\_recall & boxwise & lac & multiplicative & 11.7104 & 0.0189 & 196.7056 & 0.0067 & 67.0478 & 0.0417 & 0.048 \\
giou & box\_count\_recall & boxwise & aps & additive & 11.7104 & 0.0189 & 35.5256 & 0.0128 & 66.5712 & 0.024 & 0.0349 \\
giou & box\_count\_recall & boxwise & aps & multiplicative & 11.7104 & 0.0189 & 196.7056 & 0.0067 & 66.5712 & 0.024 & 0.0299 \\
    \bottomrule
    \end{tabular}
    \end{table*}

    \begin{table*}[htbp]
    \centering
    \tiny
    \caption{Experimental Results using the YOLOv8x model for $\alphatot=0.2$}
    \label{tab:yolo_results_alpha_02_styled}
    \begin{tabular}{@{}lllll rrcrrcr@{}}
    \toprule
    \multicolumn{1}{c}{\multirow{2}{*}{\shortstack{ Matching\\Function}}} & 
    \multicolumn{1}{c}{\multirow{2}{*}{\shortstack{ Conf.\\Loss}}} & 
    \multicolumn{1}{c}{\multirow{2}{*}{\shortstack{ Loc.\\Loss}}} & 
    \multicolumn{1}{c}{\multirow{2}{*}{\shortstack{ Cls.\\Pred.\\Set}}} & 
    \multicolumn{1}{c}{\multirow{2}{*}{\shortstack{ Loc.\\Pred.\\Set.}}} & 
    \multicolumn{2}{c}{Confidence} & \multicolumn{2}{c}{Localization} & 
    \multicolumn{2}{c}{Classification} & \multicolumn{1}{c}{Global} \\
    \cmidrule(lr){6-7} \cmidrule(lr){8-9} \cmidrule(lr){10-11} \cmidrule(lr){12-12}
    & & & & & \multicolumn{1}{c}{Size} & \multicolumn{1}{c}{Risk} & 
    \multicolumn{1}{c}{Size} & \multicolumn{1}{c}{Risk} & 
    \multicolumn{1}{c}{Size} & \multicolumn{1}{c}{Risk} & 
    \multicolumn{1}{c}{Risk} \\
    \midrule
    mix & box\_count\_threshold & thresholded & lac & additive & 15.7624 & 0.0212 & 5.6266 & 0.0928 & 57.0939 & 0.1017 & 0.1811 \\
mix & box\_count\_threshold & thresholded & lac & multiplicative & 15.7624 & 0.0212 & 4.8604 & 0.0944 & 57.0939 & 0.1017 & 0.1916 \\
mix & box\_count\_threshold & thresholded & aps & additive & 15.7624 & 0.0212 & 5.6266 & 0.0928 & 22.5433 & 0.0856 & 0.169 \\
mix & box\_count\_threshold & thresholded & aps & multiplicative & 15.7624 & 0.0212 & 4.8604 & 0.0944 & 22.5433 & 0.0856 & 0.1756 \\
mix & box\_count\_threshold & pixelwise & lac & additive & 15.7624 & 0.0212 & 3.2599 & 0.1012 & 57.0939 & 0.1017 & 0.1753 \\
mix & box\_count\_threshold & pixelwise & lac & multiplicative & 15.7624 & 0.0212 & 2.5269 & 0.1009 & 57.0939 & 0.1017 & 0.1824 \\
mix & box\_count\_threshold & pixelwise & aps & additive & 15.7624 & 0.0212 & 3.2599 & 0.1012 & 22.5433 & 0.0856 & 0.1626 \\
mix & box\_count\_threshold & pixelwise & aps & multiplicative & 15.7624 & 0.0212 & 2.5269 & 0.1009 & 22.5433 & 0.0856 & 0.1691 \\
mix & box\_count\_threshold & boxwise & lac & additive & 15.7624 & 0.0212 & 5.4937 & 0.0945 & 57.0939 & 0.1017 & 0.1717 \\
mix & box\_count\_threshold & boxwise & lac & multiplicative & 15.7624 & 0.0212 & 3.8395 & 0.0988 & 57.0939 & 0.1017 & 0.1825 \\
mix & box\_count\_threshold & boxwise & aps & additive & 15.7624 & 0.0212 & 5.4937 & 0.0945 & 22.5433 & 0.0856 & 0.1593 \\
mix & box\_count\_threshold & boxwise & aps & multiplicative & 15.7624 & 0.0212 & 3.8395 & 0.0988 & 22.5433 & 0.0856 & 0.1686 \\
mix & box\_count\_recall & thresholded & lac & additive & 10.1584 & 0.0313 & 4.7828 & 0.0896 & 54.0995 & 0.0994 & 0.1754 \\
mix & box\_count\_recall & thresholded & lac & multiplicative & 10.1584 & 0.0313 & 5.1396 & 0.0936 & 54.0995 & 0.0994 & 0.1883 \\
mix & box\_count\_recall & thresholded & aps & additive & 10.1584 & 0.0313 & 4.7828 & 0.0896 & 24.5079 & 0.0818 & 0.16 \\
mix & box\_count\_recall & thresholded & aps & multiplicative & 10.1584 & 0.0313 & 5.1396 & 0.0936 & 24.5079 & 0.0818 & 0.1695 \\
mix & box\_count\_recall & pixelwise & lac & additive & 10.1584 & 0.0313 & 2.9593 & 0.1007 & 54.0995 & 0.0994 & 0.172 \\
mix & box\_count\_recall & pixelwise & lac & multiplicative & 10.1584 & 0.0313 & 2.6906 & 0.0984 & 54.0995 & 0.0994 & 0.1787 \\
mix & box\_count\_recall & pixelwise & aps & additive & 10.1584 & 0.0313 & 2.9593 & 0.1007 & 24.5079 & 0.0818 & 0.1562 \\
mix & box\_count\_recall & pixelwise & aps & multiplicative & 10.1584 & 0.0313 & 2.6906 & 0.0984 & 24.5079 & 0.0818 & 0.1623 \\
mix & box\_count\_recall & boxwise & lac & additive & 10.1584 & 0.0313 & 4.7128 & 0.0904 & 54.0995 & 0.0994 & 0.1645 \\
mix & box\_count\_recall & boxwise & lac & multiplicative & 10.1584 & 0.0313 & 4.0727 & 0.0942 & 54.0995 & 0.0994 & 0.1767 \\
mix & box\_count\_recall & boxwise & aps & additive & 10.1584 & 0.0313 & 4.7128 & 0.0904 & 24.5079 & 0.0818 & 0.1497 \\
mix & box\_count\_recall & boxwise & aps & multiplicative & 10.1584 & 0.0313 & 4.0727 & 0.0942 & 24.5079 & 0.0818 & 0.1596 \\
hausdorff & box\_count\_threshold & thresholded & lac & additive & 15.7624 & 0.0212 & 5.6266 & 0.0908 & 58.4583 & 0.1018 & 0.1803 \\
hausdorff & box\_count\_threshold & thresholded & lac & multiplicative & 15.7624 & 0.0212 & 5.3222 & 0.0892 & 58.4583 & 0.1018 & 0.1866 \\
hausdorff & box\_count\_threshold & thresholded & aps & additive & 15.7624 & 0.0212 & 5.6266 & 0.0908 & 23.2279 & 0.0871 & 0.1684 \\
hausdorff & box\_count\_threshold & thresholded & aps & multiplicative & 15.7624 & 0.0212 & 5.3222 & 0.0892 & 23.2279 & 0.0871 & 0.1724 \\
hausdorff & box\_count\_threshold & pixelwise & lac & additive & 15.7624 & 0.0212 & 3.2677 & 0.1007 & 58.4583 & 0.1018 & 0.1752 \\
hausdorff & box\_count\_threshold & pixelwise & lac & multiplicative & 15.7624 & 0.0212 & 2.6485 & 0.0966 & 58.4583 & 0.1018 & 0.1797 \\
hausdorff & box\_count\_threshold & pixelwise & aps & additive & 15.7624 & 0.0212 & 3.2677 & 0.1007 & 23.2279 & 0.0871 & 0.1632 \\
hausdorff & box\_count\_threshold & pixelwise & aps & multiplicative & 15.7624 & 0.0212 & 2.6485 & 0.0966 & 23.2279 & 0.0871 & 0.167 \\
hausdorff & box\_count\_threshold & boxwise & lac & additive & 15.7624 & 0.0212 & 5.4858 & 0.0953 & 58.4583 & 0.1018 & 0.1724 \\
hausdorff & box\_count\_threshold & boxwise & lac & multiplicative & 15.7624 & 0.0212 & 4.1312 & 0.0918 & 58.4583 & 0.1018 & 0.1768 \\
hausdorff & box\_count\_threshold & boxwise & aps & additive & 15.7624 & 0.0212 & 5.4858 & 0.0953 & 23.2279 & 0.0871 & 0.1612 \\
hausdorff & box\_count\_threshold & boxwise & aps & multiplicative & 15.7624 & 0.0212 & 4.1312 & 0.0918 & 23.2279 & 0.0871 & 0.164 \\
hausdorff & box\_count\_recall & thresholded & lac & additive & 10.1584 & 0.0313 & 4.7828 & 0.088 & 55.4691 & 0.0992 & 0.1739 \\
hausdorff & box\_count\_recall & thresholded & lac & multiplicative & 10.1584 & 0.0313 & 5.6488 & 0.0824 & 55.4691 & 0.0992 & 0.1777 \\
hausdorff & box\_count\_recall & thresholded & aps & additive & 10.1584 & 0.0313 & 4.7828 & 0.088 & 24.8851 & 0.0835 & 0.16 \\
hausdorff & box\_count\_recall & thresholded & aps & multiplicative & 10.1584 & 0.0313 & 5.6488 & 0.0824 & 24.885 & 0.0835 & 0.1608 \\
hausdorff & box\_count\_recall & pixelwise & lac & additive & 10.1584 & 0.0313 & 2.9721 & 0.0996 & 55.4692 & 0.0992 & 0.171 \\
hausdorff & box\_count\_recall & pixelwise & lac & multiplicative & 10.1584 & 0.0313 & 2.8361 & 0.0924 & 55.4692 & 0.0992 & 0.1739 \\
hausdorff & box\_count\_recall & pixelwise & aps & additive & 10.1584 & 0.0313 & 2.9721 & 0.0996 & 24.8851 & 0.0835 & 0.1566 \\
hausdorff & box\_count\_recall & pixelwise & aps & multiplicative & 10.1584 & 0.0313 & 2.8361 & 0.0924 & 24.885 & 0.0835 & 0.1589 \\
hausdorff & box\_count\_recall & boxwise & lac & additive & 10.1584 & 0.0313 & 4.7128 & 0.0898 & 55.4691 & 0.0992 & 0.1642 \\
hausdorff & box\_count\_recall & boxwise & lac & multiplicative & 10.1584 & 0.0313 & 4.3879 & 0.0845 & 55.4691 & 0.0992 & 0.1685 \\
hausdorff & box\_count\_recall & boxwise & aps & additive & 10.1584 & 0.0313 & 4.7128 & 0.0898 & 24.8851 & 0.0835 & 0.1505 \\
hausdorff & box\_count\_recall & boxwise & aps & multiplicative & 10.1584 & 0.0313 & 4.3879 & 0.0845 & 24.885 & 0.0835 & 0.1532 \\
lac & box\_count\_threshold & thresholded & lac & additive & 15.7624 & 0.0212 & 19.6105 & 0.09 & 0.5361 & 0.106 & 0.1837 \\
lac & box\_count\_threshold & thresholded & lac & multiplicative & 15.7624 & 0.0212 & 13.3677 & 0.1008 & 0.5361 & 0.106 & 0.1881 \\
lac & box\_count\_threshold & thresholded & aps & additive & 15.7624 & 0.0212 & 19.6105 & 0.09 & 0.9956 & 0.0461 & 0.1288 \\
lac & box\_count\_threshold & thresholded & aps & multiplicative & 15.7624 & 0.0212 & 13.3677 & 0.1008 & 0.9956 & 0.0461 & 0.1378 \\
lac & box\_count\_threshold & pixelwise & lac & additive & 15.7624 & 0.0212 & 14.1831 & 0.0913 & 0.5361 & 0.106 & 0.1686 \\
lac & box\_count\_threshold & pixelwise & lac & multiplicative & 15.7624 & 0.0212 & 8.3605 & 0.1051 & 0.5361 & 0.106 & 0.174 \\
lac & box\_count\_threshold & pixelwise & aps & additive & 15.7624 & 0.0212 & 14.1831 & 0.0913 & 0.9956 & 0.0461 & 0.1218 \\
lac & box\_count\_threshold & pixelwise & aps & multiplicative & 15.7624 & 0.0212 & 8.3605 & 0.1051 & 0.9956 & 0.0461 & 0.133 \\
lac & box\_count\_threshold & boxwise & lac & additive & 15.7624 & 0.0212 & 17.3806 & 0.0919 & 0.5361 & 0.106 & 0.1667 \\
lac & box\_count\_threshold & boxwise & lac & multiplicative & 15.7624 & 0.0212 & 9.7946 & 0.1068 & 0.5361 & 0.106 & 0.1728 \\
lac & box\_count\_threshold & boxwise & aps & additive & 15.7624 & 0.0212 & 17.3806 & 0.0919 & 0.9956 & 0.0461 & 0.1203 \\
lac & box\_count\_threshold & boxwise & aps & multiplicative & 15.7624 & 0.0212 & 9.7946 & 0.1068 & 0.9956 & 0.0461 & 0.1324 \\
lac & box\_count\_recall & thresholded & lac & additive & 10.1584 & 0.0313 & 16.6423 & 0.0844 & 0.7161 & 0.106 & 0.1788 \\
lac & box\_count\_recall & thresholded & lac & multiplicative & 10.1584 & 0.0313 & 14.4994 & 0.0904 & 0.7161 & 0.106 & 0.1794 \\
lac & box\_count\_recall & thresholded & aps & additive & 10.1584 & 0.0313 & 16.6423 & 0.0844 & 0.9932 & 0.0611 & 0.1369 \\
lac & box\_count\_recall & thresholded & aps & multiplicative & 10.1584 & 0.0313 & 14.4994 & 0.0904 & 0.9932 & 0.0611 & 0.1414 \\
lac & box\_count\_recall & pixelwise & lac & additive & 10.1584 & 0.0313 & 12.236 & 0.0859 & 0.7161 & 0.106 & 0.1631 \\
lac & box\_count\_recall & pixelwise & lac & multiplicative & 10.1584 & 0.0313 & 8.9708 & 0.0974 & 0.7161 & 0.106 & 0.1675 \\
lac & box\_count\_recall & pixelwise & aps & additive & 10.1584 & 0.0313 & 12.236 & 0.0859 & 0.9932 & 0.0611 & 0.1263 \\
lac & box\_count\_recall & pixelwise & aps & multiplicative & 10.1584 & 0.0313 & 8.9708 & 0.0974 & 0.9932 & 0.0611 & 0.1357 \\
lac & box\_count\_recall & boxwise & lac & additive & 10.1584 & 0.0313 & 14.9561 & 0.0845 & 0.7161 & 0.106 & 0.1608 \\
lac & box\_count\_recall & boxwise & lac & multiplicative & 10.1584 & 0.0313 & 10.5469 & 0.0966 & 0.7161 & 0.106 & 0.1659 \\
lac & box\_count\_recall & boxwise & aps & additive & 10.1584 & 0.0313 & 14.9561 & 0.0845 & 0.9932 & 0.0611 & 0.1238 \\
lac & box\_count\_recall & boxwise & aps & multiplicative & 10.1584 & 0.0313 & 10.5469 & 0.0966 & 0.9932 & 0.0611 & 0.1338 \\
giou & box\_count\_threshold & thresholded & lac & additive & 15.7624 & 0.0212 & 39.4871 & 0.0348 & 57.1519 & 0.0891 & 0.123 \\
giou & box\_count\_threshold & thresholded & lac & multiplicative & 15.7624 & 0.0212 & 179.9897 & 0.0196 & 57.1519 & 0.0891 & 0.1084 \\
giou & box\_count\_threshold & thresholded & aps & additive & 15.7624 & 0.0212 & 39.4871 & 0.0348 & 57.7448 & 0.0459 & 0.0792 \\
giou & box\_count\_threshold & thresholded & aps & multiplicative & 15.7624 & 0.0212 & 179.9897 & 0.0196 & 57.7448 & 0.0459 & 0.0651 \\
giou & box\_count\_threshold & pixelwise & lac & additive & 15.7624 & 0.0212 & 34.0626 & 0.0495 & 57.1519 & 0.0891 & 0.1346 \\
giou & box\_count\_threshold & pixelwise & lac & multiplicative & 15.7624 & 0.0212 & 140.1268 & 0.0222 & 57.1519 & 0.0891 & 0.11 \\
giou & box\_count\_threshold & pixelwise & aps & additive & 15.7624 & 0.0212 & 34.0626 & 0.0495 & 57.7448 & 0.0459 & 0.0896 \\
giou & box\_count\_threshold & pixelwise & aps & multiplicative & 15.7624 & 0.0212 & 140.1268 & 0.0222 & 57.7448 & 0.0459 & 0.0667 \\
giou & box\_count\_threshold & boxwise & lac & additive & 15.7624 & 0.0212 & 38.7389 & 0.0397 & 57.1519 & 0.0891 & 0.1257 \\
giou & box\_count\_threshold & boxwise & lac & multiplicative & 15.7624 & 0.0212 & 161.3222 & 0.0192 & 57.1519 & 0.0891 & 0.1072 \\
giou & box\_count\_threshold & boxwise & aps & additive & 15.7624 & 0.0212 & 38.7389 & 0.0397 & 57.7448 & 0.0459 & 0.0803 \\
giou & box\_count\_threshold & boxwise & aps & multiplicative & 15.7624 & 0.0212 & 161.3222 & 0.0192 & 57.7448 & 0.0459 & 0.0639 \\
giou & box\_count\_recall & thresholded & lac & additive & 10.1584 & 0.0313 & 32.3011 & 0.022 & 58.219 & 0.0858 & 0.1066 \\
giou & box\_count\_recall & thresholded & lac & multiplicative & 10.1584 & 0.0313 & 179.9438 & 0.0108 & 58.219 & 0.0858 & 0.096 \\
giou & box\_count\_recall & thresholded & aps & additive & 10.1584 & 0.0313 & 32.3011 & 0.022 & 58.5988 & 0.0432 & 0.064 \\
giou & box\_count\_recall & thresholded & aps & multiplicative & 10.1584 & 0.0313 & 179.9438 & 0.0108 & 58.5988 & 0.0432 & 0.0534 \\
giou & box\_count\_recall & pixelwise & lac & additive & 10.1584 & 0.0313 & 27.9418 & 0.0375 & 58.219 & 0.0858 & 0.121 \\
giou & box\_count\_recall & pixelwise & lac & multiplicative & 10.1584 & 0.0313 & 140.4195 & 0.0123 & 58.219 & 0.0858 & 0.0974 \\
giou & box\_count\_recall & pixelwise & aps & additive & 10.1584 & 0.0313 & 27.9418 & 0.0375 & 58.5988 & 0.0432 & 0.0759 \\
giou & box\_count\_recall & pixelwise & aps & multiplicative & 10.1584 & 0.0313 & 140.4195 & 0.0123 & 58.5988 & 0.0432 & 0.0546 \\
giou & box\_count\_recall & boxwise & lac & additive & 10.1584 & 0.0313 & 31.7182 & 0.0274 & 58.2189 & 0.0858 & 0.1108 \\
giou & box\_count\_recall & boxwise & lac & multiplicative & 10.1584 & 0.0313 & 162.1215 & 0.0113 & 58.2189 & 0.0858 & 0.0964 \\
giou & box\_count\_recall & boxwise & aps & additive & 10.1584 & 0.0313 & 31.7182 & 0.0274 & 58.5988 & 0.0432 & 0.0665 \\
giou & box\_count\_recall & boxwise & aps & multiplicative & 10.1584 & 0.0313 & 162.1215 & 0.0113 & 58.5988 & 0.0432 & 0.0536 \\
    \bottomrule
    \end{tabular}
    \end{table*}
     
\end{document}